\documentclass{article} 
\usepackage{iclr2027_conference,times}

\usepackage{amsmath,amsfonts,bm}

\def\eqref#1{equation~\ref{#1}}

\def\plaineqref#1{\ref{#1}}

\def\1{\bm{1}}

\DeclareMathAlphabet{\mathsfit}{\encodingdefault}{\sfdefault}{m}{sl}
\SetMathAlphabet{\mathsfit}{bold}{\encodingdefault}{\sfdefault}{bx}{n}

\usepackage{hyperref}
\usepackage{url}

\usepackage{amsmath}
\usepackage{amssymb}
\usepackage{amsfonts}
\usepackage{amsthm}
\newtheorem{proposition}{Proposition}

\def\eqref#1{Eq.~\ref{#1}}

\usepackage{textcomp,needspace,booktabs,graphicx}
\usepackage{tcolorbox}
\tcbuselibrary{breakable,listings}

\usepackage{subcaption}
\usepackage{wrapfig}
\usepackage{multirow}

\usepackage{enumitem}

\usepackage{wrapfig}

\usepackage{titletoc}

\usepackage[table]{xcolor}
\definecolor{accent}{RGB}{166,25,85}
\hypersetup{
  colorlinks = true,
  citecolor  = accent,
  linkcolor  = accent,
  urlcolor   = accent,
  pdfauthor  = {},
  pdftitle   = {},
}

\usepackage{array}
\newcolumntype{Z}{>{\centering\arraybackslash}m{4em}}

\newcommand{\aff}[1]{\textsuperscript{\normalfont#1}}

\title{Retrospective Distillation Attribution \\ via Normalized Response Similarity}

\author{Minwoo Jang\aff{1}, Jaechang Kim\aff{2}, Minhyeon Oh\aff{3}, Jeongyeon Hwang\aff{1}, Jungseul Ok\aff{1,3}\thanks{Corresponding Author} \\
\aff{1}Graduate School of Artificial Intelligence, POSTECH, South Korea \\
\aff{2}POSTECH Institute of Artificial Intelligence, South Korea \\
\aff{3}Department of Computer Science and Engineering, POSTECH, South Korea \\
\texttt{\{minwoo,jungseul\}@postech.ac.kr} \\
}

\iclrfinalcopy 
\begin{document}

\setcounter{footnote}{1}
\maketitle

\begin{abstract}
\label{sec:abstract}
Model distillation transfers capabilities through supervised fine-tuning (SFT) on teacher responses, often collected from commercial APIs, raising questions of model provenance. Existing distillation attribution methods have been largely evaluated on students immediately after the SFT step. However, a distilled model may undergo further SFT, preference optimization, or reinforcement learning before release, while an auditor may lack access to the pre-distillation checkpoint required by reference-based attribution. To close this gap, we propose SCOUT, an output-only method that aggregates recurring \emph{syntactic patterns} into candidate profiles, filters low-contrast patterns, and calibrates student--candidate distances against inter-candidate distances. SCOUT supports attribution and abstention using only current texts, without model weights, token likelihoods, or historical checkpoints. Auditing publicly released descendants of distilled models spanning diverse post-training objectives, SCOUT consistently identifies the distillation source. Furthermore, tracing teacher-associated \emph{syntactic signatures} along training trajectories reveals that they emerge during distillation and persist through subsequent preference optimization and reinforcement learning.

\end{abstract}

\section{Introduction}
\label{sec:intro}
Model distillation~\citep{hsieh-etal-2023-distilling} transfers capabilities through supervised fine-tuning (SFT) of a student on teacher responses. For example, responses from GPT-4o and GPT-4o-mini~\citep{openai2024gpt4ocard} were used for fine-tuning OLMo 2~\citep{walsh2025} and SARDI-Dream-7B~\citep{junger2026selfaugmenting}, respectively. However, providers now prohibit using such outputs to train competing models \citep{anthropic2025commercialterms,openai2026terms} and have reported attempts to collect them for that purpose \citep{anthropic2026distillationattacks,openai2026adversarialdistillation}. Given that released artifacts alone do not reveal whether a student was trained on such data, recent work on distillation attribution~\citep{wadhwa-etal-2025-taught,rawat2026referencebaseddistillationdetectionllms} seeks to identify which \emph{teacher candidate} generated the responses used to train the student. For attribution to serve as credible provenance evidence, an audit must rely only on externally available information and clarify the evidentiary scope of positive and negative verdicts.

Existing methods either compare token likelihoods under the student and a pre-distillation checkpoint~\citep{rawat2026referencebaseddistillationdetectionllms} or train a classifier on Part-of-Speech (PoS) n-grams~\citep{wadhwa-etal-2025-taught}. In this paper, we leverage this \emph{syntactic signal} by aggregating patterns that recur across each candidate's responses and filtering those that no candidate uses distinctively. Even after filtering, a candidate may appear close to a student simply because it sits close to other candidates' text in general, or because its distances vary widely enough that such closeness is unremarkable. To separate unusual proximity from these tendencies, we additionally use a candidate-pool calibration that evaluates each student--candidate distance relative to distances between that candidate and other candidates. We call the resulting method SCOUT (\textbf{\underline{S}}yntactic \textbf{\underline{C}}alibration for \textbf{\underline{O}}utput-only \textbf{\underline{U}}nreferenced \textbf{\underline{T}}argets), which only uses current text outputs from the student and teacher candidates, requiring no classifier, model weights, token likelihoods, or historical checkpoints.

The need for such an output-only method becomes particularly clear in the \emph{retrospective} setting we introduce, where distillation attribution is performed after the audited model has been released and may have undergone further training phases such as SFT, preference optimization, or reinforcement learning, possibly by a third party. In this setting, subsequent training may alter the observable teacher-associated signature. Moreover, a pre-distillation checkpoint may be unavailable, and mistakenly using a post-distillation checkpoint can mask the distillation effect. An external auditor may also observe only current text outputs. Consequently, we ask whether student and candidate outputs retain sufficient evidence for distillation attribution and abstention. Our contributions are as follows:

\begin{figure}[t]
  \centering
  \includegraphics[width=\linewidth]{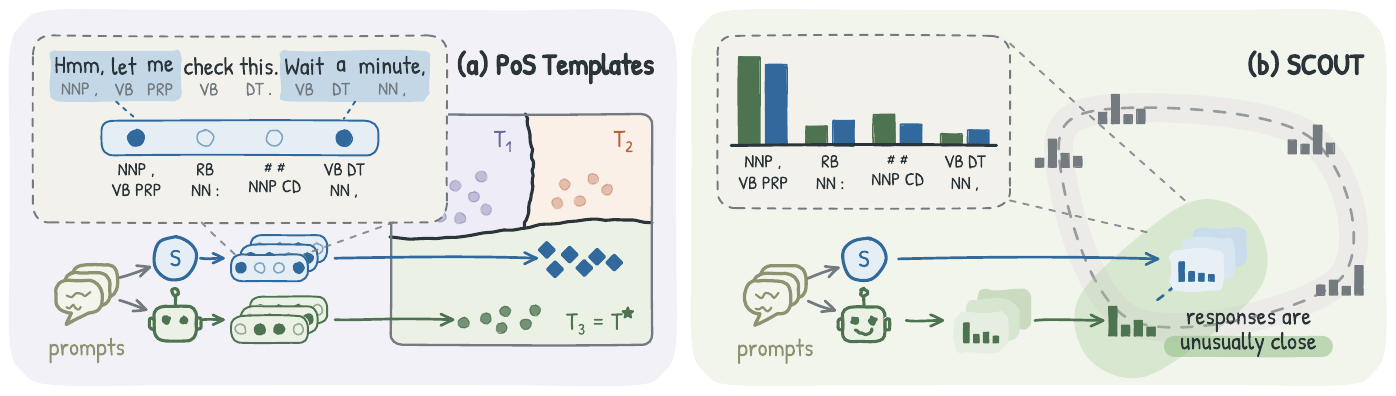}
  \caption{\textbf{How PoS Templates and SCOUT use syntactic evidence for distillation attribution.}
  \textbf{(a)} PoS Templates learns a classifier from syntactic patterns in individual candidate responses and classifies each student's responses.
  \textbf{(b)} SCOUT aggregates recurring syntactic patterns across each teacher candidate's responses into a \emph{profile} and compares each student response with these profiles.}
  \label{fig:overview}
  \vspace{-0.5em}
\end{figure}

\noindent\textbf{Method}:
We propose SCOUT, an output-only distillation attribution method that aggregates patterns across prompts into candidate profiles, filters low-contrast patterns, and calibrates student--candidate distances against the pool to identify an unusually close candidate. Figure~\ref{fig:overview} contrasts SCOUT with PoS Templates~\citep{wadhwa-etal-2025-taught}.

\noindent\textbf{Retrospective Evaluation}:
We evaluate distillation attribution on public descendants of models distilled from DeepSeek-R1~\citep{deepseekai2025r1} and subsequently trained by different groups through additional SFT, preference optimization, or reinforcement learning. SCOUT selects the documented teacher for all evaluated descendants within the candidate pool using current outputs alone. 

\noindent\textbf{Findings}:
Using SCOUT as an analytical instrument, we observe that syntactic teacher signatures emerge during SFT and remain readable through preference optimization and reinforcement learning across public training trajectories and released descendants spanning dense and mixture-of-experts architectures. Moreover, the signature emerges and persists in a diffusion language model lineage \citep{ye2025dream7bdiffusionlarge}, suggesting that the phenomenon is not confined to autoregressive generation.

\vspace{-0.5em}

\section{Related Work}
\label{sec:related}
\vspace{-0.5em}

\paragraph{Distillation Attribution and Model Provenance}
PoS Templates~\citep{wadhwa-etal-2025-taught} trains a classifier on syntactic patterns and uses it to assign each student response to one of the teacher candidates, but has no native rejection rule. DistillDetect~\citep{rawat2026referencebaseddistillationdetectionllms} ranks candidates by likelihood shifts from an earlier student checkpoint. DLI~\citep{zeng2026distillation} verifies a hypothesized teacher, but its text-only setting still requires shadow distillation and auditor training. Separate model-provenance work tests parameter ancestry among fine-tuned or quantized descendants~\citep{foley-etal-2023-matching,yax2025phylolm,nikolic2025model,Shao_2026,hu-etal-2026-fingerprinting}, rather than which teacher generated the responses used for SFT. SCOUT addresses teacher attribution after subsequent training using only current student and candidate texts. It adapts reference-population normalization from speaker verification, stylometry, and membership inference~\citep{rosenberg92_icslp,10.1093/llc/17.3.267,watson2022on} and abstains when the evidence distinguishes no candidate.

\vspace{-0.5em}

\paragraph{Proactive Model Protection}
Proactive defenses monitor query streams~\citep{8806737}, modify models through active fingerprints~\citep{xu-etal-2024-instructional}, watermark generations~\citep{pmlr-v202-kirchenbauer23a,NEURIPS2024_2567c95f,sander2026textseallocalizedllmwatermark}, or alter sampling, interaction behavior, and reasoning traces to impede distillation~\citep{savani2025antidistillation,yang2026askingbackinteractionlayerantidistillation,ma-etal-2026-protecting}. Weight-release defenses transform checkpoints to resist unauthorized merging~\citep{Junhao_2025_ICCV,wang2025modelunmergingmakingmodels,jang2026making}. These mechanisms can require monitoring infrastructure, additional training, controlled decoding, or intervention in served responses. They may add serving overhead or introduce utility trade-offs. Moreover, they must be deployed while the owner controls the model or API. They cannot protect outputs collected without these safeguards or address suspicions that arise only after a third-party model is released. In such cases, post-hoc auditing becomes necessary. SCOUT addresses this complementary need without modifying the audited or candidate models.

\vspace{-0.5em}

\section{Problem Formulation}
\label{sec:problem}
\vspace{-0.5em}

\paragraph{Training Trajectory}
For any model \(M\) and prompt \(x\), let \(M(x)\) be the response produced by \(M\) for \(x\). Let \(T^\star\) denote the \emph{teacher model} whose outputs constitute the data source for model distillation, and let \(S^{(0)}\) denote the student checkpoint immediately before the distillation step. For a set of training prompts \(\mathcal{Q}\), the supervision dataset is
\(
\mathcal{D}^\star
=
\left\{
\bigl(q,T^\star(q)\bigr)
\;\middle|\;
q\in\mathcal{Q}
\right\}.
\)
SFT on \(\mathcal{D}^\star\) produces \(S^{(1)}\), after which the student may undergo further training:
\begin{equation}
S^{(0)}
\;\xrightarrow{\operatorname{SFT} \text{ on } \mathcal{D}^\star}\;
S^{(1)}
\;\xrightarrow{\mathcal{P}_1}\;
\cdots
\;\xrightarrow{\mathcal{P}_{\ell-1}}\;
S^{(\ell)}
\;\xrightarrow{\mathcal{P}_{\ell}}\;
S^{(\ell+1)}
\;\xrightarrow{\mathcal{P}_{\ell+1}}\;
\cdots
\;\xrightarrow{\mathcal{P}_{L-1}}\;
S^{(L)}.
\label{eq:training-trajectory}
\end{equation}
Here, each \(\mathcal{P}_\ell\) denotes the \(\ell\)-th downstream training stage, such as additional SFT, preference optimization, or reinforcement learning. We write \(S:=S^{(L)}\) for the released model available to the auditor. If no training follows distillation, intermediate terms are omitted, i.e., \(L=1\) and \(S=S^{(1)}\). Retrospective distillation attribution targets \(T^\star\), the teacher model for the step from \(S^{(0)}\) to \(S^{(1)}\).

\vspace{-0.5em}

\paragraph{Auditor Observations}
Let \(\mathcal{T}=\{T_k\}_{k=1}^{K}\) be the teacher candidate pool, which may or may not contain \(T^\star\). Given audit prompts \(\mathcal{X}=\{x_i\}_{i=1}^{N}\), the auditor queries \(S\) and every \(T_k\in\mathcal{T}\) on each prompt \(x_i\). The auditor has access only to the text responses and cannot assume access to model weights, token likelihoods, the supervision data \(\mathcal{D}^\star\), or any earlier checkpoint \(S^{(\ell)}\) with \(0\leq\ell<L\).

\vspace{-0.5em}

\paragraph{Attribution and Abstention} The auditor compares responses from \(S\) and each \(T_k\in\mathcal{T}\) on shared audit prompts \(\mathcal{X}\), returning a verdict \(\widehat{T}\in\mathcal{T}\cup\{\bot\}\), where \(\bot\) denotes abstention. Correctness requires \(\widehat{T}=T^\star\) when \(T^\star\in\mathcal{T}\), and \(\widehat{T}=\bot\) when \(T^\star\notin\mathcal{T}\). Exact recovery is the objective, but resolution depends on the pool: successive releases or models connected by distillation may produce outputs too similar for reliable exact attribution. Moreover, a verdict of \(\bot\) means only that current evidence supports no candidate, not that none supplied responses at an earlier training stage.

\vspace{-0.5em}

\section{Proposed Method}
\label{sec:methodology}
\begin{figure}[t]
  \centering
  \includegraphics[width=\linewidth]{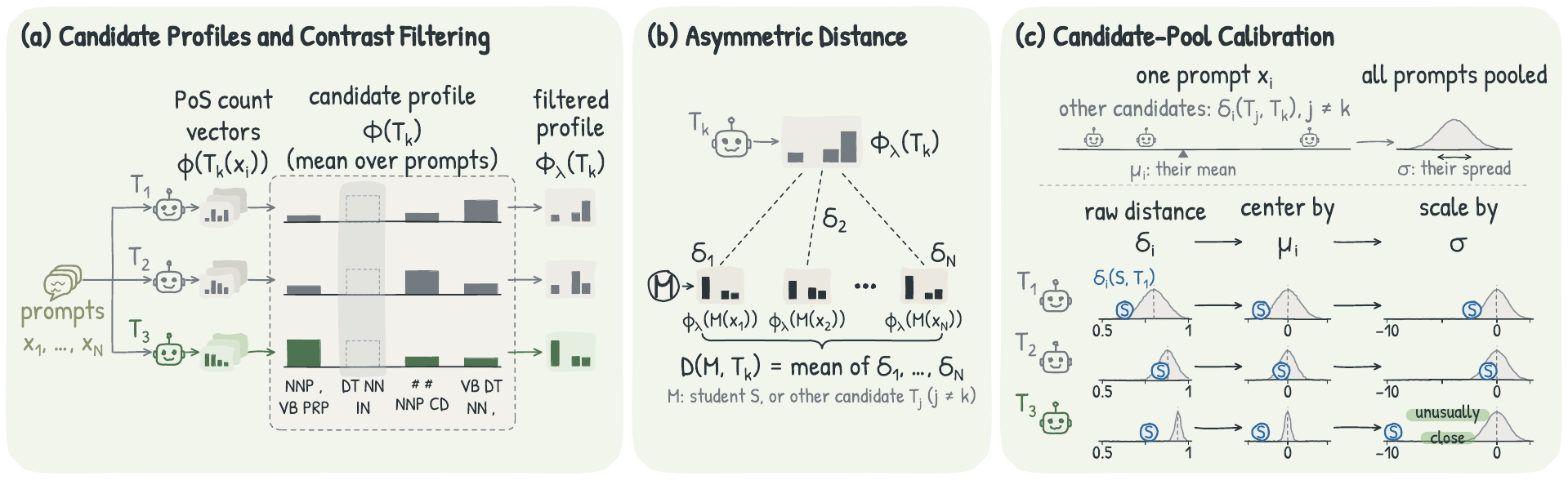}
  \caption{\textbf{Overview of SCOUT.} \textbf{(a)} PoS \(n\)-grams are aggregated into candidate profiles and low-contrast patterns are filtered. \textbf{(b)} Each response is compared with the filtered profiles. \textbf{(c)} Inter-candidate distances provide prompt-specific centers \(\mu_i\) and candidate-level scales \(\sigma\). Filtering isolates candidate-discriminative syntax, while calibration makes scores comparable across candidates. Therefore, low scores indicate proximity not explained by generic similarity to the pool.}
  \label{fig:method-scout}
  \vspace{-0.5em}
\end{figure}

\vspace{-0.5em}

In this section, we introduce SCOUT, an output-only distillation attribution method that combines (a) PoS \(n\)-gram candidate profiles, (b) candidate-contrast filtering, and (c) candidate-pool calibration for attribution with abstention. Figure~\ref{fig:method-scout} summarizes these components.

\vspace{-0.5em}

\subsection{Response Representation and Candidate Profiles}
\label{sec:setup}

\vspace{-0.5em}

To capture linguistic form rather than content or task correctness, we encode a response \(M(x)\) as the count vector \(\phi(M(x))\) of contiguous PoS \(n\)-grams for \(n\in\{3,4,5\}\). Replacing words with syntactic tags reduces lexical sensitivity while retaining local structure~\citep{shaib-etal-2024-detection,wadhwa-etal-2025-taught}. However, a single response realizes the patterns present in one generation only. Consequently, rather than fitting a classifier to individual response vectors, we represent each candidate \(T_k\) by its centroid across \(N\) prompts, termed its \emph{candidate profile}:
\begin{equation}
\Phi(T_k)
=
\frac{1}{N}
\sum_{i=1}^{N}
\phi\bigl(T_k(x_i)\bigr).
\label{eq:candidate-profile}
\end{equation}
\(\Phi(T_k)\) captures recurring patterns across candidate responses, while the audited model remains at the prompt level. Appendix~\ref{app:ngram-examples} provides examples in model responses.

\vspace{-0.5em}

\subsection{Candidate-Contrast Filtering}
\label{sec:method-two-kinds}

\vspace{-0.5em}

Patterns used at similar rates by every candidate can dominate frequency-based similarity while contributing little to candidate discrimination. We identify such patterns from the candidate profiles themselves. First, we normalize each candidate profile by its total count:
\begin{equation}
\widetilde{\Phi}(T_k)
=
\frac{\Phi(T_k)}
{\lVert\Phi(T_k)\rVert_1}.
\label{eq:candidate-normalization}
\end{equation}

Then, over features observed in at least one candidate, we compute the coordinatewise mean \(\bar{\Phi}\), standard deviation \(s_{\Phi}\), and contrast vector \(c\):
\begin{equation}
\bar{\Phi}=\frac{1}{K}\sum_{k=1}^{K}\widetilde{\Phi}(T_k),
\qquad
s_{\Phi}=\left[\frac{1}{K}\sum_{k=1}^{K}
\left(\widetilde{\Phi}(T_k)-\bar{\Phi}\right)^2\right]^\frac{1}{2},
\qquad
c=\frac{s_{\Phi}}{\bar{\Phi}}.
\label{eq:candidate-contrast}
\end{equation}
The square, square root, and division are coordinatewise. Small entries of \(c\) indicate patterns shared at similar rates across candidates, whereas large entries indicate candidate-specific variation.

Finally, let \(\lambda\in[0,1]\) denote the filtering fraction and \(\gamma_\lambda\) the \(\lambda\)-quantile of \(c\) over features observed in the current candidate pool. The mask \(w_\lambda=\mathbf{1}[c\geq\gamma_\lambda]\) retains features at or above this cutoff. Features absent from every candidate are excluded from the quantile and retain unit weight. Thus, \(\lambda=0\) recovers the unfiltered representation, while increasing \(\lambda\) removes more low-contrast features. Using this mask, we define \(\phi_\lambda(y)=w_\lambda\odot\phi(y)\) and \(\Phi_\lambda(T_k)=w_\lambda\odot\Phi(T_k)\), where \(\odot\) denotes the Hadamard product. We fix \(\lambda=0.15\) for all experiments and report sensitivity to \(\lambda\) in Appendix~\ref{app:ngram}.

\subsection{Candidate-Pool Calibration and Decision Rule}
\label{sec:method-scout}

\paragraph{Candidate-Pool Calibration}
For \(M\in\{S\}\cup\mathcal{T}\), we use the filtered representations to define the prompt-level profile distance and its uncalibrated aggregate as
\begin{equation}
\delta_i(M,T_k)=1-\cos\!\left(\phi_\lambda(M(x_i)),\Phi_\lambda(T_k)\right),
\qquad
D(M,T_k)=\frac{1}{N}\sum_{i=1}^{N}\delta_i(M,T_k).
\label{eq:profile-distance}
\end{equation}
Lower values indicate greater syntactic similarity, but raw distances are not comparable across candidates as their baseline proximity and dispersion can differ. SCOUT calibrates them using null locations and scales estimated from inter-candidate distances. Treating every other candidate \(T_j\) as a pseudo-suspect, SCOUT estimates the prompt-specific null location and candidate-level null scale
\begin{equation}
\mu_i(T_k)
=
\frac{1}{K-1}
\sum_{j\neq k}
\delta_i(T_j,T_k),
\qquad
\sigma(T_k)
=
\operatorname{std}\!\left\{
\delta_i(T_j,T_k):
j\neq k,\;
1\leq i\leq N
\right\}.
\label{eq:finite-pool-null}
\end{equation}
Here, \(\operatorname{std}\) is the population standard deviation over the \((K-1)N\) distances. Prompt-specific means capture response structure, while one scale per candidate pools all prompts because each prompt provides only \(K-1\) pseudo-suspects. The centered and standardized scores are
\begin{equation}
C(S,T_k)
=
\frac{1}{N}
\sum_{i=1}^{N}
\left[
\delta_i(S,T_k)-\mu_i(T_k)
\right],
\qquad
Z(S,T_k)
=
\frac{
C(S,T_k)
}{
\sigma(T_k)
}.
\label{eq:calibrated-score}
\end{equation}
Centering and scaling normalize response similarity relative to the candidate pool, so lower \(Z(S,T_k)\) means that the student is closer to \(T_k\) than the pool null predicts. This normalization allows SCOUT to use only current student and candidate responses, without classifier training, model weights, token likelihoods, or historical checkpoints. Appendix~\ref{app:theory} formalizes the correction, derives finite-prompt bounds, and characterizes limits of candidate-only calibration and output-only history recovery.

\paragraph{Attribution and Abstention}
\label{sec:method-attribution}
For closed-set attribution, the auditor selects
\begin{equation}
\widehat{T}
=
\arg\min_{T_k\in\mathcal{T}}
Z(S,T_k).
\label{eq:closed-set-winner}
\end{equation}
For open-set attribution, we define the closest-alternative gap:
\begin{equation}
G(S)
=
\min_{T'\in\mathcal{T}\setminus\{\widehat{T}\}}
Z(S,T')
-
Z(S,\widehat{T}).
\label{eq:margin-G}
\end{equation}
Larger \(G(S)\) indicates clearer separation. The auditor accepts \(\widehat{T}\) when \(G(S)\geq\tau\) and abstains otherwise, with \(\tau\) fitted without the audited fold, following Section~\ref{sec:exp-controlled}.

\vspace{-0.5em}

\subsection{Trajectory Margins}
\label{sec:method-trajectory}

\vspace{-0.5em}

For the trajectory analysis in Section~\ref{sec:findings}, the designated teacher \(T\) and intermediate checkpoints \(S^{(\ell)}\) are known. We summarize each checkpoint using the pool-average teacher gap
\begin{equation}
G_T\bigl(S^{(\ell)}\bigr)
=
\frac{1}{K-1}
\sum_{T'\in\mathcal{T}\setminus\{T\}}
\left[
Z\bigl(S^{(\ell)},T'\bigr)
-
Z\bigl(S^{(\ell)},T\bigr)
\right].
\label{eq:margin}
\end{equation}
Larger values indicate stronger separation toward \(T\) relative to the rest of the pool. Averaging over alternatives prevents shifts in the closest rival from appearing as movement toward or away from \(T\).

For each checkpoint, we report the change in this margin toward \(T\):
\begin{equation}
\Delta G_T\bigl(S^{(\ell)}\bigr)
=
G_T\bigl(S^{(\ell)}\bigr)
-
G_T\bigl(S^{(0)}\bigr).
\label{eq:delta-margin}
\end{equation}
Section~\ref{sec:findings} uses a common candidate pool, with candidate profiles and empirical nulls fixed along each trajectory. Positive values indicate stronger teacher-relative separation than at \(S^{(0)}\). Both \(G_T\) and \(\Delta G_T\) are teacher-aware descriptive quantities and do not determine attribution verdicts.

\vspace{-0.5em}

\section{Experiments}
\label{sec:experiments}
\vspace{-0.5em}

In this section, we evaluate SCOUT in two regimes. In Section~\ref{sec:exp-descendants}, we evaluate it retrospectively on released descendants of DeepSeek-R1~\citep{deepseekai2025r1} distillations built on Qwen2.5-Math bases~\citep{yang2024qwen25mathtechnicalreportmathematical} after further post-training, which may have altered the teacher's syntactic signature. Section~\ref{sec:exp-controlled} evaluates it in the controlled setting, where students are audited immediately after SFT and the known teacher can be included in or removed from the candidate pool.

\vspace{-0.5em}

\subsection{Compared Methods and Probe Sets}
\label{sec:exp-setup}

\vspace{-0.5em}

We compare SCOUT with DistillDetect~\citep{rawat2026referencebaseddistillationdetectionllms} and PoS Templates~\citep{wadhwa-etal-2025-taught}. Because PoS Templates is a closed-set classifier without native abstention, we retrain it on the three remaining candidates in teacher-removed controlled trials and use its top-two class-probability gap as the detection score. Following \citet{3666122.3668142}, we also evaluate two training-free LLM judges, Mistral Medium 3.1~\citep{mistral_medium_3} and Solar Pro 4~\citep{upstage2026solarpro4}. Score-based attribution follows the DistillDetect evaluator's mean-score rule~\citep{rawat2026referencebaseddistillationdetectionllms}, while controlled detection also uses its threshold and significance procedures. LLM judges use the most frequent prompt-level label, with ties yielding no verdict. Appendix~\ref{app:tables} reports component comparisons and analyzes distillation attribution under a sibling-enriched candidate pool.

We use four probe sets throughout the evaluation. OpenMathInstruct (OMI)~\citep{toshniwal2024openmathinstruct} and s1~\citep{muennighoff2025s1simpletesttimescaling} serve as math probes, while OASST1~\citep{3666122.3668186} and Databricks Dolly 15K (Dolly)~\citep{DatabricksBlog2023DollyV2} serve as conversation probes. Appendix~\ref{app:template} documents the public checkpoints, their designated teachers, and training sequences used in the retrospective and trajectory analyses, while Appendix~\ref{app:details} provides additional experimental details, including the prompts provided to the LLM judges.

\vspace{-0.5em}

\subsection{Distillation Attribution After Subsequent Training}
\label{sec:exp-descendants}

\vspace{-0.5em}

\paragraph{Evaluation Panel} We study DeepSeek-R1-Distill-Qwen-1.5B and 7B distilled from DeepSeek-R1, whose pre-distillation bases are Qwen2.5-Math-1.5B and 7B, respectively. We audit these two bases, two distilled parents, and their twelve public descendants. As detailed in Appendix~\ref{app:details}, ten descendants underwent reward-based or self-play post-training without additional R1 responses, whereas the remaining two were re-exposed to R1 responses during further training. Since R1 is the teacher for every distilled parent and descendant, this panel tests whether its signature remains detectable after subsequent training and absent from the pre-distillation bases.

\vspace{-0.5em}

\begin{table*}[t]
  \centering
  \caption{\textbf{Distillation attribution after subsequent training.} \emph{Desc.} and \emph{Par.} count correct R1 assignments among twelve descendants and two distilled parents, while \emph{Base} counts pre-distillation bases correctly identified as non-R1. Using the same twelve descendants and prompt bank, \emph{Dist.} compares independently sorted candidate scores at equal ranks, whereas \emph{Same} compares candidates on each shared prompt. Both report probe-level R1 selection accuracy averaged across descendants, distinguishing distributional separation from same-prompt candidate selection. For DistillDetect, \(R\) denotes the reference model. \emph{Dist.} is unavailable for LLM judges because they return labels only. Higher is better for all metrics. Bold marks the best output-only value in each column.}
  \vspace{-0.25em}
  \label{tab:r1-retrospective}
  \begingroup
  \renewcommand{\arraystretch}{0.90}
  \resizebox{0.90\textwidth}{!}{%
  \begin{tabular}{lcccccccccc}
    \toprule
    \textbf{Math probes}
      & \multicolumn{5}{c}{OMI}
      & \multicolumn{5}{c}{s1} \\
    \cmidrule(lr){2-6}\cmidrule(r){7-11}
      & \multicolumn{3}{c}{Model Level}
      & \multicolumn{2}{c}{\makebox[0pt][c]{Probe Level (\%)}}
      & \multicolumn{3}{c}{Model Level}
      & \multicolumn{2}{c}{\makebox[0pt][c]{Probe Level (\%)}} \\
    \cmidrule(lr){2-4}\cmidrule(lr){5-6}
    \cmidrule(lr){7-9}\cmidrule(lr){10-11}
    Method
      & Desc. & Par. & Base
      & \makebox[2.7em][c]{Dist.}
      & \makebox[2.7em][c]{Same}
      & Desc. & Par. & Base
      & \makebox[2.7em][c]{Dist.}
      & \makebox[2.7em][c]{Same} \\
    \midrule
    \rowcolor{black!7}
    \multicolumn{11}{l}{\textit{Reference-based (A pre-distillation reference checkpoint is required.)}} \\
    \rowcolor{black!7}
    DistillDetect, \(R=S^{(0)}\)
      & 12/12 & 2/2 & 2/2 & 98.0 & 94.1
      & 12/12 & 2/2 & 2/2 & 97.9 & 96.2 \\
    \rowcolor{black!7}
    DistillDetect, \(R=S^{(1)}\)
      & 3/12 & --- & --- & --- & ---
      & 3/12 & --- & --- & --- & --- \\
    \midrule
    \multicolumn{11}{l}{\textit{Output-only (Only current model outputs are required.)}} \\
    PoS Templates
      & 9/12 & 1/2 & \textbf{2/2} & 66.6 & 42.1
      & 9/12 & \textbf{2/2} & \textbf{2/2} & 79.4 & 46.9 \\
    LLM judge (Mistral)
      & \textbf{12/12} & \textbf{2/2} & \textbf{2/2} & --- & 45.5
      & 11/12 & \textbf{2/2} & \textbf{2/2} & --- & 42.0 \\
    LLM judge (Solar)
      & \textbf{12/12} & \textbf{2/2} & \textbf{2/2} & --- & 53.3
      & \textbf{12/12} & \textbf{2/2} & 1/2 & --- & 54.2 \\
    SCOUT (Ours)
      & \textbf{12/12} & \textbf{2/2} & \textbf{2/2} & \textbf{87.9} & \textbf{70.8}
      & \textbf{12/12} & \textbf{2/2} & \textbf{2/2} & \textbf{99.0} & \textbf{80.1} \\
    \midrule
    \textbf{Conversation probes}
      & \multicolumn{5}{c}{OASST1}
      & \multicolumn{5}{c}{Dolly} \\
    \cmidrule(lr){2-6}\cmidrule(r){7-11}
      & \multicolumn{3}{c}{Model Level}
      & \multicolumn{2}{c}{\makebox[0pt][c]{Probe Level (\%)}}
      & \multicolumn{3}{c}{Model Level}
      & \multicolumn{2}{c}{\makebox[0pt][c]{Probe Level (\%)}} \\
    \cmidrule(lr){2-4}\cmidrule(lr){5-6}
    \cmidrule(lr){7-9}\cmidrule(lr){10-11}
    Method
      & Desc. & Par. & Base
      & \makebox[2.7em][c]{Dist.}
      & \makebox[2.7em][c]{Same}
      & Desc. & Par. & Base
      & \makebox[2.7em][c]{Dist.}
      & \makebox[2.7em][c]{Same} \\
    \midrule
    \rowcolor{black!7}
    \multicolumn{11}{l}{\textit{Reference-based (A pre-distillation reference checkpoint is required.)}} \\
    \rowcolor{black!7}
    DistillDetect, \(R=S^{(0)}\)
      & 0/12 & 0/2 & 2/2 & 6.3 & 36.0
      & 4/12 & 1/2 & 2/2 & 28.3 & 43.5 \\
    \rowcolor{black!7}
    DistillDetect, \(R=S^{(1)}\)
      & 2/12 & --- & --- & --- & ---
      & 2/12 & --- & --- & --- & --- \\
    \midrule
    \multicolumn{11}{l}{\textit{Output-only (Only current model outputs are required.)}} \\
    PoS Templates
      & 1/12 & 0/2 & \textbf{2/2} & 7.8 & 25.3
      & 1/12 & 0/2 & \textbf{2/2} & 7.7 & 22.1 \\
    LLM judge (Mistral)
      & 0/12 & 0/2 & \textbf{2/2} & --- & 19.4
      & 1/12 & 0/2 & \textbf{2/2} & --- & 27.7 \\
    LLM judge (Solar)
      & 1/12 & 0/2 & 1/2 & --- & 29.8
      & 1/12 & 0/2 & 0/2 & --- & 34.9 \\
    SCOUT (Ours)
      & \textbf{12/12} & \textbf{2/2} & \textbf{2/2} & \textbf{92.9} & \textbf{75.1}
      & \textbf{12/12} & \textbf{2/2} & \textbf{2/2} & \textbf{95.5} & \textbf{84.5} \\
    \bottomrule
  \end{tabular}%
  }
  \endgroup
  \vspace{-0.5em}
\end{table*}

\paragraph{Attribution Performance} We report model-level accuracy together with two probe-level accuracies that measure distributional separation and same-prompt candidate selection, respectively. Table~\ref{tab:r1-retrospective} shows that SCOUT correctly assigns all descendants and parents to R1 and identifies both pre-distillation bases as non-R1 on all four probes. Under both probe-level views, it attains the highest output-only accuracy on every probe, indicating that its R1 signal appears both in the score distribution and in same-prompt comparisons. With \(S^{(0)}\), DistillDetect matches SCOUT on all math model-level counts. Using the already-distilled parent \(S^{(1)}\) instead yields only \(3/12\) descendants on OMI and \(3/12\) on s1 because this comparison reflects subsequent post-training rather than the R1 distillation step. Model-level attribution uses distinct ten-candidate pools for math and conversation.

\vspace{-0.5em}

\begin{figure*}[t]
  \centering
  \begin{subfigure}[t]{0.475\textwidth}
    \centering
    \includegraphics[width=\linewidth]{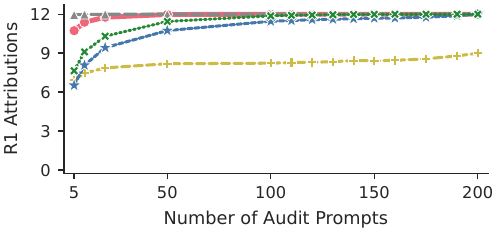}
    \caption{OMI probe set}
  \end{subfigure}\hfill
  \begin{subfigure}[t]{0.475\textwidth}
    \centering
    \includegraphics[width=\linewidth]{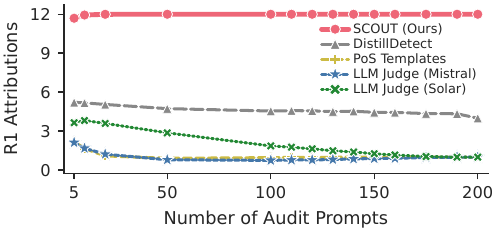}
    \caption{Dolly probe set}
  \end{subfigure}
  \vspace{-0.25em}
  \caption{\textbf{Prompt efficiency after subsequent training.} For each audit budget \(n<200\), we randomly select \(n\) distinct prompts \(2{,}000\) times and report the mean number of the 12 descendants correctly attributed to R1. At \(n=200\), we evaluate all prompts once. Higher is better.}
  \label{fig:retrospective-budget}
  \vspace{-0.5em}
\end{figure*}

\paragraph{Prompt Efficiency} Figure~\ref{fig:retrospective-budget} evaluates prompt efficiency on OMI and Dolly. With five prompts, SCOUT attributes \(10.7/12\) descendants on OMI and \(11.7/12\) on Dolly, reaching \(12/12\) with 100 and 50 prompts, respectively. On OMI, it approaches DistillDetect with \(S^{(0)}\) and is more efficient than the other output-only methods. On Dolly, SCOUT remains near-perfect as the budget grows, whereas DistillDetect declines despite access to \(S^{(0)}\) and the other output-only methods fall to \(1/12\).

\vspace{-0.5em}

\subsection{Distillation Attribution Before Subsequent Training}
\label{sec:exp-controlled}

\paragraph{Controlled Cohorts and Protocol} Each cohort contains nineteen 1.5--4B students based on Gemma-3-4B-PT~\citep{gemmateam2025gemma3technicalreport}, Llama-3.2-3B-Instruct~\citep{grattafiori2024llama3herdmodels}, and Qwen-2.5 at 1.5B and 3B~\citep{qwen2.5}. Across matched configurations, the teacher varies among GPT-OSS-120B~\citep{openai2025gptoss120bgptoss20bmodel}, Llama-3.3-70B-Instruct, and Qwen-3-8B~\citep{qwen3}. Gemma-3-27B-it completes the four-candidate pool as an unexposed distractor. Identification uses the probe set not used for SFT, whereas detection and abstention use both probe sets.

For score-based methods, we fit the acceptance threshold on the non-held-out cells of each fold and apply it to the held-out group. SCOUT applies this threshold to the winning margin \(G(S)\) defined in Section~\ref{sec:method-scout}. Student-backbone holdout excludes all cells sharing one backbone, while teacher holdout excludes all cells associated with one teacher and tests transfer to a teacher unseen during threshold fitting. We denote these schemes by \(S\) and \(T\), respectively, following~\citet{rawat2026referencebaseddistillationdetectionllms}.

\vspace{-0.5em}

\begin{table*}[t]
  \centering
  \caption{\textbf{Detection and threshold transfer.} Both 1.5--4B cohorts use disjoint SFT and audit examples. Math uses OMI and s1, and conversation uses OASST1 and Dolly. \(\mathrm{Det.}\) counts students correctly attributed on both probes with the teacher present using student-backbone holdout. \(\mathrm{Acc}(S)\) and \(\mathrm{Acc}(T)\) count correct accept-or-abstain decisions under student-backbone and teacher holdout, respectively, across 4 cells per student and 76 per cohort. \(\mathrm{FP}(S)\) and \(\mathrm{FP}(T)\) count accepted students after teacher removal under the same holdouts. Higher is better for \(\mathrm{Det.}\) and \(\mathrm{Acc}\), and lower for \(\mathrm{FP}\).}
  \label{tab:controlled-detection}
  \resizebox{\textwidth}{!}{%
  \begin{tabular}{l ccccc ccccc}
    \toprule
    & \multicolumn{5}{c}{Math (\(19\) Students)}
    & \multicolumn{5}{c}{Conversation (\(19\) Students)} \\
    \cmidrule(lr){2-6}\cmidrule(r){7-11}
    Method
      & \(\mathrm{Det.}\) & \(\mathrm{Acc}(S)\) & \(\mathrm{Acc}(T)\) & \(\mathrm{FP}(S)\) & \(\mathrm{FP}(T)\)
      & \(\mathrm{Det.}\) & \(\mathrm{Acc}(S)\) & \(\mathrm{Acc}(T)\) & \(\mathrm{FP}(S)\) & \(\mathrm{FP}(T)\) \\
    \midrule
    \rowcolor{black!7}
    \multicolumn{11}{l}{\textit{Reference-based (A pre-distillation reference checkpoint is required.)}} \\
    \rowcolor{black!7}
    DistillDetect
      & 11/19 & 57/76 & 43/76 & 3/19 & 10/19
      & 13/19 & 54/76 & 38/76 & 5/19 & 12/19 \\
    \midrule
    \multicolumn{11}{l}{\textit{Output-only (Only current model outputs are required.)}} \\
    PoS Templates
      & 14/19 & 48/76 & 32/76 & 11/19 & 11/19
      & 11/19 & 64/76 & 53/76 & \textbf{0/19} & \textbf{0/19} \\
    LLM judge (Mistral)
      & 15/19 & 56/76 & 56/76 & 6/19 & 6/19
      & 12/19 & 56/76 & 56/76 & 3/19 & 3/19 \\
    LLM judge (Solar)
      & 15/19 & 54/76 & 54/76 & 4/19 & 4/19
      & \textbf{15/19} & 58/76 & 58/76 & 5/19 & 5/19 \\
    \cmidrule{1-11}
    SCOUT (Ours)
      & \textbf{18/19} & \textbf{74/76} & 58/76 & \textbf{0/19} & \textbf{0/19}
      & \textbf{15/19} & \textbf{68/76} & \textbf{69/76} & \textbf{0/19} & \textbf{0/19} \\
    \quad \( - \) Scaling (\(C\))
      & 16/19 & 68/76 & 57/76 & \textbf{0/19} & \textbf{0/19}
      & 14/19 & 64/76 & 64/76 & 2/19 & 2/19 \\
    \qquad \( - \) Centering (\(D\))
      & 17/19 & 73/76 & \textbf{68/76} & \textbf{0/19} & \textbf{0/19}
      & 12/19 & 61/76 & 65/76 & 2/19 & 2/19 \\
    \bottomrule
  \end{tabular}%
  }
  \vspace{-0.5em}
\end{table*}

\begin{figure*}[t]
  \centering
  \begin{subfigure}[t]{0.475\textwidth}
    \centering
    \includegraphics[width=\linewidth]{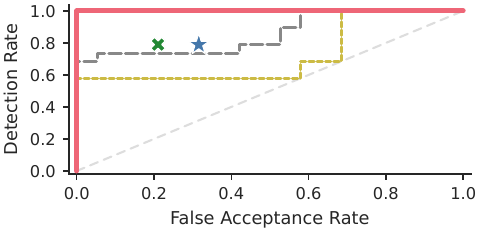}
    \caption{Math cohort}
    \label{fig:detection-roc-a}
  \end{subfigure}\hfill
  \begin{subfigure}[t]{0.475\textwidth}
    \centering
    \includegraphics[width=\linewidth]{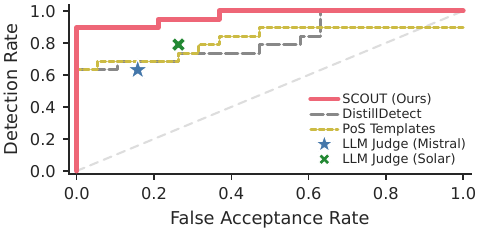}
    \caption{Conversation cohort}
    \label{fig:detection-roc-b}
  \end{subfigure}
  \caption{\textbf{Detection performance across false-acceptance rates.} Axes use the student-level definitions in Table~\ref{tab:controlled-detection}. LLM judges appear as single operating points, and the diagonal denotes chance. SCOUT matches the best math performance and is best in conversation at low false-acceptance rates.}
  \label{fig:detection-roc}
  \vspace{-1.0em}
\end{figure*}

\begin{figure*}[t]
  \centering
  \begin{subfigure}[t]{0.475\textwidth}
    \centering
    \includegraphics[width=\linewidth]{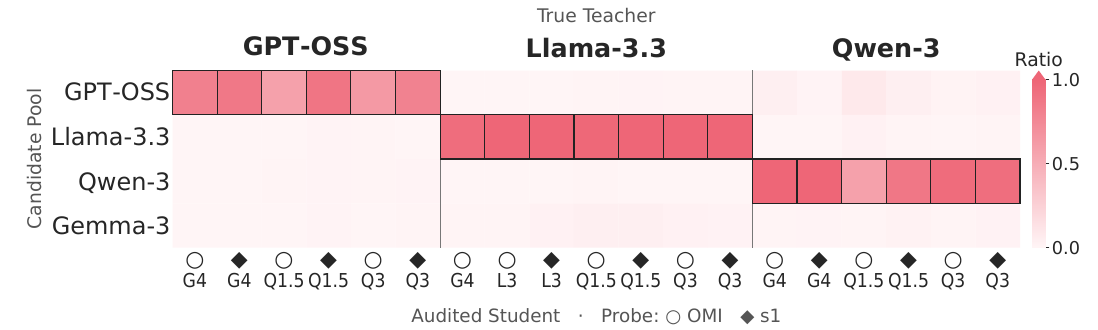}
    \caption{Math cohort}
    \label{fig:controlled-signatures-a}
  \end{subfigure}\hfill
  \begin{subfigure}[t]{0.475\textwidth}
    \centering
    \includegraphics[width=\linewidth]{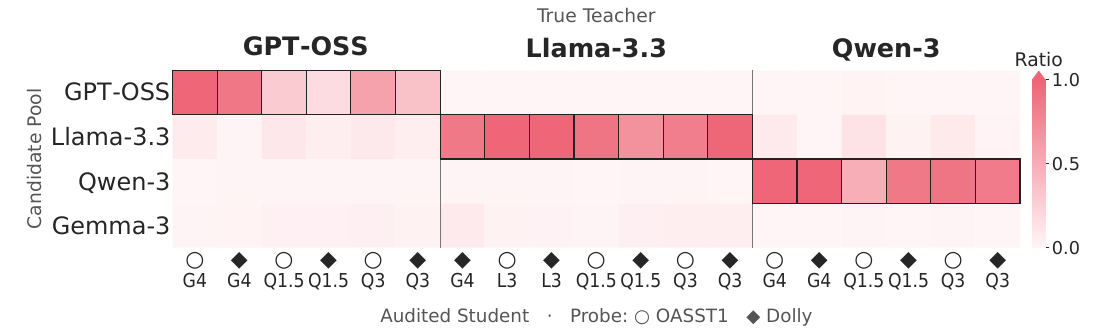}
    \caption{Conversation cohort}
    \label{fig:controlled-signatures-b}
  \end{subfigure}
  \caption{\textbf{Controlled students reproduce teacher-associated syntactic patterns.} Rows are candidate models, and columns are 1.5--4B students grouped by their true teacher. Each cell reports the share of student PoS \(n\)-gram tokens matching the candidate's ten most characteristic patterns, normalized by the share in responses from that candidate. Column labels use G4 for Gemma-3-4B-PT, L3 for Llama-3.2-3B-Instruct, Q1.5 for Qwen-2.5-1.5B, and Q3 for Qwen-2.5-3B.}
  \label{fig:controlled-signatures}
  \vspace{-1.0em}
\end{figure*}

\paragraph{Attribution, Abstention, and Calibration} Table~\ref{tab:controlled-detection} shows that, under the fold-fitted thresholds, SCOUT detects \(18/19\) math students and \(15/19\) conversation students while accepting none after teacher removal. Moving from \(D\) to \(C\) reduces detection by one student in math but raises it by two in conversation. Adding scaling improves detection in both and eliminates false positives in conversation. Thus, gains are not monotonic across stages, but full location-and-scale calibration gives the strongest attribution-abstention balance across domains.

Figure~\ref{fig:detection-roc} shows that this separation is not specific to the fitted thresholds. SCOUT attains student-level AUCs of \(1.000\) and \(0.970\), respectively. At zero false positives, it detects \(19/19\) and \(17/19\) students, compared with \(13/19\) and \(12/19\) for DistillDetect, which requires a pre-distillation reference checkpoint. Appendix~\ref{app:details} describes the controlled cohorts, while Appendix~\ref{app:tables} reports the full calibration ablation, including a 7--8B cohort testing whether the results extend to larger students.

\vspace{-0.5em}

\paragraph{Inheritance of Syntactic Signatures} Figure~\ref{fig:controlled-signatures} visualizes the syntactic evidence captured by profile distance. For each candidate, we identify its ten most characteristic PoS \(n\)-grams relative to the pool and measure their frequency in held-out student responses, normalized by candidate responses. The block-diagonal structure across backbones and domains shows consistent inheritance of teacher-associated patterns. Appendix~\ref{app:ngram-examples} shows response examples, while Appendix~\ref{app:ngram} provides full-size heatmaps and further inheritance analyses.

\vspace{-0.5em}

\begin{wrapfigure}{r}{0.55\textwidth}
  \vspace{-\intextsep}
  \vspace{-0.25em}
  \centering
  \includegraphics[width=\linewidth]{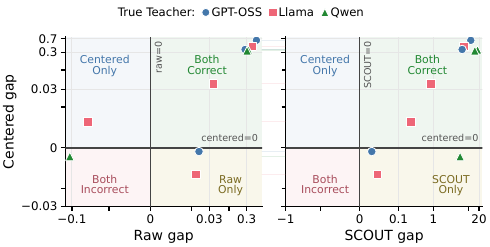}
  \caption{\textbf{Scaling corrects errors after centering.} For ten students on one held-out Dolly prompt, panels plot raw against centered gaps and centered against scaled gaps. Positive gaps indicate correct attribution.}
  \label{fig:margin-stages}
  \vspace{-\intextsep}
  \vspace{-1.5em}
\end{wrapfigure}

\paragraph{Effect of Candidate-Specific Scaling} Figure~\ref{fig:margin-stages} compares raw, centered, and scaled decision gaps for ten students on one held-out Dolly prompt. Raw gaps correctly attribute \(8/10\) students, while centered gaps attribute \(7/10\). The three centered errors remain within \(0.007\) of zero because centering removes baseline proximity but not differences in candidate scale. Dividing by the candidate-specific spread \(\sigma\) moves all ten scaled gaps above zero. This single-prompt example illustrates the correction. The full procedure aggregates \(200\) prompts per probe and requires agreement across probes.

\vspace{0.25em}

\definecolor{scout}{HTML}{EE6677}

\begin{tcolorbox}[
  colback=scout!5!white,
  colframe=scout,
  boxrule=0pt,
  leftrule=2.5pt,
  arc=0.8mm,
  left=3mm, right=3mm, top=2.2mm, bottom=2.2mm,
  before skip=8pt, after skip=5pt
]
{\bfseries\color{scout!85!black} Takeaway 1: SCOUT Enables Output-Only Attribution Across Training Stages.}\par
\vspace{2pt}
{\color{scout!30!white}\hrule height 0.4pt}
\vspace{3pt}
\noindent In controlled cohorts audited immediately after SFT, SCOUT achieves high detection and accepts no candidate after the teacher is removed from the pool. After subsequent training, it attributes every public descendant to the documented teacher and identifies both pre-distillation bases as non-R1 using only current outputs, without a pre-distillation checkpoint.
\end{tcolorbox}

\vspace{-0.5em}

\section{Reading the Signature Along a Training Trajectory}
\label{sec:findings}
\vspace{-0.5em}

The preceding section evaluates SCOUT before and after subsequent training. In this section, we use the trajectory margin change \(\Delta G_T\) defined in \eqref{eq:delta-margin} to examine when teacher-associated evidence emerges and whether it persists through subsequent training.

\subsection{Teacher Signatures Emerge and Persist Across Training Trajectories}
\label{sec:traj-ladders}

First, we examine thirteen public trajectories spanning five teachers, dense and mixture-of-experts architectures, and roughly 1B active to 32B dense parameters. Figure~\ref{fig:trajectories-a} shows that, on OMI, \(\Delta G_T\) becomes positive immediately after distillation and remains positive at every later checkpoint in all thirteen trajectories. The same pattern holds on s1. Appendix~\ref{app:template} reports the s1 results and documents each trajectory, its designated teacher, and its training data.

Furthermore, we examine the two R1-distilled parents and twelve public descendants from Section~\ref{sec:exp-descendants}. For each lineage, the Qwen2.5-Math base is \(S^{(0)}\), the R1-distilled parent is \(S^{(1)}\), and each descendant is a subsequent training outcome \(S^{(2)}\). Figure~\ref{fig:trajectories-b} shows \(\Delta G_T>0\) on OMI for both parents and all twelve descendants. The same holds on s1, so every descendant retains an R1 signature relative to \(S^{(0)}\). Compared with its corresponding \(S^{(1)}\) parent, the margin is higher for \(7/12\) descendants on OMI and \(3/12\) on s1. Thus, the signature persists across the panel, while further strengthening varies among descendants.

\begin{figure*}[t]
\centering
\begin{subfigure}[b]{0.44\textwidth}
\centering
\includegraphics[width=\linewidth]{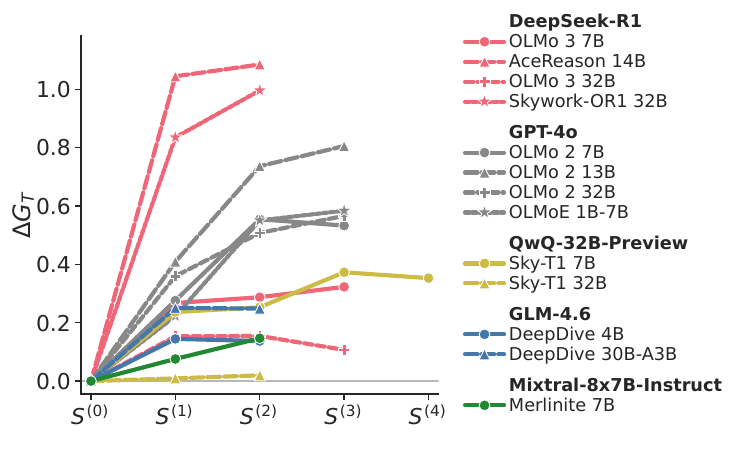}
\caption{Thirteen public ladders}
\label{fig:trajectories-a}
\end{subfigure}\hfill
\begin{subfigure}[b]{0.27\textwidth}
\centering
\includegraphics[width=\linewidth]{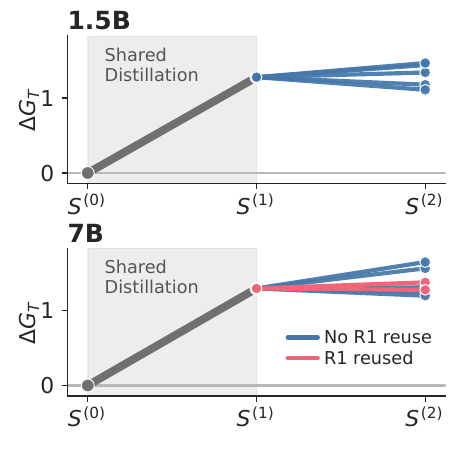}
\caption{Qwen2.5-Math lineages}
\label{fig:trajectories-b}
\end{subfigure}\hfill
\begin{subfigure}[b]{0.27\textwidth}
\centering
\includegraphics[width=\linewidth]{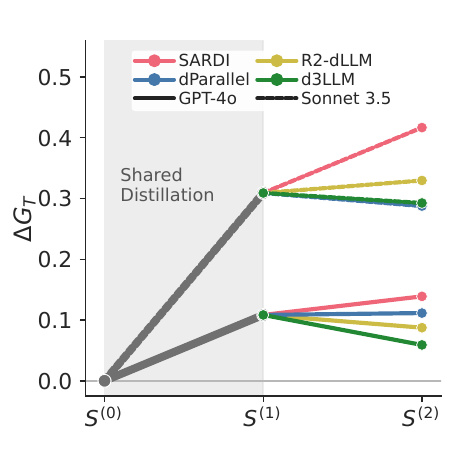}
\caption{Dream-7B lineages}
\label{fig:trajectories-c}
\end{subfigure}
\caption{\textbf{Source-margin shifts from the pre-distillation state on OMI.} All panels use the same 13-candidate pool. Positive values indicate a larger margin toward the indicated teacher than at \(S^{(0)}\), and every post-distillation value is positive. Shading marks distillation in \textbf{(b)} and \textbf{(c)}.}
\label{fig:trajectories}
\vspace{-1.0em}
\end{figure*}

\vspace{-0.5em}

\subsection{The Signature Extends Beyond Autoregressive Models}
\label{sec:traj-diffusion}

\vspace{-0.5em}

Because SCOUT compares contiguous PoS \(n\)-grams, the signal could depend on autoregressive token generation. Therefore, we apply the same readout to Dream-7B~\citep{ye2025dream7bdiffusionlarge}, a diffusion language model, evaluating its pre-distillation base, instruction-tuned successor, and four public post-training descendants. Since its instruction data include responses from both GPT-4o~\citep{openai2024gpt4ocard} and Sonnet 3.5~\citep{claude35}, Figure~\ref{fig:trajectories-c} reports separate margins toward the two teacher models. Both margins increase after instruction tuning and remain positive across all four branches on both probes, showing that signatures associated with both documented teachers can emerge and persist in a diffusion language model.

\begin{tcolorbox}[
  colback=scout!5!white,
  colframe=scout,
  boxrule=0pt,
  leftrule=2.5pt,
  arc=0.8mm,
  left=3mm, right=3mm, top=2.2mm, bottom=2.2mm,
  before skip=8pt, after skip=5pt
]
{\bfseries\color{scout!85!black} Takeaway 2: Teacher Signatures Remain Readable Through Diverse Post-Training.}\par
\vspace{2pt}
{\color{scout!30!white}\hrule height 0.4pt}
\vspace{3pt}
\noindent SCOUT traces teacher-associated syntactic signatures that emerge during distillation and remain readable after preference optimization and reinforcement learning across dense and mixture-of-experts autoregressive models. In the diffusion model, margins toward both documented contributors remain positive under mixed-teacher supervision.
\end{tcolorbox}

\section{Conclusion}
\label{sec:conclusion}
\vspace{-0.5em}

In this paper, we introduce SCOUT, an output-only method for distillation attribution. It aggregates recurring syntactic patterns into candidate profiles, filters low-contrast patterns, and calibrates student--candidate distances against inter-candidate distances. Across controlled students and public descendants, SCOUT supports open-set attribution in controlled cohorts and closed-set retrospective attribution after subsequent training. Across training trajectories, teacher-associated signatures emerge during distillation and persist through preference optimization and reinforcement learning. The same signal extends to a diffusion language model under mixed-teacher supervision.

\vspace{-0.5em}

\paragraph{Limitations}
SCOUT depends on the supplied candidate pool, and principled pool construction for third-party auditors remains open. Where applicable, we follow the pool construction used by DistillDetect~\citep{rawat2026referencebaseddistillationdetectionllms}, but this protocol does not determine which models to include in new settings. Pool composition also bounds attribution resolution. A sibling-enriched pool sharply reduces exact attribution from current outputs, as detailed in Appendix~\ref{app:tables}.


\vspace{-0.5em}

\paragraph{Future Work}
Promising directions for future work include testing whether the observed persistence extends to watermarks and fingerprints under matched post-training and response-rewriting interventions, and evaluating adversaries optimized to evade attribution.




\section*{AI Use Statement}

Generative AI tools were used during manuscript preparation to provide feedback, assist literature search, and improve clarity. LLMs were also used experimentally to generate training and audit responses and serve as judge baselines, as described in Section~\ref{sec:experiments} and Appendix~\ref{app:details}. These experimental roles followed the stated procedures, while interpretation and scientific conclusions remained with the authors. All AI-assisted manuscript content was reviewed and verified by the authors, who take full responsibility for the final content and claims.

\section*{Ethics Statement}

Output-based attribution can inform provenance audits, but should not be treated as conclusive evidence of a model's training history. For example, close source relatives can reduce exact-teacher attribution. Appendix~\ref{app:theory} shows that subsequent SFT can make an earlier source unrecoverable by the evaluated methods under the tested training budgets and probe sets. This result has dual-use implications because it may inform attempts to evade attribution. It also shows that non-detection does not establish non-use. Therefore, attribution findings should be assessed alongside independent provenance evidence.

\section*{Reproducibility Statement}

The response representation is specified in Section~\ref{sec:methodology}, with candidate-pool calibration in Eqs.~\plaineqref{eq:finite-pool-null}--\plaineqref{eq:calibrated-score}. Appendix~\ref{app:ngram} reports preprocessing, filtering, and robustness analyses, Appendix~\ref{app:theory} gives the finite-prompt analysis, and Appendix~\ref{app:details} documents generation, training, and evaluation procedures.

\clearpage

\bibliography{iclr2027_conference}

@inproceedings{shaib-etal-2024-detection,
    title = "Detection and Measurement of Syntactic Templates in Generated Text",
    author = "Shaib, Chantal  and
      Elazar, Yanai  and
      Li, Junyi Jessy  and
      Wallace, Byron C",
    editor = "Al-Onaizan, Yaser  and
      Bansal, Mohit  and
      Chen, Yun-Nung",
    booktitle = "Proceedings of the 2024 Conference on Empirical Methods in Natural Language Processing",
    month = nov,
    year = "2024",
    address = "Miami, Florida, USA",
    publisher = "Association for Computational Linguistics",
    url = "https://aclanthology.org/2024.emnlp-main.368/",
    doi = "10.18653/v1/2024.emnlp-main.368",
    pages = "6416--6431"
}

@inproceedings{hsieh-etal-2023-distilling,
    title = "Distilling Step-by-Step! Outperforming Larger Language Models with Less Training Data and Smaller Model Sizes",
    author = "Hsieh, Cheng-Yu  and
      Li, Chun-Liang  and
      Yeh, Chih-kuan  and
      Nakhost, Hootan  and
      Fujii, Yasuhisa  and
      Ratner, Alex  and
      Krishna, Ranjay  and
      Lee, Chen-Yu  and
      Pfister, Tomas",
    editor = "Rogers, Anna  and
      Boyd-Graber, Jordan  and
      Okazaki, Naoaki",
    booktitle = "Findings of the Association for Computational Linguistics: ACL 2023",
    month = jul,
    year = "2023",
    address = "Toronto, Canada",
    publisher = "Association for Computational Linguistics",
    url = "https://aclanthology.org/2023.findings-acl.507/",
    doi = "10.18653/v1/2023.findings-acl.507",
    pages = "8003--8017"
}

@inproceedings{xu-etal-2024-instructional,
    title = "Instructional Fingerprinting of Large Language Models",
    author = "Xu, Jiashu  and
      Wang, Fei  and
      Ma, Mingyu  and
      Koh, Pang Wei  and
      Xiao, Chaowei  and
      Chen, Muhao",
    editor = "Duh, Kevin  and
      Gomez, Helena  and
      Bethard, Steven",
    booktitle = "Proceedings of the 2024 Conference of the North American Chapter of the Association for Computational Linguistics: Human Language Technologies (Volume 1: Long Papers)",
    month = jun,
    year = "2024",
    address = "Mexico City, Mexico",
    publisher = "Association for Computational Linguistics",
    url = "https://aclanthology.org/2024.naacl-long.180/",
    doi = "10.18653/v1/2024.naacl-long.180",
    pages = "3277--3306"
}

@inproceedings{wadhwa-etal-2025-taught,
    title = "Who Taught You That? Tracing Teachers in Model Distillation",
    author = "Wadhwa, Somin  and
      Shaib, Chantal  and
      Amir, Silvio  and
      Wallace, Byron C",
    editor = "Che, Wanxiang  and
      Nabende, Joyce  and
      Shutova, Ekaterina  and
      Pilehvar, Mohammad Taher",
    booktitle = "Findings of the Association for Computational Linguistics: ACL 2025",
    month = jul,
    year = "2025",
    address = "Vienna, Austria",
    publisher = "Association for Computational Linguistics",
    url = "https://aclanthology.org/2025.findings-acl.173/",
    doi = "10.18653/v1/2025.findings-acl.173",
    pages = "3307--3315",
    ISBN = "979-8-89176-256-5"
}

@misc{rawat2026referencebaseddistillationdetectionllms,
      title={Reference-Based Distillation Detection in LLMs}, 
      author={Rajat Rawat and Sizhe Chen and Akshay Anand and Michael Duan and Bob Rotsted and Sewon Min},
      year={2026},
      eprint={2607.09692},
      archivePrefix={arXiv},
      primaryClass={cs.LG},
      url={https://arxiv.org/abs/2607.09692}, 
}

@misc{
    zeng2026distillation,
    title={Distillation Lineage Inspector: Black-Box Auditing of Model Distillation in {LLM}s},
    author={Zhirui Zeng and Jiamou Liu and Meng-Fen Chiang and Jialing He and Shangwei Guo},
    year={2026},
    url={https://openreview.net/forum?id=mcUWhTcqTx}
}

@inproceedings{savani2025antidistillation,
    title={Antidistillation Sampling},
    author={Yash Savani and Asher Trockman and Zhili Feng and Yixuan Even Xu and Avi Schwarzschild and Alexander Robey and Marc Anton Finzi and J Zico Kolter},
    booktitle={The Thirty-ninth Annual Conference on Neural Information Processing Systems},
    year={2025},
    url={https://openreview.net/forum?id=Vo2UHqMu8t}
}

@InProceedings{pmlr-v202-kirchenbauer23a,
  title = 	 {A Watermark for Large Language Models},
  author =       {Kirchenbauer, John and Geiping, Jonas and Wen, Yuxin and Katz, Jonathan and Miers, Ian and Goldstein, Tom},
  booktitle = 	 {Proceedings of the 40th International Conference on Machine Learning},
  pages = 	 {17061--17084},
  year = 	 {2023},
  editor = 	 {Krause, Andreas and Brunskill, Emma and Cho, Kyunghyun and Engelhardt, Barbara and Sabato, Sivan and Scarlett, Jonathan},
  volume = 	 {202},
  series = 	 {Proceedings of Machine Learning Research},
  month = 	 {23--29 Jul},
  publisher =    {PMLR},
  url = 	 {https://proceedings.mlr.press/v202/kirchenbauer23a.html}
}

@inproceedings{NEURIPS2024_2567c95f,
	author = {Sander, Tom and Fernandez, Pierre and Durmus, Alain and Douze, Matthijs and Furon, Teddy},
	booktitle = {Advances in Neural Information Processing Systems},
	editor = {A. Globerson and L. Mackey and D. Belgrave and A. Fan and U. Paquet and J. Tomczak and C. Zhang},
	pages = {21079--21113},
	publisher = {Curran Associates, Inc.},
	title = {Watermarking Makes Language Models Radioactive},
	volume = {37},
	year = {2024}}

@misc{yang2026askingbackinteractionlayerantidistillation,
      title={Asking Back: Interaction-Layer Antidistillation Watermarks}, 
      author={Guang Yang and Amir Ghasemian and Fengchen Liu and Zhong Wang and Ninareh Mehrabi and Homa Hosseinmardi},
      year={2026},
      eprint={2605.16462},
      archivePrefix={arXiv},
      primaryClass={cs.CR},
      url={https://arxiv.org/abs/2605.16462}, 
}

@inproceedings{ma-etal-2026-protecting,
    title = "Protecting Language Models Against Unauthorized Distillation through Trace Rewriting",
    author = "Ma, Xinhang  and
      Yeoh, William  and
      Zhang, Ning  and
      Vorobeychik, Yevgeniy",
    editor = "Liakata, Maria  and
      Moreira, Viviane P.  and
      Zhang, Jiajun  and
      Jurgens, David",
    booktitle = "Proceedings of the 64th Annual Meeting of the {A}ssociation for {C}omputational {L}inguistics (Volume 1: Long Papers)",
    month = jul,
    year = "2026",
    address = "San Diego, California, United States",
    publisher = "Association for Computational Linguistics",
    url = "https://aclanthology.org/2026.acl-long.519/",
    doi = "10.18653/v1/2026.acl-long.519",
    pages = "11307--11324",
    ISBN = "979-8-89176-390-6"
}

@misc{sander2026textseallocalizedllmwatermark,
      title={TextSeal: A Localized LLM Watermark for Provenance \& Distillation Protection}, 
      author={Tom Sander and Hongyan Chang and Tomáš Souček and Tuan Tran and Valeriu Lacatusu and Sylvestre-Alvise Rebuffi and Alexandre Mourachko and Surya Parimi and Christophe Ropers and Rashel Moritz and Vanessa Stark and Hady Elsahar and Pierre Fernandez},
      year={2026},
      eprint={2605.12456},
      archivePrefix={arXiv},
      primaryClass={cs.CR},
      url={https://arxiv.org/abs/2605.12456}, 
}

@InProceedings{Junhao_2025_ICCV,
    author    = {Junhao, Wei and Zhe, Yu and Sakuma, Jun},
    title     = {Disrupting Model Merging: A Parameter-Level Defense Without Sacrificing Accuracy},
    booktitle = {Proceedings of the IEEE/CVF International Conference on Computer Vision (ICCV)},
    month     = {October},
    year      = {2025},
    pages     = {17698-17707}
}

@misc{wang2025modelunmergingmakingmodels,
      title={Model Unmerging: Making Your Models Unmergeable for Secure Model Sharing}, 
      author={Zihao Wang and Enneng Yang and Lu Yin and Shiwei Liu and Li Shen},
      year={2025},
      eprint={2509.01548},
      archivePrefix={arXiv},
      primaryClass={cs.LG},
      url={https://arxiv.org/abs/2509.01548}, 
}

@inproceedings{jang2026making,
    title={Making Models Unmergeable via Scaling-Sensitive Loss Landscape},
    author={Minwoo Jang and Hoyoung Kim and Jabin Koo and Jungseul Ok},
    booktitle={Forty-third International Conference on Machine Learning},
    year={2026},
    url={https://openreview.net/forum?id=o9NThz6auq}
}

@inproceedings{yax2025phylolm,
    title={Phylo{LM}: Inferring the Phylogeny of Large Language Models and Predicting their Performances in Benchmarks},
    author={Nicolas Yax and Pierre-Yves Oudeyer and Stefano Palminteri},
    booktitle={The Thirteenth International Conference on Learning Representations},
    year={2025},
    url={https://openreview.net/forum?id=rTQNGQxm4K}
}

@inproceedings{foley-etal-2023-matching,
    title = "Matching Pairs: Attributing Fine-Tuned Models to their Pre-Trained Large Language Models",
    author = "Foley, Myles  and
      Rawat, Ambrish  and
      Lee, Taesung  and
      Hou, Yufang  and
      Picco, Gabriele  and
      Zizzo, Giulio",
    editor = "Rogers, Anna  and
      Boyd-Graber, Jordan  and
      Okazaki, Naoaki",
    booktitle = "Proceedings of the 61st Annual Meeting of the Association for Computational Linguistics (Volume 1: Long Papers)",
    month = jul,
    year = "2023",
    address = "Toronto, Canada",
    publisher = "Association for Computational Linguistics",
    url = "https://aclanthology.org/2023.acl-long.410/",
    doi = "10.18653/v1/2023.acl-long.410",
    pages = "7423--7442"
}

@inproceedings{Shao_2026,
   title={Reading Between the Lines: Towards Reliable Black-box LLM Fingerprinting via Zeroth-order Gradient Estimation},
   url={http://dx.doi.org/10.1145/3774904.3792196},
   DOI={10.1145/3774904.3792196},
   booktitle={Proceedings of the ACM Web Conference 2026},
   publisher={ACM},
   author={Shao, Shuo and Li, Yiming and Yao, Hongwei and Chen, Yifei and Yang, Yuchen and Qin, Zhan},
   year={2026},
   month=Apr, pages={2637–2648}
}

@inproceedings{nikolic2025model,
    title={Model Provenance Testing for Large Language Models},
    author={Ivica Nikolic and Teodora Baluta and Prateek Saxena},
    booktitle={The Thirty-ninth Annual Conference on Neural Information Processing Systems},
    year={2025},
    url={https://openreview.net/forum?id=Iy4cAXotrf}
}

@inproceedings{hu-etal-2026-fingerprinting,
    title = "Fingerprinting {LLM}s via Prompt Injection",
    author = "Hu, Yuepeng  and
      Jiang, Zhengyuan  and
      Li, Mengyuan  and
      Ahmed, Osama  and
      Huang, Zhicong  and
      Hong, Cheng  and
      Gong, Neil Zhenqiang",
    editor = "Liakata, Maria  and
      Moreira, Viviane P.  and
      Zhang, Jiajun  and
      Jurgens, David",
    booktitle = "Proceedings of the 64th Annual Meeting of the {A}ssociation for {C}omputational {L}inguistics (Volume 1: Long Papers)",
    month = jul,
    year = "2026",
    address = "San Diego, California, United States",
    publisher = "Association for Computational Linguistics",
    url = "https://aclanthology.org/2026.acl-long.541/",
    doi = "10.18653/v1/2026.acl-long.541",
    pages = "11795--11810",
    ISBN = "979-8-89176-390-6"
}

@INPROCEEDINGS{8806737,
  author={Juuti, Mika and Szyller, Sebastian and Marchal, Samuel and Asokan, N.},
  booktitle={2019 IEEE European Symposium on Security and Privacy (EuroS\&P)}, 
  title={PRADA: Protecting Against DNN Model Stealing Attacks}, 
  year={2019},
  volume={},
  number={},
  pages={512-527},
  doi={10.1109/EuroSP.2019.00044}}

@misc{muennighoff2025s1simpletesttimescaling,
      title={s1: Simple test-time scaling}, 
      author={Niklas Muennighoff and Zitong Yang and Weijia Shi and Xiang Lisa Li and Li Fei-Fei and Hannaneh Hajishirzi and Luke Zettlemoyer and Percy Liang and Emmanuel Candès and Tatsunori Hashimoto},
      year={2025},
      eprint={2501.19393},
      archivePrefix={arXiv},
      primaryClass={cs.CL},
      url={https://arxiv.org/abs/2501.19393}, 
}

@inproceedings{toshniwal2024openmathinstruct,
    title={OpenMathInstruct-2: Accelerating {AI} for Math with Massive Open-Source Instruction Data},
    author={Shubham Toshniwal and Wei Du and Ivan Moshkov and Branislav Kisacanin and Alexan Ayrapetyan and Igor Gitman},
    booktitle={The 4th Workshop on Mathematical Reasoning and AI at NeurIPS'24},
    year={2024},
    url={https://openreview.net/forum?id=l5FDMofecw}
}

@inproceedings{3666122.3668186,
    author = {K{\"o}pf, Andreas and Kilcher, Yannic and von R{\"u}tte, Dimitri and Anagnostidis, Sotiris and Tam, Zhi-Rui and Stevens, Keith and Barhoum, Abdullah and Duc, Nguyen Minh and Stanley, Oliver and Nagyfi, Rich{\'a}rd and ES, Shahul and Suri, Sameer and Glushkov, David and Dantuluri, Arnav and Maguire, Andrew and Schuhmann, Christoph and Nguyen, Huu and Mattick, Alexander},
    title = {OpenAssistant conversations - democratizing large language model alignment},
    year = {2023},
    publisher = {Curran Associates Inc.},
    address = {Red Hook, NY, USA},
    booktitle = {Proceedings of the 37th International Conference on Neural Information Processing Systems},
    articleno = {2064},
    numpages = {13},
    location = {New Orleans, LA, USA},
    series = {NIPS '23}
}

@misc{DatabricksBlog2023DollyV2,
    author    = {Mike Conover and Matt Hayes and Ankit Mathur and Jianwei Xie and Jun Wan and Sam Shah and Ali Ghodsi and Patrick Wendell and Matei Zaharia and Reynold Xin},
    title     = {Free Dolly: Introducing the World's First Truly Open Instruction-Tuned LLM},
    year      = {2023},
    url       = {https://www.databricks.com/blog/2023/04/12/dolly-first-open-commercially-viable-instruction-tuned-llm},
}

@article{10.1093/llc/17.3.267,
	author = {Burrows, John},
	journal = {Literary and Linguistic Computing},
	month = {09},
	number = {3},
	pages = {267-287},
	title = {`Delta': a Measure of Stylistic Difference and a Guide to Likely Authorship},
	volume = {17},
	year = {2002}}

@inproceedings{watson2022on,
    title={On the Importance of Difficulty Calibration in Membership Inference Attacks},
    author={Lauren Watson and Chuan Guo and Graham Cormode and Alexandre Sablayrolles},
    booktitle={International Conference on Learning Representations},
    year={2022},
    url={https://openreview.net/forum?id=3eIrli0TwQ}
}

@inproceedings{rosenberg92_icslp,
  title     = {{The use of cohort normalized scores for speaker verification}},
  author    = {Aaron E. Rosenberg and Joel DeLong and Chin-Hui Lee and Biing-Hwang Juang and Frank K. Soong},
  year      = {1992},
  booktitle = {{2nd International Conference on Spoken Language Processing (ICSLP 1992)}},
  pages     = {599--602},
  doi       = {10.21437/ICSLP.1992-176},
  issn      = {2958-1796},
}

@misc{olmo2026olmo3,
      title={Olmo 3}, 
      author={{OLMo Team}},
      year={2026},
      eprint={2512.13961},
      archivePrefix={arXiv},
      primaryClass={cs.CL},
      url={https://arxiv.org/abs/2512.13961}, 
}

@misc{he2025skyworkopenreasoner1,
      title={Skywork Open Reasoner 1 Technical Report}, 
      author={Jujie He and Jiacai Liu and Chris Yuhao Liu and Rui Yan and Chaojie Wang and Peng Cheng and Xiaoyu Zhang and Fuxiang Zhang and Jiacheng Xu and Wei Shen and Siyuan Li and Liang Zeng and Tianwen Wei and Cheng Cheng and Bo An and Yang Liu and Yahui Zhou},
      year={2025},
      eprint={2505.22312},
      archivePrefix={arXiv},
      primaryClass={cs.LG},
      url={https://arxiv.org/abs/2505.22312}, 
}

@misc{chen2025acereasonnemotronadvancingmathcode,
      title={AceReason-Nemotron: Advancing Math and Code Reasoning through Reinforcement Learning}, 
      author={Yang Chen and Zhuolin Yang and Zihan Liu and Chankyu Lee and Peng Xu and Mohammad Shoeybi and Bryan Catanzaro and Wei Ping},
      year={2025},
      eprint={2505.16400},
      archivePrefix={arXiv},
      primaryClass={cs.LG},
      url={https://arxiv.org/abs/2505.16400}, 
}

@inproceedings{
    walsh2025,
    title={2 {OLM}o 2 Furious ({COLM}{\textquoteright}s Version)},
    author={Evan Pete Walsh and others},
    booktitle={Second Conference on Language Modeling},
    year={2025},
    url={https://openreview.net/forum?id=2ezugTT9kU}
}

@misc{sky_t1_2025,
  author       = {{NovaSky Team}},
  title        = {Sky-T1: Train your own O1 preview model within \$450},
  howpublished = {https://novasky-ai.github.io/posts/sky-t1},
  year         = {2025}
}

@misc{sudalairaj2024lablargescalealignmentchatbots,
      title={LAB: Large-Scale Alignment for ChatBots}, 
      author={Shivchander Sudalairaj and Abhishek Bhandwaldar and Aldo Pareja and Kai Xu and David D. Cox and Akash Srivastava},
      year={2024},
      eprint={2403.01081},
      archivePrefix={arXiv},
      primaryClass={cs.CL},
      url={https://arxiv.org/abs/2403.01081}, 
}

@misc{zhang2026chainingevidencerobustreinforcement,
      title={Chaining the Evidence: Robust Reinforcement Learning for Deep Search Agents with Citation-Aware Rubric Rewards}, 
      author={Jiajie Zhang and Xin Lv and Ling Feng and Lei Hou and Juanzi Li},
      year={2026},
      eprint={2601.06021},
      archivePrefix={arXiv},
      primaryClass={cs.CL},
      url={https://arxiv.org/abs/2601.06021}, 
}

@misc{ye2025dream7bdiffusionlarge,
      title={Dream 7B: Diffusion Large Language Models}, 
      author={Jiacheng Ye and Zhihui Xie and Lin Zheng and Jiahui Gao and Zirui Wu and Xin Jiang and Zhenguo Li and Lingpeng Kong},
      year={2025},
      eprint={2508.15487},
      archivePrefix={arXiv},
      primaryClass={cs.CL},
      url={https://arxiv.org/abs/2508.15487}, 
}

@inproceedings{
    chen2026dparallel,
    title={dParallel: Learnable Parallel Decoding for d{LLM}s},
    author={Zigeng Chen and Gongfan Fang and Xinyin Ma and Ruonan Yu and Xinchao Wang},
    booktitle={The Fourteenth International Conference on Learning Representations},
    year={2026},
    url={https://openreview.net/forum?id=hVOcstAURb}
}

@inproceedings{
    junger2026selfaugmenting,
    title={Self-Augmenting Retrieval for Diffusion Language Models},
    author={Paul J{\"u}nger and Justin Lovelace and Linxi Zhao and Dongyoung Go and Kilian Q Weinberger},
    booktitle={Forty-third International Conference on Machine Learning},
    year={2026},
    url={https://openreview.net/forum?id=uiE6BIx2vC}
}

@inproceedings{
    qian2026dllm,
    title={d3{LLM}: Ultra-Fast Diffusion {LLM} using Pseudo-Trajectory Distillation},
    author={Yu-Yang Qian and Junda Su and Lanxiang Hu and Peiyuan Zhang and Zhijie Deng and Peng Zhao and Hao Zhang},
    booktitle={Forty-third International Conference on Machine Learning},
    year={2026},
    url={https://openreview.net/forum?id=rzBAQT2Fkg}
}

@misc{du2026r2dllmacceleratingdiffusionlarge,
      title={$R^2$-dLLM: Accelerating Diffusion Large Language Models via Spatio-Temporal Redundancy Reduction}, 
      author={Zhenbang Du and Kejing Xia and Xinrui Zhong and Yonggan Fu and Nicolai Oswald and Binfei Ji and Brucek Khailany and Pavlo Molchanov and Yingyan Lin},
      year={2026},
      eprint={2604.18995},
      archivePrefix={arXiv},
      primaryClass={cs.CL},
      url={https://arxiv.org/abs/2604.18995}, 
}

@article{qwen3,
    title={Qwen3 Technical Report}, 
    author={An Yang and others},
    journal = {arXiv preprint arXiv:2505.09388},
    year={2025}
}

@article{qwen2.5,
    title   = {Qwen2.5 Technical Report}, 
    author={An Yang and others},
    journal = {arXiv preprint arXiv:2412.15115},
    year    = {2024}
}

@misc{yang2024qwen25mathtechnicalreportmathematical,
      title={Qwen2.5-Math Technical Report: Toward Mathematical Expert Model via Self-Improvement}, 
      author={An Yang and Beichen Zhang and Binyuan Hui and Bofei Gao and Bowen Yu and Chengpeng Li and Dayiheng Liu and Jianhong Tu and Jingren Zhou and Junyang Lin and Keming Lu and Mingfeng Xue and Runji Lin and Tianyu Liu and Xingzhang Ren and Zhenru Zhang},
      year={2024},
      eprint={2409.12122},
      archivePrefix={arXiv},
      primaryClass={cs.CL},
      url={https://arxiv.org/abs/2409.12122}, 
}

@misc{grattafiori2024llama3herdmodels,
      title={The Llama 3 Herd of Models}, 
      author={Aaron Grattafiori and others},
      year={2024},
      eprint={2407.21783},
      archivePrefix={arXiv},
      primaryClass={cs.AI},
      url={https://arxiv.org/abs/2407.21783}, 
}

@misc{jiang2023mistral7b,
      title={Mistral 7B}, 
      author={Albert Q. Jiang and Alexandre Sablayrolles and Arthur Mensch and Chris Bamford and Devendra Singh Chaplot and Diego de las Casas and Florian Bressand and Gianna Lengyel and Guillaume Lample and Lucile Saulnier and Lélio Renard Lavaud and Marie-Anne Lachaux and Pierre Stock and Teven Le Scao and Thibaut Lavril and Thomas Wang and Timothée Lacroix and William El Sayed},
      year={2023},
      eprint={2310.06825},
      archivePrefix={arXiv},
      primaryClass={cs.CL},
      url={https://arxiv.org/abs/2310.06825}, 
}

@article{deepseekai2025r1,
	author = {Guo, Daya and others},
	journal = {Nature},
	number = {8081},
	pages = {633--638},
	title = {DeepSeek-R1 incentivizes reasoning in LLMs through reinforcement learning},
	volume = {645},
	year = {2025}}

@misc{jiang2024mixtralexperts,
      title={Mixtral of Experts}, 
      author={Albert Q. Jiang and others},
      year={2024},
      eprint={2401.04088},
      archivePrefix={arXiv},
      primaryClass={cs.LG},
      url={https://arxiv.org/abs/2401.04088}, 
}

@misc{gemmateam2025gemma3technicalreport,
      title={Gemma 3 Technical Report}, 
      author={{Gemma Team} and others},
      year={2025},
      eprint={2503.19786},
      archivePrefix={arXiv},
      primaryClass={cs.CL},
      url={https://arxiv.org/abs/2503.19786}, 
}

@misc{openai2024gpt4ocard,
      title={GPT-4o System Card},
      author = {{OpenAI}},
      year={2024},
      eprint={2410.21276},
      archivePrefix={arXiv},
      primaryClass={cs.CL},
      url={https://arxiv.org/abs/2410.21276}, 
}

@misc{openai2025gptoss120bgptoss20bmodel,
      title={gpt-oss-120b \& gpt-oss-20b Model Card}, 
      author = {{OpenAI}},
      year={2025},
      eprint={2508.10925},
      archivePrefix={arXiv},
      primaryClass={cs.CL},
      url={https://arxiv.org/abs/2508.10925}, 
}

@misc{team2025glm45agenticreasoningcoding,
      title={GLM-4.5: Agentic, Reasoning, and Coding (ARC) Foundation Models}, 
      author={{GLM-4.5 Team} and others},
      year={2025},
      eprint={2508.06471},
      archivePrefix={arXiv},
      primaryClass={cs.CL},
      url={https://arxiv.org/abs/2508.06471}, 
}

@inproceedings{3666122.3668142,
    author = {Zheng, Lianmin and Chiang, Wei-Lin and Sheng, Ying and Zhuang, Siyuan and Wu, Zhanghao and Zhuang, Yonghao and Lin, Zi and Li, Zhuohan and Li, Dacheng and Xing, Eric P. and Zhang, Hao and Gonzalez, Joseph E. and Stoica, Ion},
    title = {Judging LLM-as-a-judge with MT-bench and Chatbot Arena},
    year = {2023},
    publisher = {Curran Associates Inc.},
    address = {Red Hook, NY, USA},
    booktitle = {Proceedings of the 37th International Conference on Neural Information Processing Systems},
    articleno = {2020},
    numpages = {29},
    location = {New Orleans, LA, USA},
    series = {NIPS '23}
}

@misc{qwq-32b-preview,
    title = {QwQ: Reflect Deeply on the Boundaries of the Unknown},
    url = {https://qwenlm.github.io/blog/qwq-32b-preview/},
    author = {{Qwen Team}},
    month = {November},
    year = {2024}
}

@misc{openai2024openaio1card,
      title={OpenAI o1 System Card}, 
      author = {{OpenAI}},
      year={2024},
      eprint={2412.16720},
      archivePrefix={arXiv},
      primaryClass={cs.AI},
      url={https://arxiv.org/abs/2412.16720}, 
}

@misc{Falcon3,
    title = {The Falcon 3 Family of Open Models},
    url = {https://huggingface.co/blog/falcon3},
    author = {{Falcon-LLM Team}},
    month = {December},
    year = {2024}
}

@misc{brown2020languagemodelsfewshotlearners,
      title={Language Models are Few-Shot Learners}, 
      author={Tom B. Brown and Benjamin Mann and Nick Ryder and Melanie Subbiah and Jared Kaplan and Prafulla Dhariwal and Arvind Neelakantan and Pranav Shyam and Girish Sastry and Amanda Askell and Sandhini Agarwal and Ariel Herbert-Voss and Gretchen Krueger and Tom Henighan and Rewon Child and Aditya Ramesh and Daniel M. Ziegler and Jeffrey Wu and Clemens Winter and Christopher Hesse and Mark Chen and Eric Sigler and Mateusz Litwin and Scott Gray and Benjamin Chess and Jack Clark and Christopher Berner and Sam McCandlish and Alec Radford and Ilya Sutskever and Dario Amodei},
      year={2020},
      eprint={2005.14165},
      archivePrefix={arXiv},
      primaryClass={cs.CL},
      url={https://arxiv.org/abs/2005.14165}, 
}

@misc{muennighoff2024olmoeopenmixtureofexpertslanguage,
      title={OLMoE: Open Mixture-of-Experts Language Models}, 
      author={Niklas Muennighoff and Luca Soldaini and Dirk Groeneveld and Kyle Lo and Jacob Morrison and Sewon Min and Weijia Shi and Pete Walsh and Oyvind Tafjord and Nathan Lambert and Yuling Gu and Shane Arora and Akshita Bhagia and Dustin Schwenk and David Wadden and Alexander Wettig and Binyuan Hui and Tim Dettmers and Douwe Kiela and Ali Farhadi and Noah A. Smith and Pang Wei Koh and Amanpreet Singh and Hannaneh Hajishirzi},
      year={2025},
      eprint={2409.02060},
      archivePrefix={arXiv},
      primaryClass={cs.CL},
      url={https://arxiv.org/abs/2409.02060}, 
}

@misc{sky-t1-7b,
  author       = {{NovaSky Team}},
  title        = {Unlocking the Potential of Reinforcement Learning in Improving Reasoning Models},
  howpublished = {https://novasky-ai.github.io/posts/sky-t1-7b},
  year         = {2025}
}

@misc{deepscaler2025,
  title={DeepScaleR: Surpassing O1-Preview with a 1.5B Model by Scaling RL},
  author={Michael Luo and Sijun Tan and Justin Wong and Xiaoxiang Shi and William Y. Tang and Manan Roongta and Colin Cai and Jeffrey Luo and Li Erran Li and Raluca Ada Popa and Ion Stoica},
  year={2025}
}

@misc{zyphra2025ZR1,
  title     = {ZR1-1.5B: A small but powerful reasoning model for math and code},
  author    = {Zyphra},
  year      = {2025},
}

@article{liu2025prorl,
  title   = {{ProRL}: Prolonged Reinforcement Learning Expands Reasoning Boundaries in Large Language Models},
  author  = {Liu, Mingjie and Diao, Shizhe and Lu, Ximing and Hu, Jian and Dong, Xin and Choi, Yejin and Kautz, Jan and Dong, Yi},
  journal = {arXiv preprint arXiv:2505.24864},
  year    = {2025},
  url     = {https://arxiv.org/abs/2505.24864}
}

@article{li2025drpo,
  title={DRPO: Efficient Reasoning via Decoupled Reward Policy Optimization},
  author={Li, Gang and Chen, Yan and Lin, Ming and Yang, Tianbao},
  journal={arXiv preprint arXiv:2510.04474},
  year={2025}
}

@article{li2025disco,
  title={DisCO: Reinforcing Large Reasoning Models with Discriminative Constrained Optimization},
  author={Li, Gang and Lin, Ming and Galanti, Tomer and Tu, Zhengzhong and Yang, Tianbao},
  journal={arXiv preprint arXiv:2505.12366},
  year={2025}
}

@article{lyu2025exploring,
  title={Exploring the Limit of Outcome Reward for Learning Mathematical Reasoning},
  author={Lyu, Chengqi and Gao, Songyang and Gu, Yuzhe and Zhang, Wenwei and Gao, Jianfei and Liu, Kuikun and Wang, Ziyi and Li, Shuaibin and Zhao, Qian and Huang, Haian and others},
  journal={arXiv preprint arXiv:2502.06781},
  year={2025}
}

@article{liu2025spiral,
  title={SPIRAL: Self-Play on Zero-Sum Games Incentivizes Reasoning via Multi-Agent Multi-Turn Reinforcement Learning},
  author={Liu, Bo and Guertler, Leon and Yu, Simon and Liu, Zichen and Qi, Penghui and Balcells, Daniel and Liu, Mickel and Tan, Cheston and Shi, Weiyan and Lin, Min and Lee, Wee Sun and Jaques, Natasha},
  journal={arXiv preprint arXiv:2506.24119},
  year={2025},
  url={https://arxiv.org/abs/2506.24119}
}

@misc{lightr1proj,
  title  = {{Light-R1}: Curriculum {SFT}, {DPO}, and {RL} for Long {CoT} from Scratch and Beyond},
  author = {Wen, Liang and Cai, Yunke and Xiao, Fenrui and He, Xin and An, Qi and Duan, Zhenyu and Du, Yimin and Liu, Junchen and Tang, Lifu and Lv, Xiaowei and Zou, Haosheng and Deng, Yongchao and Jia, Shousheng and Zhang, Xiangzheng},
  year   = {2025},
  url    = {https://github.com/Qihoo360/Light-R1}
}

@misc{yang2025thinkingpreferenceoptimization,
      title={Thinking Preference Optimization}, 
      author={Wang Yang and Hongye Jin and Jingfeng Yang and Vipin Chaudhary and Xiaotian Han},
      year={2025},
      eprint={2502.13173},
      archivePrefix={arXiv},
      primaryClass={cs.LG},
      url={https://arxiv.org/abs/2502.13173}, 
}

@article{marcus-etal-1993-building,
    title = "Building a Large Annotated Corpus of {E}nglish: The {P}enn {T}reebank",
    author = "Marcus, Mitchell P.  and
      Santorini, Beatrice  and
      Marcinkiewicz, Mary Ann",
    editor = "Hirschberg, Julia",
    journal = "Computational Linguistics",
    volume = "19",
    number = "2",
    year = "1993",
    address = "Cambridge, MA",
    publisher = "MIT Press",
    url = "https://aclanthology.org/J93-2004/",
    pages = "313--330"
}

@inproceedings{bird-loper-2004-nltk,
    title = "{NLTK}: The Natural Language Toolkit",
    author = "Bird, Steven  and
      Loper, Edward",
    booktitle = "Proceedings of the {ACL} Interactive Poster and Demonstration Sessions",
    month = jul,
    year = "2004",
    address = "Barcelona, Spain",
    publisher = "Association for Computational Linguistics",
    url = "https://aclanthology.org/P04-3031/",
    pages = "214--217"
}

@article{zhou2023instruction,
  title={Instruction-Following Evaluation for Large Language Models},
  author={Zhou, Jeffrey and Lu, Tianjian and Mishra, Swaroop and Brahma, Siddhartha and Basu, Sujoy and Luan, Yi and Zhou, Denny and Hou, Le},
  journal={arXiv preprint arXiv:2311.07911},
  year={2023}
}

@article{cobbe2021gsm8k,
  title={Training Verifiers to Solve Math Word Problems},
  author={Cobbe, Karl and Kosaraju, Vineet and Bavarian, Mohammad and Chen, Mark and Jun, Heewoo and Kaiser, Lukasz and Plappert, Matthias and Tworek, Jerry and Hilton, Jacob and Nakano, Reiichiro and Hesse, Christopher and Schulman, John},
  journal={arXiv preprint arXiv:2110.14168},
  year={2021}
}

@article{hendrycksmath2021,
  title={Measuring Mathematical Problem Solving With the MATH Dataset},
  author={Dan Hendrycks and Collin Burns and Saurav Kadavath and Akul Arora and Steven Basart and Eric Tang and Dawn Song and Jacob Steinhardt},
  journal={NeurIPS},
  year={2021}
}

@article{hoeffding1963probability,
 ISSN = {01621459, 1537274X},
 URL = {http://www.jstor.org/stable/2282952},
 author = {Wassily Hoeffding},
 journal = {Journal of the American Statistical Association},
 number = {301},
 pages = {13--30},
 publisher = {[American Statistical Association, Taylor & Francis, Ltd.]},
 title = {Probability Inequalities for Sums of Bounded Random Variables},
 volume = {58},
 year = {1963}
}

@misc{openai2026adversarialdistillation,
  author = {{OpenAI}},
  title = {Update to the {U.S.} House Select Committee on Strategic Competition between the {United States} and the {Chinese Communist Party}},
  year = {2026}, month = feb, howpublished = {Letter to the U.S. House Select Committee},
  url = {https://cdn.openai.com/pdf/045aa967-ee96-4a09-94ee-3098ddf6db2c/OpenAI-US-House-Select-Cmte-Update-%5B021226%5D.pdf},
  note = {Dated 12 February 2026. Accessed 14 August 2026}}

@misc{openai2026terms,
  author = {{OpenAI}}, title = {Terms of Use},
  year = {2026}, howpublished = {OpenAI Policies},
  url = {https://openai.com/policies/row-terms-of-use/},
  note = {Accessed 14 August 2026}}

@misc{anthropic2026distillationattacks,
  author = {{Anthropic}}, title = {Detecting and Preventing Distillation Attacks},
  year = {2026}, month = feb, howpublished = {Anthropic News},
  url = {https://www.anthropic.com/news/detecting-and-preventing-distillation-attacks},
  note = {Published 23 February 2026. Accessed 14 August 2026}}

@misc{anthropic2025commercialterms,
  author       = {{Anthropic}},
  title        = {Commercial Terms of Service},
  year         = {2025},
  month        = jun,
  howpublished = {Anthropic Legal},
  url          = {https://www.anthropic.com/legal/commercial-terms},
  note         = {Effective 17 June 2025. Accessed 14 August 2026}
}

@misc{mistral_medium_3,
  author       = {{Mistral AI}},
  title        = {Medium is the new large: Introducing Mistral Medium 3},
  howpublished = {\url{https://mistral.ai/news/mistral-medium-3/}},
  year         = {2025},
}

@misc{upstage2026solarpro4,
  author       = {{Upstage}},
  title        = {Solar Pro 4: The Agentic Model That Finishes the Job},
  url          = {https://www.upstage.ai/blog/en/solar-pro-4},
  year         = {2026},
}

@misc{claude35,
    title={Claude 3.5 Sonnet Model Card Addendum},
    author={{Anthropic}},
    year={2024},
    url={https://www-cdn.anthropic.com/fed9cc193a14b84131812372d8d5857f8f304c52/Model_Card_Claude_3_Addendum.pdf}
}

@misc{claude4,
    title={System Card: Claude Opus 4 \& Claude Sonnet 4},
    author={{Anthropic}},
    year={2025},
    url={https://www-cdn.anthropic.com/4263b940cabb546aa0e3283f35b686f4f3b2ff47.pdf}
}

@misc{opus45,
  author       = {{Anthropic}},
  title        = {Claude Opus 4.5},
  year         = {2025},
  url          = {https://www.anthropic.com/news/claude-opus-4-5},
}

@misc{opus46,
  author       = {{Anthropic}},
  title        = {Claude Opus 4.6},
  year         = {2026},
  url          = {https://www.anthropic.com/news/claude-opus-4-6},
}

@misc{openai2025openaio3card,
      title={OpenAI o3 and o4-mini System Card}, 
      author = {{OpenAI}},
      year={2025},
      url={https://cdn.openai.com/pdf/2221c875-02dc-4789-800b-e7758f3722c1/o3-and-o4-mini-system-card.pdf}, 
}
\bibliographystyle{iclr2027_conference}

\clearpage

\appendix

\startcontents[appendices]

\section*{Appendix Contents}
\printcontents[appendices]{}{1}{%
  \setcounter{tocdepth}{2}%
}

\clearpage

\section{Qualitative Examples of the \texorpdfstring{PoS \(n\)-gram}{PoS n-gram} Signal}
\label{app:ngram-examples}
\begingroup

\begin{table}[t]
\centering
\setlength{\tabcolsep}{4pt}
\caption{\textbf{Candidate signatures (OMI).} The two highest-ranked non-overlapping PoS $n$-grams of each candidate against the mean of the other eighteen, each with one matching span.}
\label{tab:qual-teachers}
\begin{tabular}{@{}p{0.32\linewidth}p{0.29\linewidth}p{0.35\linewidth}@{}}
\toprule
Candidate & Signature & Example \\
\midrule
GPT-OSS-120B & . RB NN : & ``. Thus answer :'' \\
\citep{openai2025gptoss120bgptoss20bmodel} & . NNP JJ & ``. Need third'' \\
\addlinespace
Gemma-3-27B-it & \$ . RB \$ & ``\$ . Then \$'' \\
\citep{gemmateam2025gemma3technicalreport} & \$ . PRP VBP & ``\$ . We want'' \\
\addlinespace
Llama-3.3-70B-Instruct & . \# \# NNP CD & ``. \# \# Step 2'' \\
\citep{grattafiori2024llama3herdmodels} & CD : VB DT NN & ``2 : Apply the condition'' \\
\addlinespace
Qwen-3-8B & . NNP , VB PRP & ``. Hmm , let me'' \\
\citep{qwen3} & VB DT . NNP , & ``approach this . First ,'' \\
\addlinespace
Claude-Sonnet-3.5 & . CD ) & ``. 2 )'' \\
\citep{claude35} & CD \$ JJ \$ & ``8 \$ * \$'' \\
\addlinespace
Claude-Opus-4.5 & \$ VB PRP VB & ``\$ Let me compute'' \\
\citep{opus45} & \$ . VB PRP & ``\$ . Let me'' \\
\addlinespace
Claude-Opus-4.6 & \# \# VBG & ``\# \# Setting'' \\
\citep{opus46} & \$ \$ \# & ``\$ \$ \#'' \\
\addlinespace
DeepSeek-R1 & NNP , NN . & ``x=0 , y=1 .'' \\
\citep{deepseekai2025r1} & . NNP . & ``. Hmmm .'' \\
\addlinespace
QwQ-32B-Preview & . VB DT NN , & ``. Wait a minute ,'' \\
\citep{qwq-32b-preview} & RB PRP VBP VBN DT & ``Maybe I have misidentified the'' \\
\addlinespace
o1 & NNP PRP VBP VBG & ``* I 'm exploring'' \\
\citep{openai2024openaio1card} & VBP JJ VBG IN & ``' m working with'' \\
\addlinespace
o3 & NNS VBP NN PRP VBP & ``conditions * * I '{}'' \\
\citep{openai2025openaio3card} & PRP VBP JJ VBG & ``I ' m considering'' \\
\addlinespace
Qwen3-235B-A22B-Thinking & JJ CC VBG NN & ``complex or challenging question'' \\
\citep{qwen3} & VB IN PRP . NNP & ``think about it . Well'' \\
\addlinespace
GPT-4o & VBZ : JJ NNP NNP & ``is : \textbackslash{} [ \textbackslash{}frac'' \\
\citep{openai2024gpt4ocard} & CD NNP NNP NNP , & ``32c \textbackslash{} ] So ,'' \\
\addlinespace
GLM-4.6 & NN : \textasciigrave{}\textasciigrave{} & ``Wait : \textasciigrave{}\textasciigrave{}'' \\
\citep{team2025glm45agenticreasoningcoding} & ) . RB JJ & ``) . Consequently *'' \\
\addlinespace
Mixtral-8x7B-Instruct & . CD . RB , & ``. 4 . Similarly ,'' \\
\citep{jiang2024mixtralexperts} & . RB VBZ DT NN & ``. Here 's the reasoning'' \\
\addlinespace
Llama-3-70B-Instruct & VBP PRP VBZ JJ . & ``hope it is correct .'' \\
\citep{grattafiori2024llama3herdmodels} & . PRP VBP PRP VBZ & ``. I hope it is'' \\
\addlinespace
GPT-3.5-turbo & PRP VBP IN \$ & ``we note that \$'' \\
\citep{brown2020languagemodelsfewshotlearners} & IN \$ CD \$ & ``Since \$ 1111111100 \$'' \\
\addlinespace
Qwen3-235B-A22B & \# \# \# NNP CD & ``\# \# \# Step 1'' \\
\citep{qwen3} & . : : \# \# & ``. -{}- - \# \#'' \\
\addlinespace
Qwen3-32B & \$ \$ : & ``\$ \$ -{}-'' \\
\citep{qwen3} & : \# \# \# NNP & ``- \# \# \# Key'' \\
\bottomrule
\end{tabular}
\end{table}

\begin{table}[t]
\centering
\setlength{\tabcolsep}{4pt}
\caption{\textbf{Candidate signatures (s1).} The two highest-ranked non-overlapping PoS $n$-grams of each candidate against the mean of the other eighteen, each with one matching span.}
\label{tab:qual-teachers-s1}
\begin{tabular}{@{}p{0.32\linewidth}p{0.29\linewidth}p{0.35\linewidth}@{}}
\toprule
Candidate & Signature & Example \\
\midrule
GPT-OSS-120B & . RB JJ NN & ``. So -a cos'' \\
\citep{openai2025gptoss120bgptoss20bmodel} & ) . NNP NN & ``\} . Define c'' \\
\addlinespace
Gemma-3-27B-it & \$ . VB \$ & ``\$ . Let \$'' \\
\citep{gemmateam2025gemma3technicalreport} & \$ . RB \$ & ``\$ . Then \$'' \\
\addlinespace
Llama-3.3-70B-Instruct & . \# \# NNP CD & ``. \# \# Step 2'' \\
\citep{grattafiori2024llama3herdmodels} & CD : VB DT NN & ``8 : Use the fact'' \\
\addlinespace
Qwen-3-8B & . NNP , VB PRP & ``. First , let me'' \\
\citep{qwen3} & . VB PRP VB . & ``? Let me think .'' \\
\addlinespace
Claude-Sonnet-3.5 & . CD ) & ``. 1 )'' \\
\citep{claude35} & ) \$ JJ \$ & ``\} \$ * \$'' \\
\addlinespace
Claude-Opus-4.5 & CD \$ : \$ CD & ``1 \$ : \$ 10\textasciicircum{}1'' \\
\citep{opus45} & \$ VB PRP & ``\$ Let me'' \\
\addlinespace
Claude-Opus-4.6 & . RB PRP VBP VBG & ``. Now I 'm looking'' \\
\citep{opus46} & . \# \# VBG & ``. \# \# Setting'' \\
\addlinespace
DeepSeek-R1 & NNP , RB PRP VBP & ``Alright , so I have'' \\
\citep{deepseekai2025r1} & , VB PRP VB TO & ``, let me try to'' \\
\addlinespace
QwQ-32B-Preview & RB PRP VBP VBN DT & ``So I 've got this'' \\
\citep{qwq-32b-preview} & VBP VBN DT NN RB & ``'ve got this problem here'' \\
\addlinespace
o1 & PRP VBP JJ VBG & ``I ' m considering'' \\
\citep{openai2024openaio1card} & NNP PRP VBP VBG & ``* I 'm thinking'' \\
\addlinespace
o3 & NNP NNP PRP VBP TO & ``* * I need to'' \\
\citep{openai2025openaio3card} & VBP NN PRP VBP & ``* * I need'' \\
\addlinespace
Qwen3-235B-A22B-Thinking & DT VBZ DT JJ CC & ``This is a complex or'' \\
\citep{qwen3} & CC VBG NN , CC & ``or challenging question , and'' \\
\addlinespace
GPT-4o & : JJ NNP NNP ( & ``: \textbackslash{} [ \textbackslash{}cos ('' \\
\citep{openai2024gpt4ocard} & , CD ) NN ) & ``, 0 ) \textbackslash{} )'' \\
\addlinespace
GLM-4.6 & . \# \# \# & ``. \# \# \#'' \\
\citep{team2025glm45agenticreasoningcoding} & . IN PRP VBP TO & ``. So we need to'' \\
\addlinespace
Mixtral-8x7B-Instruct & PRP VBP \$ \$ & ``we have \$ \$'' \\
\citep{jiang2024mixtralexperts} & VB ( NN NN ) & ``\textbackslash{}end \{ align * \}'' \\
\addlinespace
Llama-3-70B-Instruct & VBP PRP VBZ JJ . & ``hope it is correct .'' \\
\citep{grattafiori2024llama3herdmodels} & . PRP VBP PRP VBZ & ``. I hope it is'' \\
\addlinespace
GPT-3.5-turbo & \$ . NNP , DT & ``\$ . Thus , the'' \\
\citep{brown2020languagemodelsfewshotlearners} & JJ ( CD ) \$ & ``\textbackslash{}boxed \{ 10800 \} \$'' \\
\addlinespace
Qwen3-235B-A22B & : \# \# \# NNP & ``- \# \# \# Step'' \\
\citep{qwen3} & . : : \# \# & ``. -{}- - \# \#'' \\
\addlinespace
Qwen3-32B & VB PRP VB NN & ``Let me place point'' \\
\citep{qwen3} & CD . VB PRP & ``2x . Let me'' \\
\bottomrule
\end{tabular}
\end{table}

\definecolor{sigGptA}{HTML}{77AADD}
\definecolor{sigGptB}{HTML}{99DDFF}
\definecolor{sigGptC}{HTML}{DDDDDD}
\definecolor{sigLlaA}{HTML}{44BB99}
\definecolor{sigQwnA}{HTML}{FFAABB}
\definecolor{sigGemA}{HTML}{EEDD88}
\definecolor{sigGemC}{HTML}{BBCC33}
\definecolor{sigGemD}{HTML}{AAAA00}

\providecommand{\sighl}[2]{}
\renewcommand{\sighl}[2]{%
  {\setlength{\fboxsep}{0.5pt}%
   \colorbox{#1}{\vphantom{Ag}#2}}%
}

\providecommand{\sigcut}{}
\renewcommand{\sigcut}{\textdagger}

\providecommand{\sigkey}[4]{}
\renewcommand{\sigkey}[4]{%
  \par\noindent
  \sighl{#1}{#2}\enspace
  e.g.\ ``#3''%
  \if\relax\detokenize{#4}\relax
  \else
    \par Related patterns: #4%
  \fi
  \par\smallskip
}

\tcbset{
  fonttitle={},
  fontupper={},
  fontlower={}
}

\paragraph{Signature Construction}

A model's signature comprises PoS $n$-grams it uses more often than its peers. We process the first 2,000 characters using NLTK Penn Treebank tags~\citep{bird-loper-2004-nltk,marcus-etal-1993-building}, retain markup, and count $3$--$5$-grams. Distinctiveness is $\ln\big((p_{kj}+\epsilon)/(\bar p_{-k,j}+\epsilon)\big)$, where $\epsilon=10^{-5}$, $p_{kj}$ is $n$-gram $j$'s share in model $k$'s profile, and $\bar p_{-k,j}$ is its mean share among the other models. We retain patterns appearing in at least 20 of 200 responses and report the top ten per probe set. These unfiltered lists are descriptive and do not enter SCOUT's score.

\paragraph{Candidate Signatures}

Tables~\ref{tab:qual-teachers} and~\ref{tab:qual-teachers-s1} show two non-overlapping patterns from each list. The shared-prompt examples below use the four controlled candidates. Table~\ref{tab:qual-teachers} captures recurring forms: step headings (Llama-3.3-70B-Instruct, Qwen3-235B-A22B), first-person reasoning (Qwen-3-8B, DeepSeek-R1, QwQ-32B-Preview), omitted subjects and articles (GPT-OSS-120B), inline-math definitions (Gemma-3-27B-it), and descriptions of ongoing reasoning (o1, o3). Repeated phrases also matter: Qwen3-235B-A22B-Thinking repeats one opening sentence in 126/200 OMI and 145/200 s1 responses; Llama-3-70B-Instruct uses ``I hope it is correct'' in 140/200 and 117/200. The candidate responses contain no GPT-OSS channel labels or Qwen \texttt{<think>} tags within the analyzed prefix. All ten leading patterns for Llama-3.3 and Qwen occur in variants of ``\#\# Step \(k\):'' and ``let me,'' respectively, so each model uses a single highlight color below.

\paragraph{Same-Prompt Response Examples}

For each prompt \(q_i\), we compute the mean pairwise Jensen--Shannon divergence among the four candidates' normalized signature vectors, restricted to patterns appearing at least five times across their responses. We show high-contrast examples from OMI and s1 that require no specialized background. All four candidate-versus-rest contrasts are positive on \(161/200\) OMI prompts and \(140/200\) s1 prompts, so the examples illustrate a common direction rather than typical contrast magnitude.

Line breaks are adjusted for readability. Wording and markup are otherwise preserved, with quotation marks typeset in the manuscript font. Leading or trailing period and comma tokens are left unhighlighted. \sigcut{} marks the end of the first \(2{,}000\) characters of the original response.

\Needspace{26\baselineskip}
\begin{tcolorbox}[title={OMI Example (prompt 140)}]
Ava has 18 vases with nine flowers in each vase. If 1/3 of them have 3 wilted flowers each, how many flowers are still fresh?
\end{tcolorbox}

\begin{tcolorbox}[breakable, title={OMI / GPT-OSS-120B},
  title after break={OMI / GPT-OSS-120B (continued)}, lines before break=16,
  before upper={\raggedright\setlength{\parindent}{0pt}},
  before lower={\raggedright\setlength{\parindent}{0pt}\setlength{\parskip}{0pt}}]
\sigkey{sigGptA}{. NNP JJ}{. Need third}{NN . NNP NN, . NNP JJ NN, ) NN . NNP}
\sigkey{sigGptB}{RB NN :}{Thus answer :}{. RB NN, . RB NN :, . RB JJ NN, NN . RB JJ, . NN :}
\tcblower
We need to parse: Ava has 18 vases with nine flowers in each vase. \sighl{sigGptA}{So total} flowers = 18 * 9 = 162. If 1/3 of them have 3 wilted flowers each, ``them'' likely refers to vases? It says ``1/3 of them have 3 wilted flowers each''. Could be 1/3 of the vases have 3 wilted flowers each. So number of vases with wilted flowers = 1/3 * 18 = 6 vases. Each of those 6 vases has 3 wilted flowers. So total wilted flowers = 6 * 3 = 18 wilted flowers. All other flowers are fresh. Total flowers = 162, wilted = 18, fresh = 144. \sighl{sigGptB}{Thus answer:} 144 fresh flowers. \sighl{sigGptA}{Check alternative interpretation}: maybe 1/3 of the flowers have wilted - but phrase ``them have 3 wilted flowers each'' suggests vases. So answer 144. \sighl{sigGptA}{Thus final answer}: 144 fresh flowers.First, find the total number of flowers:

\textbackslash{}[

18\textbackslash{}text\{ vases\}\textbackslash{}times 9\textbackslash{}text\{ flowers per vase\}=162\textbackslash{}text\{ flowers\}

\textbackslash{}]

Next, determine how many vases contain wilted flowers. One-third of the vases are affected:

\textbackslash{}[

\textbackslash{}frac13 \textbackslash{}times 18 = 6\textbackslash{}text\{ vases\}

\textbackslash{}]

Each of those 6 vases has 3 wilted flowers, so the total number of wilted flowers is

\textbackslash{}[

6\textbackslash{}text\{ vases\}\textbackslash{}times 3\textbackslash{}text\{ wilted flowers per vase\}=18\textbackslash{}text\{ wilted flowers\}

\textbackslash{}]

Finally, subtract the wilted flowers from the total to get the fresh flowers:

\textbackslash{}[

162\textbackslash{}text\{ total\} - 18\textbackslash{}text\{ wilted\}=144\textbackslash{}text\{ fresh flowers\}

\textbackslash{}]

\textbackslash{}[

\textbackslash{}boxed\{144\}

\textbackslash{}]
\end{tcolorbox}

\begin{tcolorbox}[breakable, title={OMI / Llama-3.3-70B-Instruct},
  title after break={OMI / Llama-3.3-70B-Instruct (continued)}, lines before break=16,
  before upper={\raggedright\setlength{\parindent}{0pt}},
  before lower={\raggedright\setlength{\parindent}{0pt}\setlength{\parskip}{0pt}}]
\sigkey{sigLlaA}{\# \# NNP CD}{\# \# Step 1}{\# \# NNP CD :, \# NNP CD, \# NNP CD :, . \# \# NNP CD, . \# \#, . \# \# NNP, NNP CD : VB, \# NNP CD : VB, NNP CD : VB DT}
\tcblower
\sighl{sigLlaA}{\#\# Step 1: Calculate the} total number of flowers in all the vases.

Ava has 18 vases with 9 flowers in each vase. The total number of flowers is 18 * 9 = 162.

\sighl{sigLlaA}{\#\# Step 2: Determine the} number of vases with wilted flowers.

1/3 of the vases have 3 wilted flowers each. Since there are 18 vases, 1/3 of them is (1/3) * 18 = 6 vases.

\sighl{sigLlaA}{\#\# Step 3: Calculate the} total number of wilted flowers.

Each of the 6 vases has 3 wilted flowers. So, the total number of wilted flowers is 6 * 3 = 18.

\sighl{sigLlaA}{\#\# Step 4: Calculate the} number of fresh flowers.

The total number of flowers is 162, and the number of wilted flowers is 18. Therefore, the number of fresh flowers is 162 - 18 = 144.

The final answer is: \$\textbackslash{}boxed\{144\}\$
\end{tcolorbox}

\begin{tcolorbox}[breakable, title={OMI / Qwen-3-8B},
  title after break={OMI / Qwen-3-8B (continued)}, lines before break=16,
  before upper={\raggedright\setlength{\parindent}{0pt}},
  before lower={\raggedright\setlength{\parindent}{0pt}\setlength{\parskip}{0pt}}]
\sigkey{sigQwnA}{VB PRP VB}{Let me start}{. VB PRP VB, . VB PRP, , VB PRP VB, , VB PRP, NNP , VB PRP VB, NNP , VB PRP, VB PRP VB DT, . NNP , VB PRP, VB PRP VB IN}
\tcblower
\sighl{sigQwnA}{Okay, let me try} to figure out this problem. So Ava has 18 vases, and each vase has 9 flowers. The question is asking how many flowers are still fresh, given that 1/3 of them have 3 wilted flowers each. \sighl{sigQwnA}{Hmm, let me break this} down step by step. \sighl{sigQwnA}{First, let me find} out the total number of flowers Ava has. If there are 18 vases and each vase has 9 flowers, then the total number of flowers is 18 multiplied by 9. \sighl{sigQwnA}{Let me calculate that}. 18 times 9... Well, 10 times 9 is 90, and 8 times 9 is 72, so adding those together, 90 + 72 is 162. So there are 162 flowers in total. Now, the problem says that 1/3 of them have 3 wilted flowers each. Wait, does that mean 1/3 of the vases or 1/3 of the flowers? Hmm, the wording is a bit ambiguous. \sighl{sigQwnA}{Let me read} it again: ``1/3 of them have 3 wilted flowers each.'' The ``them'' here probably refers to the vases, since the previous sentence mentions vases. So it's likely that 1/3 of the vases have 3 wilted flowers each. \sighl{sigQwnA}{Let me check that} assumption. If it's 1/3 of the vases, then 1/3 of 18 vases is 6 vases. Each of those 6 vases has 3 wilted flowers. So, the number of wilted flowers would be 6 vases multiplied by 3 wilted flowers per vase, which is 18 wilted flowers. Therefore, the number of fresh flowers would be total flowers minus wilted flowers, which is 162 - 18 = 144. But \sighl{sigQwnA}{let me make} sure that's correct.

Alternatively, if ``1/3 of them'' refers to 1/3 of the flowers, then it would be different. \sighl{sigQwnA}{Let me consider that} possibility as well. If 1/3 of the flowers are wilted, then 1/3 of 162 is 54 wilted flowers. Then the fresh flowers would be 162 - 54 = 108. But which interpretation is correct? Looking back at the problem: ``Ava has 18 vases with nine flowers in each vase. If 1/3 of them have 3 wilted flowers each...'' The ``them'' is likely referring to the vases because the previous sentence is about vases. So ``them'' would be the vases. Therefore, 1/3 of the vases have 3 wilted flowers each. So that would be 6 vases with 3 wilted flowers each.\sigcut{} Therefore, total wilted flowers are 6*3=18. So total fresh flowers are 162 - 18 = 144. That seems to make sense. But \sighl{sigQwnA}{let me check} again. If it was 1/3 of the flowers, then the problem might have said ``1/3 of the flowers'' instead of ``1/3 of them.'' Since ``them'' is used, which refers back to the vases. So I think the first interpretation is correct. So the answer would be 144. Wait, but \sighl{sigQwnA}{let me think} again. Maybe the problem is that each of those vases has 3 wilted flowers, meaning that in each of those vases, 3 flowers are wilted, so the rest are fresh. So for each of those 6 vases, there are 9 - 3 = 6 fresh flowers. Therefore, total fresh flowers would be (number of vases without wilted flowers) * 9 + (number of vases with wilted flowers) * (9 - 3). \sighl{sigQwnA}{Let me calculate that}.

Number of vases without wilted flowers is 18 - 6 = 12 vases. Each of those has 9 fresh flowers, so 12 * 9 = 108. Then the 6 vases with wilted flowers have 6 fresh flowers each, so 6 * 6 = 36. Then total fresh flowers are 108 + 36 = 144. So that's the same as before. So that's consistent. Alternatively, if I had done 162 total flowers minus 18 wilted flowers, that's also 144. So that seems to check out. But let me just make sure that I didn't misinterpret the problem. The problem says ``1/3 of them have 3 wilted flowers each.'' So ``them'' is the vases. So 1/3 of the vases, which is 6 vases, each have 3 wilted flowers. So that's 6*3=18 wilted flowers. So total fresh is 162-18=144. So that seems correct. Alternatively, if someone thought that ``1/3 of them'' refers to 1/3 of the flowers, then it would be 54 wilted flowers. But that would mean that 54 flowers are wilted, which is 54/162 = 1/3. But the problem says ``have 3 wilted flowers each.'' If it was 1/3 of the flowers, then it would be 54 flowers, but how does that relate to 3 wilted flowers each? Maybe that's a different interpretation. \sighl{sigQwnA}{Let me think}. If ``1/3 of them have 3 wilted flowers each,'' then maybe ``them'' refers to the flowers? But that would be odd. Because if ``them'' refers to flowers, then 1/3 of the flowers (which is 54) have 3 wilted flowers each. But that would mean that each of those 54 flowers has 3 wilted flowers? That doesn't make sense. Because a flower can't have 3 wilted flowers. So that interpretation is invalid. Therefore, the correct interpretation must be that ``them'' refers to the vases. Therefore, 1/3 of the vases have 3 wilted flowers each. Therefore, the answer is 144.

But \sighl{sigQwnA}{let me check} again. If ``1/3 of them have 3 wilted flowers each,'' then ``them'' could be ambiguous. But given the context, since the previous sentence is about vases, it's more likely referring to the vases. Also, if it was referring to flowers, the sentence would probably say ``1/3 of the flowers have 3 wilted flowers each,'' which would be confusing. So the answer is 144. Alternatively, maybe the problem is that in each of the vases that have wilted flowers, 3 flowers are wilted. So, for each of those vases, 3 are wilted, so 9 - 3 = 6 are fresh. So total fresh flowers would be (18 vases - 6 vases with wilted flowers) * 9 + 6 vases * 6 flowers. Which is 12*9 + 6*6 = 108 + 36 = 144. Same answer. So that seems to confirm. Alternatively, if someone thought that 1/3 of the flowers are wilted, and each of those wilted flowers is in a vase with 3 wilted flowers... Wait, that seems more complicated. But the problem says ``1/3 of them have 3 wilted flowers each.'' So maybe ``have 3 wilted flowers each'' is referring to the vases. So each of those vases has 3 wilted flowers. So that's 3 per vase. So that seems to be the correct interpretation. Therefore, I think the answer is 144. But \sighl{sigQwnA}{let me check} once more. \sighl{sigQwnA}{Let me rephrase the} problem. Total vases: 18. Each has 9 flowers. So total flowers: 18*9=162. 1/3 of them (vases) have 3 wilted flowers each. So 1/3 of 18 is 6 vases. Each of these 6 vases has 3 wilted flowers. So total wilted flowers: 6*3=18. Therefore, fresh flowers: 162-18=144. Yes, that seems right.

Alternatively, if someone thought that 1/3 of the flowers are wilted, but that would be 54 flowers. But the problem says ``have 3 wilted flowers each.'' So if you have 1/3 of the flowers, which is 54, but how does that relate to 3 wilted flowers each? Maybe that's a different way of saying that each of those wilted flowers is in a group of 3? But that seems more complex and not how the problem is phrased. So I think the answer is 144. Therefore, I think the correct answer is 144.

**Final Answer**

The number of fresh flowers is \textbackslash{}boxed\{144\}.

\textless{}/think\textgreater{}

To determine how many flowers are still fresh, let's break the problem down into logical steps.

-{}-{}-

\#\#\# **Step 1: Total Number of Flowers**

Ava has 18 vases, and each vase contains 9 flowers. So, the total number of flowers is:

\$\$

18 \textbackslash{}times 9 = 162

\$\$

-{}-{}-

\#\#\# **Step 2: Understanding the Wilted Flowers**

The problem states
\end{tcolorbox}

\begin{tcolorbox}[breakable, title={OMI / Gemma-3-27B-it},
  title after break={OMI / Gemma-3-27B-it (continued)}, lines before break=16,
  before upper={\raggedright\setlength{\parindent}{0pt}},
  before lower={\raggedright\setlength{\parindent}{0pt}\setlength{\parskip}{0pt}}]
\sigkey{sigGemA}{. RB \$}{. Then \$}{\$ . RB \$}
\sigkey{sigGemC}{\$ VB DT}{\$ be the}{\$ VB DT NN, \$ VB DT NN IN}
\sigkey{sigGemD}{. VB \$}{. Let \$}{}
\tcblower
Let \$N\sighl{sigGemC}{\$ be the} total number of vases Ava has, which is 18. \sighl{sigGemD}{Let \$}F\sighl{sigGemC}{\$ be the number of} flowers in each vase, which is 9. The total number of flowers Ava has is \$N \textbackslash{}times F = 18 \textbackslash{}times 9 = 162\$. \sighl{sigGemD}{Let \$}W\sighl{sigGemC}{\$ be the fraction of} vases that have wilted flowers, which is \$\textbackslash{}frac\{1\}\{3\}\$. The number of vases with wilted flowers is \$W \textbackslash{}times N = \textbackslash{}frac\{1\}\{3\} \textbackslash{}times 18 = 6\$. Each of these vases has 3 wilted flowers. The total number of wilted flowers is \$6 \textbackslash{}times 3 = 18\$. The number of fresh flowers is the total number of flowers minus the number of wilted flowers.

Number of fresh flowers = Total number of flowers - Number of wilted flowers

Number of fresh flowers = \$162 - 18 = 144\$. Alternatively, we can calculate the number of fresh flowers in each vase with wilted flowers. Since each vase has 9 flowers and 3 are wilted, the number of fresh flowers in each vase with wilted flowers is \$9 - 3 = 6\$. The number of vases with wilted flowers is 6, so the total number of fresh flowers in these vases is \$6 \textbackslash{}times 6 = 36\$. The number of vases without wilted flowers is \$18 - 6 = 12\$. Each of these vases has 9 fresh flowers, so the total number of fresh flowers in these vases is \$12 \textbackslash{}times 9 = 108\$. The total number of fresh flowers is \$36 + 108 = 144\$.

Final Answer: The final answer is \$\textbackslash{}boxed\{144\}\$
\end{tcolorbox}

\Needspace{26\baselineskip}
\begin{tcolorbox}[title={s1 Example (prompt 15)}]
Joanie takes a $\$6,\!000$ loan to pay for her car.  The annual interest rate on the loan is $12\%$.  She makes no payments for 4 years, but has to pay back all the money she owes at the end of 4 years. How much more money will she owe if the interest compounds quarterly than if the interest compounds annually?  Express your answer as a dollar value to the nearest cent.
\end{tcolorbox}

\begin{tcolorbox}[breakable, title={s1 / GPT-OSS-120B},
  title after break={s1 / GPT-OSS-120B (continued)}, lines before break=16,
  before upper={\raggedright\setlength{\parindent}{0pt}},
  before lower={\raggedright\setlength{\parindent}{0pt}\setlength{\parskip}{0pt}}]
\sigkey{sigGptB}{. RB JJ NN}{. So -a cos}{) . RB JJ, . RB JJ, NN . RB JJ, . RB NN}
\sigkey{sigGptC}{. VB NNP VB}{. Let ' s}{. VB NNP}
\sigkey{sigGptA}{. NNP NN}{? Wait compute}{. NNP JJ, . NNP NN IN}
\tcblower
We have a loan principal P=6000. \sighl{sigGptA}{Annual nominal} interest rate r = 12\% = 0.12. We need to compute amount owed after 4 years with compounding annually vs quarterly. Annual compounding: amount = P * (1 + r)\textasciicircum{}\{4\}. Quarterly compounding: rate per quarter = r/4 = 0.12/4 = 0.03. Number of quarters = 4 years *4 = 16. Amount = P * (1 + 0.03)\textasciicircum{}\{16\}. Compute both and find difference. Compute annual: (1.12)\textasciicircum{}4. Let's compute.

1.12\textasciicircum{}2 = 1.2544.

1.12\textasciicircum{}4 = (1.12\textasciicircum{}2)\textasciicircum{}2 = 1.2544\textasciicircum{}2.

Compute 1.2544\textasciicircum{}2:

1.2544 * 1.2544.

1.2544 * 1 = 1.2544

0.2544 *0.2544? No let's multiply precisely. Or use approximate: 1.2544\textasciicircum{}2 = ? Compute using calculator mental:

1.2544 * 1.2544.

Take 12544 *12544 scaled by 10\textasciicircum{}8 maybe. 12544 *12544: 12544*12544:

12544*12000 = 150,528,000

12544*544 = ?

12544*500 = 6,272,000

12544*40 = 501,760

12544*4 = 50,176

Sum = 6,823,936

Add to 150,528,000 = 157,351,936. Thus 12544*12544 = 157,351,936. Now scaling: Since 12544 = 1.2544*10\textasciicircum{}4? Actually 1.2544 = 12544/10000. \sighl{sigGptB}{So product} = (12544\textasciicircum{}2)/(10000\textasciicircum{}2) = 157,351,936 / 100,000,000 = 1.57351936. So 1.12\textasciicircum{}4 = 1.57351936. \sighl{sigGptB}{Thus amount} annual = 6000 * 1.57351936 = 9,441.11616. \sighl{sigGptB}{Now quarterly}: (1.03)\textasciicircum{}16. Compute (1.03)\textasciicircum{}16. We can compute using powers.

1.03\textasciicircum{}2 = 1.0609.

1.03\textasciicircum{}4 = (1.0609)\textasciicircum{}2.

Compute 1.0609\textasciicircum{}2:

1.0609 * 1.0609.

10609*10609 = ?

Let's compute: 10609*10609.

10609*10000 = 106,090,000

10609*600 = 6,365,400

10609*9 = 95,481

Sum = 112,550,881. Now scaling: because 1.0609 = 10609/10000. Square = 112,550,881/100,000,000 = 1.12550881. So 1.03\textasciicircum{}4 = 1.12550881. Now 1.03\textasciicircum{}8 = (1.03\textasciicircum{}4)\textasciicircum{}2 = (1.12550881)\textasciicircum{}2. \sighl{sigGptA}{Compute square}: 1.12550881\textasciicircum{}2. \sighl{sigGptC}{Let's} compute: 1.12550881 * 1.12550881. Convert to integer: 112550881 * 112550881 scaled by 10\textasciicircum{}16? Since 1.12550881 = 112550881 / 100,000,000. \sighl{sigGptA}{Square numerator}: 112,550,881\textasciicircum{}2. That's big. \sighl{sigGptB}{Maybe use log} approximations? Or compute stepwise using slight approximations. \sighl{sigGptB}{Alternatively compute} 1.03\textasciicircum{}8 via incremental multiplication: 1.03\textasciicircum{}5 = 1.03\textasciicircum{}4 *1.03 = 1.12550881*1.03 = 1.1592740\sigcut{}743.

1.03\textasciicircum{}6 = *1.03 = 1.1940522965.

1.03\textasciicircum{}7 = *1.03 = 1.2298738654.

1.03\textasciicircum{}8 = *1.03 = 1.2667700814.

That seems easier: multiply sequentially. Let's continue: Actually we need up to 16. We have:

n   value

1: 1.03

2: 1.0609

3: 1.092727

4: 1.12550881

5: 1.1592740743

6: 1.1940522965

7: 1.2298738654

8: 1.2667700814

9: *1.03 = 1.3047731838

10:*1.03 = 1.3439163793

11:*1.03 = 1.3842338707

12:*1.03 = 1.4257608868

13:*1.03 = 1.4685337124

14:*1.03 = 1.5125897238

15:*1.03 = 1.5579674155

16:*1.03 = 1.6047064380? Let's compute precisely. Let's do with more accurate rounding. Start from 1.12550881 we had accurate to 8 decimal; progress. We should compute with better precision perhaps using calculator style but okay. Let's recalc using high precision maybe with Python mental? \sighl{sigGptA}{Could approximate}. We can compute (1.03)\textasciicircum{}16 directly using formula: (1+0.03)\textasciicircum{}16 \ensuremath{\approx} e\textasciicircum{}\{16 ln 1.03\}. ln 1.03 \ensuremath{\approx} 0.0295588. Multiply by 16 = 0.47294. e\textasciicircum{}\{0.47294\} \ensuremath{\approx} 1.6045. So around 1.604. Our sequential multiplication gave 1.6047, close. Let's compute more precisely using incremental with more decimals. I'll keep more digits:

1.03\textasciicircum{}1 = 1.03

\textasciicircum{}2 = 1.0609

\textasciicircum{}3 = 1.0609*1.03 = 1.092727

\textasciicircum{}4 = 1.092727*1.03 = 1.12550881 (matches)

\textasciicircum{}5 = 1.12550881*1.03 = 1.1592740743

\textasciicircum{}6 = *1.03 = 1.1940522965 (since 1.1592740743*0.03=0.0347782222 add =\textgreater{}1.1940522965)

Let's compute precisely: 1.1592740743*0.03 = 0.034778222229, add = 1.194052296529. We'll keep 1.1940522965.

\textasciicircum{}7 = *1.03 =\textgreater{} add 3\%: 1.1940522965*0.03=0.0358215689, sum=1.2298738654 (like before). Keep maybe 1.2298738654.

\textasciicircum{}8 = *1.03 =\textgreater{} 1.2298738654*0.03 = 0.03689621596, sum=1.26677008136.

\textasciicircum{}9 = *1.03 =\textgreater{} add 0.03800310244 =\textgreater{} 1.30477318380.

\textasciicircum{}10 = *1.03 =\textgreater{} add 0.0391431955 =\textgreater{} 1.34391637930.

\textasciicircum{}11 = *1.03 =\textgreater{} add 0.0403174914 =\textgreater{} 1.38423387070.

\textasciicircum{}12 = *1.03 =\textgreater{} add 0.0415270161 =\textgreater{} 1.42576088680.

\textasciicircum{}13 = *1.03 =\textgreater{} add 0.0427728266 =\textgreater{} 1.46853371340.

\textasciicircum{}14 = *1.03 =\textgreater{} add 0.044055, compute: 1.4685337134*0.03=0.044055? Actually 1.4685337134*0.03 = 0.044055, precisely 0.044055, wait compute: 1.4685337134 * 0.03 = 0.044055, but need more digits: 1.4685337134 *0.03 = 0.044055... multiply 1.4685*3 = 4.4055 then divide by 100 =\textgreater{}0.044055. \sighl{sigGptB}{So sum =} 1.5125887134? Wait add to 1.4685 = 1.5125887134.

But earlier we had 1.5125897238 slightly diff due to rounding. Let's compute more accurately: 1.4685337134 *0.03 = 0.044055, hmm 1.4685337134 *3 = 4.4056011402 divide 100 =0.044056011402. Add -\textgreater{} 1.512589724802. So value =1.5125897248.

\textasciicircum{}15 = *1.03 =\textgreater{} add 0.045377691744 =\textgreater{} sum = 1.5579674165 (actually 1.5125897248 *0.03 = 0.045377691744; plus =\textgreater{} 1.557967416544).

\textasciicircum{}16 = *1.03 =\textgreater{} add 0.0467390224963 =\textgreater{} sum = 1.604706438... compute: 1.557967416544*0.03 = 0.04673902249632; add =\textgreater{} 1.60470643904032.

Thus (1.03)\textasciicircum{}16 \ensuremath{\approx} 1.60470644.

Now amount quarterly = 6000 * 1.60470644 = 9
\end{tcolorbox}

\begin{tcolorbox}[breakable, title={s1 / Llama-3.3-70B-Instruct},
  title after break={s1 / Llama-3.3-70B-Instruct (continued)}, lines before break=16,
  before upper={\raggedright\setlength{\parindent}{0pt}},
  before lower={\raggedright\setlength{\parindent}{0pt}\setlength{\parskip}{0pt}}]
\sigkey{sigLlaA}{\# \# NNP CD}{\# \# Step 1}{\# NNP CD, \# NNP CD :, \# \# NNP CD :, \# \# NNP, . \# \# NNP CD, . \# \# NNP, . \# \#, \# NNP CD : VB, NNP CD : VB}
\tcblower
\sighl{sigLlaA}{\#\# Step 1: Calculate} the amount owed if interest compounds annually

The formula for compound interest is \$A = P(1 + r)\textasciicircum{}n\$, where \$A\$ is the amount of money accumulated after \$n\$ years, including interest, \$P\$ is the principal amount (initial amount of money), \$r\$ is the annual interest rate (in decimal), and \$n\$ is the number of years the money is invested or borrowed for. For annual compounding, \$P = \textbackslash{}\$6,000\$, \$r = 0.12\$, and \$n = 4\$. So, \$A\_\{annual\} = 6000(1 + 0.12)\textasciicircum{}4\$.

\sighl{sigLlaA}{\#\# Step 2: Calculate} the amount owed if interest compounds quarterly

For quarterly compounding, the formula adjusts to \$A = P(1 + \textbackslash{}frac\{r\}\{m\})\textasciicircum{}\{mt\}\$, where \$m\$ is the number of times that interest is compounded per year and \$t\$ is the time the money is invested or borrowed for, in years. Here, \$P = \textbackslash{}\$6,000\$, \$r = 0.12\$, \$m = 4\$ (since the interest is compounded quarterly), and \$t = 4\$. So, \$A\_\{quarterly\} = 6000(1 + \textbackslash{}frac\{0.12\}\{4\})\textasciicircum{}\{4*4\}\$.

\sighl{sigLlaA}{\#\# Step 3:} Compute \$A\_\{annual\}\$

\$A\_\{annual\} = 6000(1 + 0.12)\textasciicircum{}4 = 6000(1.12)\textasciicircum{}4 = 6000 * 1.573520 = 9441.12\$.

\sighl{sigLlaA}{\#\# Step 4:} Compute \$A\_\{quarterly\}\$

\$A\_\{quarterly\} = 6000(1 + \textbackslash{}frac\{0.12\}\{4\})\textasciicircum{}\{4*4\} = 6000(1 + 0.03)\textasciicircum{}\{16\} = 6000(1.03)\textasciicircum{}\{16\} = 6000 * 1.604706 = 9628.24\$.

\sighl{sigLlaA}{\#\# Step 5: Find} the difference in the amounts owed

The difference in the amounts owed is \$A\_\{quarterly\} - A\_\{annual\} = 9628.24 - 9441.12\$.

\sighl{sigLlaA}{\#\# Step 6: Calculate} the final difference

\$9628.24 - 9441.12 = 187.12\$.

The final answer is: \$\textbackslash{}boxed\{187.12\}\$
\end{tcolorbox}

\begin{tcolorbox}[breakable, title={s1 / Qwen-3-8B},
  title after break={s1 / Qwen-3-8B (continued)}, lines before break=16,
  before upper={\raggedright\setlength{\parindent}{0pt}},
  before lower={\raggedright\setlength{\parindent}{0pt}\setlength{\parskip}{0pt}}]
\sigkey{sigQwnA}{. VB PRP VB}{. Let me try}{. VB PRP, VB PRP VB, , VB PRP VB, VB PRP VB IN, NNP , VB PRP, NNP , VB PRP VB, PRP VB IN, NN . VB PRP, . NNP , VB PRP}
\tcblower
Okay, so Joanie took out a \$6,000 loan with a 12\% annual interest rate. She doesn't make any payments for 4 years and has to pay it all back at the end. The question is asking how much more she'll owe if the interest compounds quarterly compared to if it compounds annually. I need to find the difference between the two amounts. \sighl{sigQwnA}{Alright, let me recall} the formula for compound interest.

The formula for compound interest is A = P(1 + r/n)\textasciicircum{}(nt), where:

- A is the amount of money accumulated after n years, including interest.

- P is the principal amount (the initial amount of money).

- r is the annual interest rate (decimal).

- n is the number of times that interest is compounded per year.

- t is the time the money is invested or borrowed for, in years.

So, first, I need to calculate the amount she owes if the interest is compounded annually and then if it's compounded quarterly. Then subtract the two to find the \sighl{sigQwnA}{difference.}

\sighl{sigQwnA}{Let me start with} annual compounding. For annual compounding, n = 1. So plugging into the formula:

A\_annual = 6000*(1 + 0.12/1)\textasciicircum{}(1*4)

\sighl{sigQwnA}{Let me compute} that. First, 0.12 divided by 1 is 0.12. Then 1 + 0.12 is 1.12. Then raise that to the power of 4. \sighl{sigQwnA}{Let me calculate} 1.12\textasciicircum{}4. Hmm, 1.12\textasciicircum{}2 is 1.2544. Then squaring that result for the fourth power. So 1.2544\textasciicircum{}2. \sighl{sigQwnA}{Let me compute} that. \sighl{sigQwnA}{Let me do} 1.2544 * 1.2544. First, multiply 1.25 * 1.25 = 1.5625. But since it's 1.2544, maybe I should do more precise calculation. Alternatively, maybe I can use logarithms or exponent rules, but maybe it's easier to just multiply step by \sighl{sigQwnA}{step.} \sighl{sigQwnA}{Let me write} it out: 1.2544 * 1.2544: Break it down as (1 + 0.25 + 0.0044)\textasciicircum{}2? Wait, maybe that's not helpful. Alternatively, use the standard multiplication method. Multiply 1.2544 by 1.2544: First, multiply 1.2544 by 1.2544: \sighl{sigQwnA}{Let me think of} it as:

1.2544

x1.2544

-{}-{}-{}-{}-{}-{}-{}-{}-{}-{}-{}-

Well, this might take some time. Alternatively, I can remember that 1.12\textasciicircum{}4 is a standard \sighl{sigQwnA}{value. Let me check if} I remember it or if I can \sighl{sigQwnA}{calculate i}\sigcut{}\sighl{sigQwnA}{t step by} step. Alternatively, use the formula for annual compounding:

After 1 year: 6000 * 1.12 = 6720

After 2 years: 6720 * 1.12. \sighl{sigQwnA}{Let me compute} that. 6720 * 1.12. 6720 * 1 = 6720, 6720 * 0.12 = 806.4. So total is 6720 + 806.4 = 7526.4

After 3 years: 7526.4 * 1.12. \sighl{sigQwnA}{Let me compute} that. 7526.4 * 1.12. \sighl{sigQwnA}{Let me break} it down:

First, 7526.4 * 1 = 7526.4

7526.4 * 0.12 = ?

Well, 7526.4 * 0.1 = 752.64

7526.4 * 0.02 = 150.528

So total is 752.64 + 150.528 = 903.168

Therefore, total after 3 years is 7526.4 + 903.168 = 8429.568

Then after 4 years: 8429.568 * 1.12. \sighl{sigQwnA}{Let me compute} that. Again, 8429.568 * 1.12. Break into 8429.568 * 1 + 8429.568 * 0.12.

First part is 8429.568

Second part: 8429.568 * 0.12. \sighl{sigQwnA}{Let me calculate}:

8429.568 * 0.1 = 842.9568

8429.568 * 0.02 = 168.59136

Adding those gives 842.9568 + 168.59136 = 1011.54816

Therefore, total amount after 4 years is 8429.568 + 1011.54816 = 9441.11616

So approximately \$9,441.12 when compounded annually. Wait, but \sighl{sigQwnA}{let me check if} that's correct. Alternatively, maybe I can use the formula A = P*(1 + r)\textasciicircum{}t. Since it's annual compounding, n=1, so yeah, that's the same as A = 6000*(1.12)\textasciicircum{}4. Alternatively, maybe I can use a calculator for 1.12\textasciicircum{}4. Wait, but since I don't have a calculator here, maybe I can compute it more accurately. Wait, but I think my step-by-step calculation is correct. \sighl{sigQwnA}{Let me check} again: First year: 6000 * 1.12 = 6720. Correct. Second year: 6720 * 1.12. \sighl{sigQwnA}{Let me compute} 6720 * 1.12: Well, 6720 * 1.12 = 6720 + 6720 * 0.12 = 6720 + 806.4 = 7526.4. Correct. Third year: 7526.4 * 1.12. \sighl{sigQwnA}{Let me compute} 7526.4 * 1.12:

First, 7526.4 * 1.1 = 7526.4 + 752.64 = 8279.04

Then 7526.4 * 0.02 = 150.528

So total is 8279.04 + 150.528 = 8429.568. Correct. Fourth year: 8429.568 * 1.12. \sighl{sigQwnA}{Let me compute}:

Again, 8429.568 * 1.1 = 8429.568 + 842.9568 = 9272.5248

Then 8429.568 * 0.02 = 168.59136

So total is 9272.5248 + 168.59136 = 9441.11616. So yes, \$9,441.12 when rounded to the nearest cent. Okay, so that's the amount with annual compounding.

Now, moving on to quarterly compounding. Here, n = 4, since it's compounded quarterly. So the formula becomes:

A\_quarterly = 6000*(1 + 0.12/4)\textasciicircum{}(4*4) = 6000*(1 + 0.03)\textasciicircum{}16

Wait, 0.12 divided by 4 is 0.03. So 1 + 0.03 = 1.03. Then raised to the power of 16, since 4 years * 4 quarters per year = 16 periods. So I need to compute 1.03\textasciicircum{}16. Hmm, that might be a bit more involved. \sighl{sigQwnA}{Let me see if} I can calculate that step by step or find a way to approximate it. Alternatively, maybe I can use logarithms or remember some exponent values, but since I need to be precise, \sighl{sigQwnA}{let me try} to \sighl{sigQwnA}{compute it step by} step. Alternatively, maybe I can use the formula for compound interest step by \sighl{sigQwnA}{step. Let me try} that. Alternatively, I can use the rule that (1 + r/n)\textasciicircum{}(nt). So 1.03\textasciicircum{}16. \sighl{sigQwnA}{Let me compute} that. \sighl{sigQwnA}{Let me note that} 1.03\textasciicircum{}16 can be calculated as (1.03\textasciicircum{}4)\textasciicircum{}4. Maybe that's easier? \sighl{sigQwnA}{Let me compute} 1.03\textasciicircum{}4 first. Compute 1.03\textasciicircum{}2 = 1.0609. Then 1.0609\textasciicircum{}2 = ? \sighl{sigQwnA}{Let me calculate} 1.
\end{tcolorbox}

\begin{tcolorbox}[breakable, title={s1 / Gemma-3-27B-it},
  title after break={s1 / Gemma-3-27B-it (continued)}, lines before break=16,
  before upper={\raggedright\setlength{\parindent}{0pt}},
  before lower={\raggedright\setlength{\parindent}{0pt}\setlength{\parskip}{0pt}}]
\sigkey{sigGemA}{. RB \$}{. Then \$}{\$ . RB \$}
\sigkey{sigGemD}{. VB \$}{. Let \$}{\$ . VB \$, VB \$ VB, . VB \$ VB, \$ VB DT NN, \$ . VB \$ VB, \$ VB DT NN IN}
\tcblower
\sighl{sigGemD}{Let \$P\$ be} the principal amount of the loan, which is \$P = \textbackslash{}\$6000\sighl{sigGemD}{\$.} \sighl{sigGemD}{Let \$r\$ be} the annual interest rate, which is \$r = 12\textbackslash{}\% = 0.12\sighl{sigGemD}{\$.} \sighl{sigGemD}{Let \$t\$ be the number of} years, which is \$t = 4\$.

If the interest compounds annually, the amount owed at the end of 4 years is

\$\$ A\_1 = P(1+r)\textasciicircum{}t = 6000(1+0.12)\textasciicircum{}4 = 6000(1.12)\textasciicircum{}4 = 6000(1.57351936) = 9441.11616 \textbackslash{}approx 9441.12. \$\$

If the interest compounds quarterly, the interest rate per quarter is \$r\_q = \textbackslash{}frac\{r\}\{4\} = \textbackslash{}frac\{0.12\}\{4\} = 0.03\$. The number of quarters in 4 years is \$n = 4t = 4(4) = 16\$.

The amount owed at the end of 4 years is

\$\$ A\_2 = P\textbackslash{}left(1+\textbackslash{}frac\{r\}\{4\}\textbackslash{}right)\textasciicircum{}\{4t\} = 6000(1+0.03)\textasciicircum{}\{16\} = 6000(1.03)\textasciicircum{}\{16\} = 6000(1.6047064378) \textbackslash{}approx 9628.2386268 \textbackslash{}approx 9628.24. \$\$

The difference between the amount owed with quarterly compounding and the amount owed with annual compounding is

\$\$ A\_2 - A\_1 = 9628.24 - 9441.12 = 187.12. \$\$

The difference in the amount owed is

\$\$ A\_2 - A\_1 = 6000(1.03)\textasciicircum{}\{16\} - 6000(1.12)\textasciicircum{}4 = 6000((1.03)\textasciicircum{}\{16\} - (1.12)\textasciicircum{}4) \$\$

\$\$ = 6000(1.6047064378 - 1.57351936) = 6000(0.0311870778) \textbackslash{}approx 187.12246668 \textbackslash{}approx 187.12. \$\$

Final Answer: The final answer is \$\textbackslash{}boxed\{187.12\}\$
\end{tcolorbox}

\paragraph{Interpretation}

The lists depend on the candidate pool and signature size, and tagging errors can complicate their interpretation: sentence-initial ``Compute $b$'' receives \texttt{. NNP NN}. They describe recurring forms rather than establish attribution on their own.

\endgroup

\section{Representation Choice and Robustness}
\label{app:ngram}
\begin{table}[tp]
\centering
\caption{\textbf{Removing markup re-tags a signature rather than erasing it.} For each candidate in the controlled pool, we rank patterns against the mean of the other three using raw text and text with Markdown and math markup removed. The last column counts patterns common to the two top-ten lists.}
\label{tab:qual-markup}
\setlength{\tabcolsep}{4pt}
\begin{tabular}{@{}llllr@{}}
\toprule
Candidate & Markup & Signature & Example & Shared \\
\midrule
\multicolumn{5}{@{}l}{OMI probe set} \\
GPT-OSS-120B & kept & . NNP JJ & ``. Need third'' & 6/10 \\
 &  & RB NN : & ``Thus answer :'' &  \\
 & removed & RB NN : & ``Thus answer :'' &  \\
 &  & . NNP JJ & ``. Need third'' &  \\
\addlinespace
Llama-3.3-70B-Instruct & kept & \# \# NNP CD & ``\# \# Step 1'' & 0/10 \\
 &  & NNP CD : VB & ``Step 1 : Define'' &  \\
 & removed & NN CD : VB & ``Step 1 : Define'' &  \\
\addlinespace
Qwen-3-8B & kept & VB PRP VB & ``Let me start'' & 9/10 \\
 & removed & , VB PRP VB & ``, let me plug'' &  \\
\addlinespace
Gemma-3-27B-it & kept & . RB \$ & ``. Then \$'' & 0/10 \\
 &  & \$ \$ \$ & ``\$ \$ \$'' &  \\
 & removed & : DT JJ NN VBZ & ``: The final answer is'' &  \\
 &  & . JJ NN : DT & ``. Final Answer : The'' &  \\
\midrule
\multicolumn{5}{@{}l}{s1 probe set} \\
GPT-OSS-120B & kept & . RB JJ NN & ``. So -a cos'' & 5/10 \\
 &  & . VB NNP VB & ``. Let ' s'' &  \\
 & removed & . NNP NN & ``? Wait compute'' &  \\
 &  & ) . RB JJ & ``) . So -a'' &  \\
\addlinespace
Llama-3.3-70B-Instruct & kept & \# \# NNP CD & ``\# \# Step 1'' & 0/10 \\
 &  & NNP CD : VB & ``Step 1 : Understand'' &  \\
 & removed & CD : VB DT & ``1 : Understand the'' &  \\
 &  & . NN CD : & ``. Step 2 :'' &  \\
\addlinespace
Qwen-3-8B & kept & . VB PRP VB & ``. Let me try'' & 9/10 \\
 &  & NNP , VB PRP & ``First , let me'' &  \\
 & removed & . VB PRP VB & ``. Let me try'' &  \\
 &  & NNP , VB PRP & ``First , let me'' &  \\
\addlinespace
Gemma-3-27B-it & kept & . RB \$ & ``. Then \$'' & 0/10 \\
 &  & . VB \$ & ``. Let \$'' &  \\
 & removed & VB NNP VB DT & ``Let S be a'' &  \\
 &  & CD ( CD CD & ``2 ( 1200 10\textasciicircum{}'' &  \\
\bottomrule
\end{tabular}
\end{table}

\subsection{Markup and Window Controls}

\paragraph{Markup Effects} Table~\ref{tab:qual-markup} shows that removing Markdown and math markup changes tag sequences without necessarily removing the underlying writing habits. Five or six of GPT-OSS-120B's top ten patterns remain unchanged, as do nine of Qwen-3-8B's. Although no exact top-ten pattern overlaps for Llama-3.3-70B-Instruct or Gemma-3-27B-it, the stripped responses retain step headings and definition-related phrases such as ``Step 1: Define,'' ``The final answer is,'' and ``Let \(S\) be a.'' The factorial analysis below quantifies how these changes affect \(\Delta G_T\).

\paragraph{Text Length and Preprocessing} The factorial analysis varies window length from \(500\) to \(2{,}000\) characters, position at the start, a random location, or the end, and preprocessing with raw or canonicalized text. Candidate-pool centering shifts \(G_T\) by the same constant for \(S\) and \(S^{(0)}\), which cancels in \(\Delta G_T\). We therefore report \(D\) and \(C\) together. Figure~\ref{fig:factorial-dgt} summarizes the trends, while Table~\ref{tab:factorial-cells} reports all \(24\) conditions under the default \(\lambda=0.15\).

\begin{figure*}[t]
\centering
\begin{subfigure}[t]{0.475\textwidth}
  \centering
  \includegraphics[width=\linewidth]{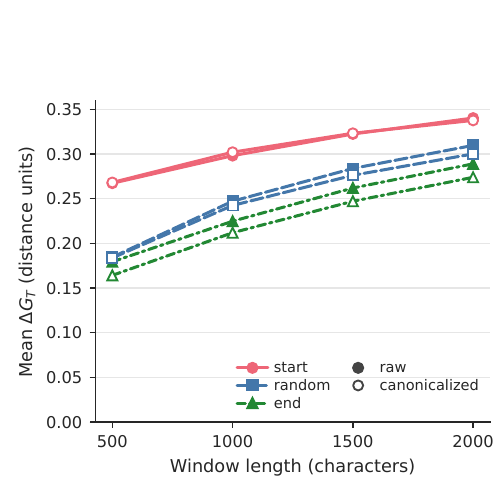}
  \caption{Profile \(D\) and Centered \(C\)}
\end{subfigure}\hfill
\begin{subfigure}[t]{0.475\textwidth}
  \centering
  \includegraphics[width=\linewidth]{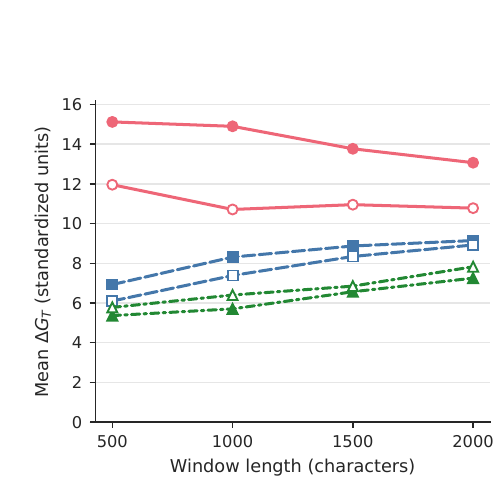}
  \caption{SCOUT}
\end{subfigure}
\caption{\textbf{Teacher-directed margin changes across text windows and preprocessing.} Each point reports mean \(\Delta G_T\) over nineteen controlled students under \(\lambda=0.15\). Filled and open markers denote raw and canonicalized text, respectively. Random-window curves average ten seeds. All nineteen students have positive \(\Delta G_T\) in every condition. Panel \textbf{(a)} combines \(D\) and \(C\) because centering cancels in \(\Delta G_T\), while panel \textbf{(b)} applies candidate-specific scaling. The legend in \textbf{(a)} applies to both panels; their vertical scales are not comparable.}
\label{fig:factorial-dgt}
\end{figure*}

\begin{table*}[t]
  \centering
  \caption{\textbf{Trajectory-margin changes across text windows and preprocessing.} Each entry reports mean \(\Delta G_T\) over nineteen controlled students. All nineteen have positive \(\Delta G_T\) in every condition under \(D\), \(C\), and SCOUT. \(D\) and \(C\) are identical for \(\Delta G_T\), whereas SCOUT additionally applies candidate-specific scaling, so magnitudes should be compared only within each readout. Random-window means are averaged across ten seeds, and the \(19/19\) result holds under the minimum count across seeds.}
  \label{tab:factorial-cells}
  \small
  \setlength{\tabcolsep}{4pt}
  \begin{tabular}{@{}llrrrrrrrr@{}}
    \toprule
    & & \multicolumn{4}{c}{Profile \(D\) and Centered \(C\)}
      & \multicolumn{4}{c}{SCOUT} \\
    \cmidrule(lr){3-6}\cmidrule(lr){7-10}
    Preprocessing & Window
      & 500 & 1000 & 1500 & 2000
      & 500 & 1000 & 1500 & 2000 \\
    \midrule
    Raw & start
      & 0.2674 & 0.2982 & 0.3225 & 0.3404
      & 15.1212 & 14.8995 & 13.7677 & 13.0675 \\
    & random
      & 0.1846 & 0.2472 & 0.2839 & 0.3097
      & 6.9287 & 8.3123 & 8.8714 & 9.1461 \\
    & end
      & 0.1795 & 0.2249 & 0.2620 & 0.2891
      & 5.3654 & 5.7042 & 6.5654 & 7.2521 \\
    \midrule
    Canonicalized & start
      & 0.2683 & 0.3021 & 0.3233 & 0.3377
      & 11.9578 & 10.7116 & 10.9529 & 10.7779 \\
    & random
      & 0.1838 & 0.2425 & 0.2762 & 0.3002
      & 6.0996 & 7.3841 & 8.3427 & 8.9223 \\
    & end
      & 0.1643 & 0.2120 & 0.2472 & 0.2740
      & 5.7769 & 6.4008 & 6.8487 & 7.8247 \\
    \bottomrule
  \end{tabular}
\end{table*}

For \(D\) and \(C\), mean \(\Delta G_T\) increases with window length in all six preprocessing--position combinations. Under SCOUT, it increases for random and end windows, whereas both start-window series peak at \(500\) characters. Nevertheless, every student has positive \(\Delta G_T\) in every condition. Relative to the first \(2{,}000\) raw characters, the first \(500\) canonicalized characters retain \(78.8\%\) of the gain under \(D\) and \(C\), and \(91.5\%\) under SCOUT. Thus, teacher-associated evidence occurs throughout the response but is especially concentrated near its beginning.

\subsection{Inheritance of Teacher Signatures}

\paragraph{Signature Inheritance} Figure~\ref{fig:qual-heatmap} presents heatmaps of teacher-signature inheritance. Panels \textbf{(a)} and \textbf{(b)} repeat the 1.5--4B math and conversation results from Figure~\ref{fig:controlled-signatures}, while panel \textbf{(c)} adds the 7--8B math cohort. Each cell measures how strongly held-out student responses reproduce a candidate's top-ten signature after candidate-contrast filtering with \(\lambda=0.15\). The block-diagonal structure shows preferential reproduction of teacher-associated patterns.

\begin{figure*}[p]
  \centering
  \begin{subfigure}[t]{0.96\textwidth}
    \centering
    \includegraphics[width=\linewidth]{./figures/pos_sig_heatmap_cohort_lambda015_a_budgetstyle}
    \caption{Math, 1.5--4B cohort}
  \end{subfigure}

  \vspace{0.4em}

  \begin{subfigure}[t]{0.96\textwidth}
    \centering
    \includegraphics[width=\linewidth]{./figures/pos_sig_heatmap_cohort_lambda015_b_budgetstyle}
    \caption{Conversation, 1.5--4B cohort}
  \end{subfigure}

  \vspace{0.4em}

  \begin{subfigure}[t]{0.96\textwidth}
    \centering
    \includegraphics[width=\linewidth]{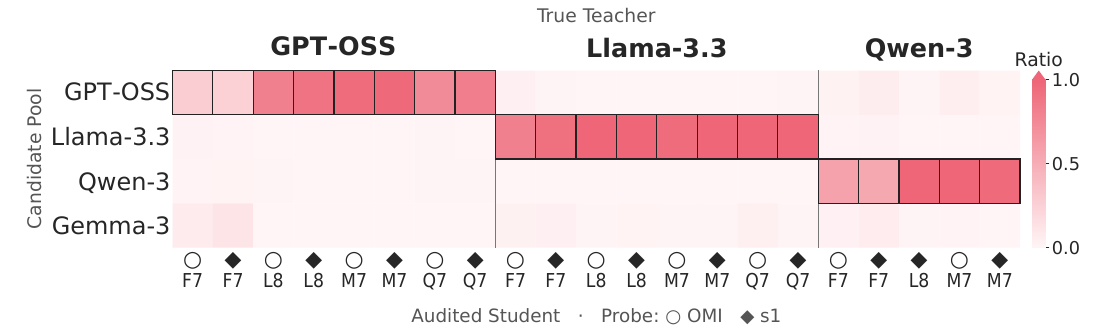}
    \caption{Math, 7--8B cohort}
  \end{subfigure}

  \caption{\textbf{Controlled students reproduce teacher-associated syntactic patterns.} Each cell shows the share of a candidate's top-ten PoS \(n\)-grams observed in held-out student responses, normalized by the corresponding share in the candidate's own responses. Signatures are selected after candidate-contrast filtering with \(\lambda=0.15\). Rows are candidates, columns are students grouped by true teacher, and black outlines mark each student's true teacher. Student labels G4, L3, Q1.5, Q3, F7, L8, M7, and Q7 denote Gemma-3-4B-PT, Llama-3.2-3B-Instruct, Qwen-2.5-1.5B, Qwen-2.5-3B, Falcon3-7B-Base, Llama-3.1-8B, Mistral-7B-v0.3, and Qwen-2.5-7B, respectively. Circles and diamonds distinguish the two probes used in each cohort.}
  \label{fig:qual-heatmap}
\end{figure*}

\paragraph{Training-Scaffold Removal} Table~\ref{tab:qual-students} lists representative student-side patterns after removing training-format markers, specifically GPT-OSS channel labels and Qwen \texttt{<think>} tags. Teacher-associated forms recur across both probes and model scales, indicating that the inheritance pattern is not solely driven by copied scaffolding.

\begin{table}[t]
\centering
\caption{\textbf{Student signatures.} Controlled students grouped by source, against the students of the other two sources on the same base models, with format scaffolding removed.}
\label{tab:qual-students}
\setlength{\tabcolsep}{4pt}
\begin{tabular}{@{}lllll@{}}
\toprule
Cohort & Source & Probe & Signature & Example \\
\midrule
1.5--4B & GPT-OSS-120B & OMI & . NNP NN & ``. Compute b'' \\
 &  &  & ) . RB JJ & ``) . Similarly median'' \\
 &  & s1 & . RB JJ & ``. Similarly \textbackslash{}'' \\
 &  &  & . NNP NN & ``. Compute product'' \\
\addlinespace
 & Llama-3.3-70B-Instruct & OMI & NNP CD : VB DT & ``Step 2 : Recall the'' \\
 &  &  & . \# \# NNP CD & ``. \# \# Step 2'' \\
 &  & s1 & . \# \# NNP CD & ``. \# \# Step 2'' \\
 &  &  & \# NNP CD : VB & ``\# Step 1 : Understand'' \\
\addlinespace
 & Qwen-3-8B & OMI & . VB PRP VB & ``. Let me try'' \\
 &  &  & NNP , VB PRP & ``First , let me'' \\
 &  & s1 & . VB PRP VB & ``? Let me write'' \\
 &  &  & NNP , VB PRP & ``Hmm , let me'' \\
\midrule
7--8B & GPT-OSS-120B & OMI & ) . RB JJ & ``) . Thus w\textasciicircum{}-6'' \\
 &  &  & . NNP NN & ``. Done computation'' \\
 &  & s1 & ) . RB JJ & ``\} . Thus upper'' \\
 &  &  & . NNP NN & ``? Simplify orig'' \\
\addlinespace
 & Llama-3.3-70B-Instruct & OMI & . \# \# NNP & ``. \# \# Step'' \\
 &  &  & \# NNP CD : VB & ``\# Step 3 : Apply'' \\
 &  & s1 & . \# \# NNP CD & ``. \# \# Step 2'' \\
 &  &  & \# NNP CD : VB & ``\# Step 1 : Consider'' \\
\addlinespace
 & Qwen-3-8B & OMI & . VB PRP VB & ``. Let me recall'' \\
 &  &  & NNP , VB PRP & ``First , let me'' \\
 &  & s1 & . VB PRP VB & ``. Let me come'' \\
 &  &  & NNP , VB PRP & ``Okay , let me'' \\
\bottomrule
\end{tabular}
\end{table}

\paragraph{Signature-Size Sensitivity} Figure~\ref{fig:signature-sensitivity} tests whether the true teacher remains top-ranked as the signature size and candidate pool vary. The ranking is stable around the default \(k=10\), but broader lists eventually include patterns shared across models.

\begin{figure}[t]
  \centering
  \includegraphics[width=0.5\linewidth]{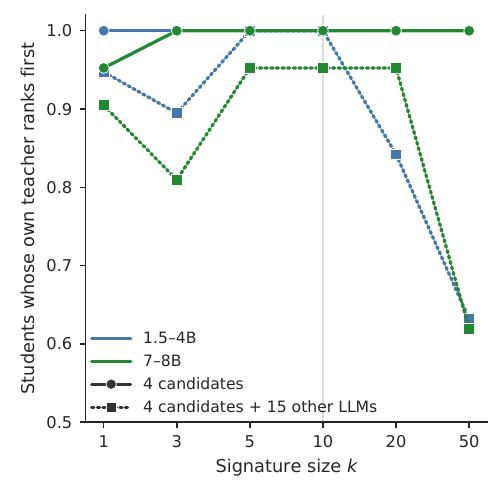}
  \caption{\textbf{Sensitivity to signature size and candidate-pool composition.} The fraction of controlled students whose true source ranks first as the number of signature PoS $n$-grams varies. Solid lines with circles use the four-candidate pool, whereas dotted lines with squares use the nineteen-candidate pool. The vertical line marks the default choice, $k=10$.}
  \label{fig:signature-sensitivity}
\end{figure}

\subsection{Candidate-Contrast Sensitivity}
\label{app:filter-sensitivity}

The filtering fraction \(\lambda\) controls how many low-contrast PoS \(n\)-grams are removed before profile distances are computed. This representation-stage filter is separate from the candidate-pool calibration. We examine sensitivity to \(\lambda\) across controlled cohorts and public descendants after subsequent training. The analysis characterizes a stable region rather than selecting an optimal value.

\paragraph{Controlled Cohorts}

\begin{table*}[t]
  \centering
  \caption{\textbf{Sensitivity to \(\lambda\) in the controlled cohorts.} \(\mathrm{Det.}\) counts students correctly attributed on both probes with the teacher present. Each ordered pair \((\mathrm{FP}(S),\mathrm{FP}(T))\) gives the numbers of false acceptances after teacher removal under student-backbone and teacher holdout, respectively. Bold headers mark the only values yielding no false acceptances under either protocol in all three cohorts.}
  \label{tab:lambda-controlled}
  \setlength{\tabcolsep}{4.5pt}
  \begin{tabular}{@{}llccccccc@{}}
    \toprule
    Cohort & Measure
      & \(\lambda=0\)
      & \(0.05\)
      & \(\mathbf{0.10}\)
      & \(\mathbf{0.15}\)
      & \(0.25\)
      & \(0.50\)
      & \(0.75\) \\
    \midrule
    Math 1.5--4B (19)
      & \(\mathrm{Det.}\)
      & 19/19 & 15/19 & 18/19 & 18/19 & 18/19 & 17/19 & 17/19 \\
      & \((\mathrm{FP}(S),\mathrm{FP}(T))\)
      & (0, 0) & (0, 0) & (0, 0) & (0, 0) & (13, 19) & (0, 0) & (0, 0) \\
    \addlinespace
    Conversation (19)
      & \(\mathrm{Det.}\)
      & 8/19 & 12/19 & 15/19 & 15/19 & 15/19 & 15/19 & 16/19 \\
      & \((\mathrm{FP}(S),\mathrm{FP}(T))\)
      & (2, 2) & (2, 0) & (0, 0) & (0, 0) & (0, 4) & (2, 1) & (2, 0) \\
    \midrule
    \multicolumn{9}{@{}l}{\textit{Additional scale cohort}} \\
    Math 7--8B (21)
      & \(\mathrm{Det.}\)
      & 19/21 & 15/21 & 19/21 & 18/21 & 19/21 & 18/21 & 18/21 \\
      & \((\mathrm{FP}(S),\mathrm{FP}(T))\)
      & (0, 0) & (0, 0) & (0, 0) & (0, 0) & (13, 21) & (0, 0) & (0, 0) \\
    \bottomrule
  \end{tabular}
\end{table*}

Table~\ref{tab:lambda-controlled} shows consistent behavior at \(\lambda=0.10\) and \(0.15\). Both yield no false acceptances under either threshold-transfer protocol in any cohort while retaining high detection. Performance is less consistent outside these values, particularly in the conversation cohort and at \(\lambda=0.25\). The same pattern in the additional 7--8B cohort indicates that it is not confined to smaller students. All primary experiments use the fixed value \(\lambda=0.15\), one point in this stable region.

\paragraph{After Subsequent Training}

\begin{table*}[t]
  \centering
  \caption{\textbf{Sensitivity to \(\lambda\) after subsequent training.} Each entry counts DeepSeek-R1 assignments among the twelve descendants in Section~\ref{sec:exp-descendants}. Math and conversation probes use their corresponding ten-candidate pools. Profile distance \(D\) uses the same filtered representations without candidate-pool calibration. Bold headers mark the values highlighted by the controlled cohorts.}
  \label{tab:lambda-retrospective}
  \setlength{\tabcolsep}{5pt}
  \begin{tabular}{@{}llcccccc@{}}
    \toprule
    Method & Probe
      & \(\lambda=0\)
      & \(0.05\)
      & \(\mathbf{0.10}\)
      & \(\mathbf{0.15}\)
      & \(0.25\)
      & \(0.50\) \\
    \midrule
    SCOUT
      & Math / OMI
      & 12/12 & 12/12 & 12/12 & 12/12 & 12/12 & 12/12 \\
      & Math / s1
      & 12/12 & 12/12 & 12/12 & 12/12 & 12/12 & 12/12 \\
      & Conversation / OASST1
      & 1/12 & 5/12 & 12/12 & 12/12 & 1/12 & 1/12 \\
      & Conversation / Dolly
      & 3/12 & 12/12 & 12/12 & 12/12 & 12/12 & 1/12 \\
    \midrule
    Profile distance \(D\)
      & Math / OMI
      & 11/12 & 12/12 & 12/12 & 12/12 & 12/12 & 12/12 \\
      & Math / s1
      & 12/12 & 12/12 & 12/12 & 12/12 & 12/12 & 12/12 \\
      & Conversation / OASST1
      & 1/12 & 1/12 & 2/12 & 2/12 & 2/12 & 1/12 \\
      & Conversation / Dolly
      & 1/12 & 1/12 & 1/12 & 1/12 & 1/12 & 1/12 \\
    \bottomrule
  \end{tabular}
\end{table*}

Table~\ref{tab:lambda-retrospective} shows a consistent pattern after subsequent training. SCOUT assigns all twelve descendants to DeepSeek-R1 across the evaluated grid on both math probes. On the conversation probes, OASST1 and Dolly simultaneously reach \(12/12\) at \(\lambda=0.10\) and \(0.15\). In every SCOUT cell with \(12/12\) descendants, both distilled parents are also assigned to DeepSeek-R1 and both pre-distillation bases are correctly rejected as non-R1. Profile distance \(D\) does not recover the conversation descendants, indicating that contrast filtering alone does not explain SCOUT's performance. Together, the controlled and retrospective results identify \(0.10\)--\(0.15\) as a stable region across model scales, probe domains, and training stages.

\subsection{Decoding and Response-Length Controls}
\label{app:decoding}

This subsection tests whether the measured signal can be explained by differences in teacher-side decoding, candidate-side decoding, or total response length.

\paragraph{Common Teacher Decoding} The controlled training responses were generated using each teacher's default decoding configuration. Qwen-3-8B uses \(T=0.6\), top-\(p=0.95\), and top-\(k=20\); Llama-3.3-70B uses \(T=0.6\) and top-\(p=0.9\); GPT-OSS-120B provides no sampling defaults and was served at \(T=1.0\). Because these differences may affect PoS patterns, we regenerate training responses from all three teachers using \(T=0.7\), top-\(p=0.95\), top-\(k=20\), and repetition penalty \(1.0\), then train the corresponding students with the original recipe.

For each candidate \(T_k\), let \(\Delta_k\) denote the reduction in PoS distance from the base to the student. We measure source-directed formation as
\[
I_T
=
\Delta_T-\frac{1}{K-1}\sum_{k\neq T}\Delta_k,
\]
where positive values indicate greater movement toward the true source than toward the remaining candidates.

\begin{table}[t]
  \centering
  \caption{\textbf{Source-signature formation under common teacher decoding.} Entries report the source-directed increment \(I_T\) on the held-out probe. Dashes denote combinations outside the retained nineteen-student cohort.}
  \label{tab:common-decoding}
  \small
  \begin{tabular}{@{}llccc@{}}
    \toprule
    Backbone & Training set
      & GPT-OSS & Qwen-3 & Llama-3.3 \\
    \midrule
    Gemma-3-4B-PT
      & OMI & 0.087 & 0.169 & --- \\
      & s1  & 0.146 & 0.157 & 0.191 \\
    \addlinespace
    Llama-3.2-3B-Instruct
      & OMI & --- & --- & 0.192 \\
      & s1  & --- & --- & 0.228 \\
    \addlinespace
    Qwen2.5-1.5B
      & OMI & 0.108 & 0.179 & 0.139 \\
      & s1  & 0.169 & 0.081 & 0.176 \\
    \addlinespace
    Qwen2.5-3B
      & OMI & 0.108 & 0.203 & 0.168 \\
      & s1  & 0.175 & 0.234 & 0.191 \\
    \midrule
    Mean \(I_T\)
      & & 0.132 & 0.170 & 0.184 \\
    True source ranks first
      & & 6/6 & 6/6 & 7/7 \\
    \bottomrule
  \end{tabular}
\end{table}

Table~\ref{tab:common-decoding} shows positive \(I_T\) for every student in the nineteen-student cohort. The largest candidate-wise distance reduction identifies the true source in all \(19/19\) cases. The mean \(I_T\) is \(0.163\), compared with \(0.167\) for the same students under the original teacher configurations.

For exact inference, we permute teacher labels within five three-teacher blocks and one two-teacher block. These blocks cover \(17\) students; the other two Llama-3.2-3B-Instruct students have no within-block alternative. All \(17\) students have positive \(I_T\) and move most toward the true source, with a mean \(I_T\) of \(0.158\). The observed assignment uniquely maximizes the mean among all \((3!)^5 2!=15{,}552\) assignments, giving an exact one-sided \(p=6.4\times10^{-5}\). Equalizing teacher decoding therefore does not remove source-signature formation.

\paragraph{Temperature and Candidate-Side Decoding} We next measure the magnitude of the decoding effect. GPT-OSS training responses are generated at three temperatures while all other settings remain fixed. On the candidate side, we regenerate Gemma-3-27B-it, Llama-3.3-70B-Instruct, GPT-OSS-120B, and QwQ-32B-Preview under both their model defaults and the common configuration above. The other six candidates, including the true source DeepSeek-R1, remain fixed. We denote the true-source margin computed from uncalibrated profile distance \(D\) by \(G_T^{(D)}\).

\begin{table}[t]
  \centering
  \caption{\textbf{Effects of teacher- and candidate-side decoding.} The upper block reports \(G_T^{(D)}\) for students trained on GPT-OSS responses at three temperatures. The middle block compares its mean across teachers under common decoding. In the lower block, \emph{Released} uses the original candidate bank, while \emph{Default} and \emph{Common} replace four open-weight competitors with regenerated responses. Entries count correct source identifications in thirteen public suspect--probe cells using profile readouts with \(\lambda=0.15\). The s1.1-32B cell on s1 is excluded because its fine-tuning data contain the probe prompts. \emph{Flips} counts verdict changes between default and common candidate decoding.}
  \label{tab:decoding-interventions}
  \small
  \begin{tabular}{@{}lcccc@{}}
    \toprule
    & \multicolumn{3}{c}{Temperature} & \\
    \cmidrule(lr){2-4}
    Backbone
      & \(T=0.3\) & \(T=0.7\) & \(T=1.0\) & Range \\
    \midrule
    Gemma-3-4B-PT
      & 0.0839 & 0.0865 & 0.0937 & 0.0098 \\
    Qwen2.5-1.5B
      & 0.0602 & 0.0777 & 0.0697 & 0.0176 \\
    Qwen2.5-3B
      & 0.0833 & 0.0988 & 0.0845 & 0.0155 \\
    \midrule
    Mean range
      & & & & 0.0143 \\
    \midrule
    & GPT-OSS & Qwen-3 & Llama-3.3 & Range \\
    \midrule
    Mean \(G_T^{(D)}\)
      & 0.0647 & 0.1400 & 0.2578 & 0.1931 \\
    \midrule
    Readout
      & Released & Default & Common & Flips \\
    \midrule
    Profile distance \(D\)
      & 13/13 & 13/13 & 13/13 & 0/13 \\
    \quad \(+\) Centering \(C\)
      & 13/13 & 13/13 & 13/13 & 0/13 \\
    \qquad \(+\) Scaling (SCOUT)
      & 13/13 & 13/13 & 13/13 & 0/13 \\
    \bottomrule
  \end{tabular}
\end{table}

The mean temperature-induced range in \(G_T^{(D)}\) is \(0.0143\), compared with a \(0.1931\) range between teachers. Changing candidate-side decoding alters none of the \(39\) verdicts across the three profile readouts. The tested training-side temperature variation is therefore substantially smaller than the source variation, and changing the four regenerable competitors does not alter attribution.

\paragraph{Response-Length Controls} At audit time, SCOUT retains only the first \(2{,}000\) characters of every student and candidate response and \(L_2\)-normalizes the resulting count vector. Table~\ref{tab:length-controls} measures the remaining length variation and tests whether simple length statistics recover the source. For each prompt, a length-only readout scores candidate \(T_k\) by the absolute difference between the student's response statistic and the median statistic over that candidate's \(200\) responses; the released evaluator then aggregates these scores across prompts.

\begin{table}[t]
  \centering
  \caption{\textbf{Response-length controls.} The upper block reports candidate-response statistics on OMI before and after the \(2{,}000\)-character cap; entries are medians except \emph{At cap}, which gives the percentage of responses reaching the cap. \emph{Sentence chars} reports median sentence length within the retained window. The lower block compares three length-only readouts with SCOUT on the same nineteen controlled students. SCOUT uses \(\lambda=0.15\), and lower-block entries are percentages.}
  \label{tab:length-controls}
  \small
  \begin{tabular}{@{}lrrrr@{}}
    \toprule
    Candidate
      & Raw chars & Retained chars & At cap (\%) & Sentence chars \\
    \midrule
    GPT-OSS-120B
      & 2,877 & 2,000 & 63.5 & 54.7 \\
    Qwen-3-8B
      & 6,036 & 2,000 & 100.0 & 62.5 \\
    Llama-3.3-70B-Instruct
      & 1,851 & 1,851 & 44.5 & 117.6 \\
    Gemma-3-27B-it
      & 2,013 & 1,993 & 50.0 & 80.2 \\
    \midrule
    Max/min ratio
      & 3.26 & 1.08 & --- & 2.15 \\
    \bottomrule
  \end{tabular}

  \vspace{4pt}

  \begin{tabular}{@{}lrrr@{}}
    \toprule
    Readout
      & GPT-OSS & Qwen-3 & Llama-3.3 \\
    \midrule
    Character count
      & 100.0 & 0.0 & 14.3 \\
    Token count
      & 66.7 & 83.3 & 0.0 \\
    Line count
      & 50.0 & 50.0 & 42.9 \\
    SCOUT (Ours)
      & 100.0 & 100.0 & 100.0 \\
    \bottomrule
  \end{tabular}
\end{table}

The character cap reduces the ratio between the longest and shortest candidate medians from \(3.26\) to \(1.08\). Each length-only readout succeeds for some sources but fails for others, whereas SCOUT identifies every source group. Therefore, total length does not explain the controlled result, although sentence structure within the retained window may still contribute to the PoS representation.

Together, these controls show that neither the tested decoding variation nor total retained length alone accounts for the attribution results.

\section{Properties and Limits of the SCOUT Readout}
\label{app:theory}
This appendix establishes four properties of SCOUT. It identifies what candidate-pool calibration removes, when attribution remains stable under prompt subsampling, when candidate-only calibration cannot remove pair-specific affinity, and what current outputs cannot reveal about training history. The calibration identities condition on each supplied candidate pool, while the subsampling analysis fixes the candidate responses, profiles, null statistics, and scales. Randomness in the latter enters only through subsets drawn from a fixed prompt bank. These results characterize the readout rather than the causal effects of distillation or post-training. Then, we connect the final limit to source-switching experiments that examine which source remains readable after later SFT.

\subsection{Candidate-Relative Calibration}

\begin{proposition}[Exact finite-pool calibration]
\label{prop:centering}
For any evaluated model \(A\), we have
\begin{equation}
C(A,T_k)
=
D(A,T_k)
-
\frac{1}{K-1}\sum_{j\neq k}D(T_j,T_k).
\label{eq:centered-identity}
\end{equation}
Moreover, assume \(\sigma(T_k)>0\) and suppose every distance to \(T_k\) used in its calibrated score receives the same candidate-specific additive shift \(b_k\):
\[
\delta_i'(M,T_k)=\delta_i(M,T_k)+b_k
\]
for every \(M\in\{A\}\cup(\mathcal{T}\setminus\{T_k\})\) and prompt \(i\). Then,
\begin{equation}
C'(A,T_k)=C(A,T_k),
\qquad
Z'(A,T_k)=Z(A,T_k).
\end{equation}
\end{proposition}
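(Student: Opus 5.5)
The plan is to prove both parts by direct manipulation of the definitions in Eqs.~\plaineqref{eq:profile-distance}--\plaineqref{eq:calibrated-score}; no earlier result beyond these definitions is needed.

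\textbf{The identity \eqref{eq:centered-identity}.} I would expand \(C(A,T_k)\) by linearity of the prompt average. From \eqref{eq:calibrated-score}, \(C(A,T_k)=\frac{1}{N}\sum_i\delta_i(A,T_k)-\frac{1}{N}\sum_i\mu_i(T_k)\). The first term is \(D(A,T_k)\) by \eqref{eq:profile-distance}. For the second, I substitute the definition of \(\mu_i(T_k)\) from \eqref{eq:finite-pool-null} and exchange the two finite sums:
\[
\frac{1}{N}\sum_{i=1}^{N}\frac{1}{K-1}\sum_{j\neq k}\delta_i(T_j,T_k)
=\frac{1}{K-1}\sum_{j\neq k}\frac{1}{N}\sum_{i=1}^{N}\delta_i(T_j,T_k)
=\frac{1}{K-1}\sum_{j\neq k}D(T_j,T_k).
\]
Here I use that \(D(M,T_k)\) is defined for every \(M\in\{S\}\cup\mathcal{T}\), so \(D(T_j,T_k)\) makes sense. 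This yields \eqref{eq:centered-identity}.

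\textbf{Invariance of \(C\).} Under the shift, \(\delta_i'(A,T_k)=\delta_i(A,T_k)+b_k\) and \(\delta_i'(T_j,T_k)=\delta_i(T_j,T_k)+b_k\) for every \(j\neq k\). The prompt-specific null then satisfies \(\mu_i'(T_k)=\frac{1}{K-1}\sum_{j\neq k}(\delta_i(T_j,T_k)+b_k)=\mu_i(T_k)+b_k\). In each summand of \(C'\), the two \(b_k\) terms cancel, so \(\delta_i'(A,T_k)-\mu_i'(T_k)=\delta_i(A,T_k)-\mu_i(T_k)\) for every \(i\), and therefore \(C'=C\).

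\textbf{Invariance of \(Z\).} The only point needing care is \(\sigma(T_k)\). It is a population standard deviation over the multiset \(\{\delta_i(T_j,T_k):j\neq k,\ 1\le i\le N\}\), and every element of that multiset is shifted by the same constant \(b_k\). Since the population standard deviation is translation invariant (the mean shifts by \(b_k\), so each deviation from the mean is unchanged), \(\sigma'(T_k)=\sigma(T_k)>0\). Combined with \(C'=C\), this gives \(Z'=Z\) and rules out division by zero.

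I expect no real obstacle. The one subtlety is stating precisely that the shift must be common to the suspect's distances and to every pseudo-suspect distance that enters \(\mu_i\) and \(\sigma\). A shift applied to the suspect alone would not cancel, and a candidate-varying shift would only cancel within each fixed \(k\). That is exactly the scope the hypothesis imposes.
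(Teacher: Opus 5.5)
Your proposal is correct and follows essentially the same route as the paper's proof. You exchange the finite sums to get the identity, observe that \(\mu_i(T_k)\) shifts by \(b_k\) so each centered term is unchanged, and use translation invariance of the population standard deviation to fix \(\sigma(T_k)\) and hence \(Z\).
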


\begin{proof}
Exchanging the order of summation gives
\begin{align*}
\frac{1}{N}\sum_{i=1}^{N}\mu_i(T_k)
&=
\frac{1}{N(K-1)}
\sum_{i=1}^{N}\sum_{j\neq k}\delta_i(T_j,T_k) \\
&=
\frac{1}{K-1}\sum_{j\neq k}D(T_j,T_k).
\end{align*}
Substitution into the definition of \(C\) yields \eqref{eq:centered-identity}. Under the additive shift, \(\mu_i'(T_k)=\mu_i(T_k)+b_k\), so every centered distance is unchanged. The standard deviation is also unchanged because all distances entering it receive the same shift. Hence both \(C\) and \(Z\) remain unchanged.
\end{proof}

Thus, centering removes an offset shared by the evaluated model and all pool members relative to \(T_k\). It prevents \(T_k\) from being favored solely because all models are uniformly close to it. The null is computed without the evaluated model. Therefore, a low \(Z(A,T_k)\) indicates closeness relative to the supplied pool but does not by itself establish distillation.

\paragraph{Effect of pool composition}
\eqref{eq:centered-identity} also shows how the supplied pool affects centering. For \(K\geq3\), holding the representation, profiles, and distances fixed, removing a candidate \(T_r\) with \(r\neq k\) changes the centered score by
\begin{equation}
C^{(-r)}(A,T_k)-C(A,T_k)
=
\frac{1}{N(K-2)}
\sum_{i=1}^{N}
\left[
\delta_i(T_r,T_k)-\mu_i(T_k)
\right].
\label{eq:centered-removal}
\end{equation}
Therefore, including a pool member that is unusually close to \(T_k\) makes \(T_k\)'s centered score less favorable. This identity isolates the centering term because removing a candidate also changes \(\sigma(T_k)\). In our implementation, changing the pool may also alter the PoS \(n\)-gram vocabulary and candidate-contrast mask \(w_\lambda\), thereby changing the distances themselves.

\subsection{Finite-Prompt Stability}

For a fixed audited model \(S\), define the standardized per-prompt score
\begin{equation}
z_{ik}
=
\frac{\delta_i(S,T_k)-\mu_i(T_k)}{\sigma(T_k)}.
\label{eq:per-prompt-z}
\end{equation}
For a subset \(I\) of \(B\) audit prompts, let
\begin{equation}
Z_I(S,T_k)
=
\frac{1}{B}\sum_{i\in I}z_{ik}.
\label{eq:subset-z}
\end{equation}

\begin{proposition}[Stability under prompt subsampling]
\label{prop:finite-sample}
Assume \(\sigma(T_k)>0\) for every candidate. Let \(T_{k^\star}\) be the true source and suppose it uniquely minimizes the full-bank score. For each \(k\neq k^\star\), define
\begin{equation*}
g_{ik}=z_{ik}-z_{ik^\star},
\qquad
\gamma_k
=
\frac{1}{N}\sum_{i=1}^{N}g_{ik}
=
Z(S,T_k)-Z(S,T_{k^\star})
>0,
\end{equation*}
with range width
\[
L_k=\max_i g_{ik}-\min_i g_{ik}>0.
\]
If \(I\) is sampled uniformly without replacement from the \(N\) audit prompts, with \(1\leq B\leq N\), then
\begin{equation}
\Pr\!\left(
\arg\min_k Z_I(S,T_k)\neq k^\star
\right)
\leq
\sum_{k\neq k^\star}
\exp\left(
-\frac{2B\gamma_k^2}{L_k^2}
\right).
\label{eq:finite-prompt-bound}
\end{equation}
Consequently, with \(\gamma=\min_{k\neq k^\star}\gamma_k\) and \(L=\max_{k\neq k^\star}L_k\),
\begin{equation}
B
\geq
\frac{L^2}{2\gamma^2}
\log\frac{K-1}{\varepsilon}
\label{eq:prompt-sufficiency}
\end{equation}
is sufficient for subsampling error at most \(\varepsilon\).
\end{proposition}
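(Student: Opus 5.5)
The argument is a union bound over competitors combined with Hoeffding's inequality for sampling without replacement. Everything except the subset \(I\) is held fixed: the candidate profiles, the null locations \(\mu_i(T_k)\), and the scales \(\sigma(T_k)\). Hence each \(z_{ik}\) is a deterministic number attached to prompt \(i\), and \(g_{ik}=z_{ik}-z_{ik^\star}\) is a fixed finite population of \(N\) values with mean \(\gamma_k>0\) and range width \(L_k\).

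First, I reduce the misattribution event to pairwise events. If \(\arg\min_k Z_I(S,T_k)\neq k^\star\), then some \(k\neq k^\star\) satisfies \(Z_I(S,T_k)\leq Z_I(S,T_{k^\star})\). This holds under any tie-breaking rule, since a tie is counted as an error. By linearity of the subset mean in \eqref{eq:subset-z},
\[
Z_I(S,T_k)-Z_I(S,T_{k^\star})=\frac{1}{B}\sum_{i\in I}g_{ik}.
\]
The event is therefore \(\bar g_{I,k}:=\frac1B\sum_{i\in I}g_{ik}\leq 0\), which is equivalent to \(\bar g_{I,k}-\gamma_k\leq-\gamma_k\). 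A union bound over the \(K-1\) competitors gives the sum in \eqref{eq:finite-prompt-bound}.

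Second, I bound each term. When \(I\) is drawn uniformly without replacement, the sampled values \(g_{ik}\), \(i\in I\), are a sample without replacement from a finite population with values in \([\min_i g_{ik},\max_i g_{ik}]\) and population mean \(\gamma_k\). Hoeffding's theorem \citep{hoeffding1963probability} (Theorem 4) states that the tail bounds for the with-replacement mean also hold without replacement, because \(\mathbb{E}f(\text{sum})\) for convex continuous \(f\) is dominated by the i.i.d.\ case. Applying this to the exponential moment gives
\[
\Pr(\bar g_{I,k}-\gamma_k\leq-\gamma_k)\leq\exp(-2B\gamma_k^2/L_k^2).
\]
This is the step that needs care. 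One must invoke the without-replacement comparison explicitly rather than treat the draws as independent, and one must check that it holds for every \(1\leq B\leq N\). At \(B=N\) the bound is trivially consistent, since the left side is zero. The assumption \(L_k>0\) is needed only for the expression to be defined. If \(L_k=0\), every \(g_{ik}\) equals \(\gamma_k>0\) and the error probability is zero.

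Finally, I derive the sufficiency condition. Each term is at most \(\exp(-2B\gamma^2/L^2)\), because \(\gamma_k\geq\gamma\) and \(L_k\leq L\). The total error is therefore at most \((K-1)\exp(-2B\gamma^2/L^2)\). Setting this to be at most \(\varepsilon\) and solving for \(B\) gives \eqref{eq:prompt-sufficiency}. The proof needs no facts about distillation; it uses only the fixed-bank structure described at the start of this appendix.
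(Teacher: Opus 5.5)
Your proposal is correct and follows the paper's proof. Like the paper, you reduce misattribution to the pairwise events \(\frac{1}{B}\sum_{i\in I}g_{ik}\leq 0\), bound each with Hoeffding's inequality for sampling without replacement, take a union bound over the \(K-1\) competitors, and solve \((K-1)\exp(-2B\gamma^2/L^2)\leq\varepsilon\) for \(B\); your remarks on tie-breaking, on the convex-domination justification for the without-replacement case, and on the degenerate cases \(B=N\) and \(L_k=0\) are accurate details that the paper leaves implicit.
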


\begin{proof}
A competitor \(T_k\) can defeat \(T_{k^\star}\) only if
\begin{equation*}
Z_I(S,T_k)-Z_I(S,T_{k^\star})
=
\frac{1}{B}\sum_{i\in I}g_{ik}
\leq 0.
\end{equation*}
Applying Hoeffding's inequality for sampling without replacement \citep{hoeffding1963probability} to each competitor gives
\begin{equation*}
\Pr\left(
\frac{1}{B}\sum_{i\in I}g_{ik}\leq0
\right)
\leq
\exp\left(
-\frac{2B\gamma_k^2}{L_k^2}
\right).
\end{equation*}
A union bound over the \(K-1\) competitors yields \eqref{eq:finite-prompt-bound}. Bounding every summand using \(\gamma\) and \(L\) and solving for \(B\) gives \eqref{eq:prompt-sufficiency}.
\end{proof}

This result is a stability guarantee rather than an unconditional attribution guarantee. It assumes that the true source wins on the full prompt bank and bounds the probability that prompt subsampling changes that ranking. The required budget is governed by the hardest competitor rather than average separation across the pool. Requiring all \(J\) audited models to retain the correct attribution adds a union bound over models and replaces \(K-1\) in \eqref{eq:prompt-sufficiency} with \(J(K-1)\), with \(\gamma\) and \(L\) taken over all model and competitor pairs.

The bound applies directly to mean-score attribution, for which attribution is determined by the audited-model average in \eqref{eq:subset-z}. It does not directly cover the sorted-score voting component included in the original-evaluator rows of Tables~\ref{tab:budget-math} and~\ref{tab:budget-conversation}. In both cases, candidate responses, profiles, contrast masks, null statistics, and scales remain fixed while only the audited-model responses are subsampled.

\paragraph{Prompt-budget resampling}
We evaluate the twelve descendants from Section~\ref{sec:exp-descendants}. For each budget \(B<200\), we draw \(2{,}000\) uniformly random \(B\)-prompt subsets without replacement from the \(200\)-prompt bank and apply each subset to all descendants. The full bank at \(B=200\) is evaluated once. For score-based methods, the original evaluator counts each descendant as correct when either mean-score attribution or sorted-score voting assigns it to R1, and a subset succeeds only when all twelve descendants are correct. We also report mean-score attribution alone because it corresponds directly to Proposition~\ref{prop:finite-sample}. Each LLM judge assigns every descendant using its unique most frequent prompt-level label, with ties yielding no verdict. Tables~\ref{tab:budget-math} and~\ref{tab:budget-conversation} report the resulting full curves and the first evaluated budget \(n^\star\) at which the success rate reaches \(0.95\).

\begin{table*}[t]
\centering
\caption{\textbf{Prompt-budget curves on math probes.} Each value for \(B<200\) is the fraction of \(2{,}000\) prompt subsets on which all twelve descendants are assigned to R1. Under the original evaluator, each descendant is correct when either mean-score attribution or sorted-score voting assigns it to R1, and a subset succeeds only when all twelve descendants are correct. Mean-score attribution uses the lowest average candidate score alone. At \(B=200\), the complete prompt bank is evaluated once. \(n^\star\) is the first evaluated budget reaching \(0.95\). Dashes indicate that this level is not reached.}
\label{tab:budget-math}
\setlength{\tabcolsep}{2.5pt}
\resizebox{\textwidth}{!}{%
\begin{tabular}{ll*{15}{c}}
\toprule
Method & Aggregation
& 5 & 10 & 20 & 50 & 100 & 110 & 120 & 130
& 140 & 150 & 160 & 175 & 190 & 200 & \(n^\star\) \\
\midrule

\multicolumn{17}{l}{\textbf{OMI}} \\
\rowcolor{black!7}
\multicolumn{17}{l}{\textit{Reference-based}} \\
\rowcolor{black!7}
DistillDetect & Original evaluator
& 0.9885 & 1.0000 & 1.0000 & 1.0000 & 1.0000 & 1.0000 & 1.0000 & 1.0000
& 1.0000 & 1.0000 & 1.0000 & 1.0000 & 1.0000 & 1.0000 & 5 \\
\rowcolor{black!7}
DistillDetect & Mean-score attribution
& 0.9585 & 0.9905 & 0.9995 & 1.0000 & 1.0000 & 1.0000 & 1.0000 & 1.0000
& 1.0000 & 1.0000 & 1.0000 & 1.0000 & 1.0000 & 1.0000 & 5 \\
\midrule
\multicolumn{17}{l}{\textit{Output-only}} \\
PoS Templates & Original evaluator
& 0.0115 & 0.0095 & 0.0025 & 0.0000 & 0.0000 & 0.0000 & 0.0000 & 0.0000
& 0.0000 & 0.0000 & 0.0000 & 0.0000 & 0.0000 & 0.0000 & --- \\
PoS Templates & Mean-score attribution
& 0.0060 & 0.0060 & 0.0020 & 0.0000 & 0.0000 & 0.0000 & 0.0000 & 0.0000
& 0.0000 & 0.0000 & 0.0000 & 0.0000 & 0.0000 & 0.0000 & --- \\
LLM judge (Mistral) & Most frequent label
& 0.0045 & 0.0280 & 0.0860 & 0.2835 & 0.5495 & 0.5630 & 0.6085 & 0.6310
& 0.6640 & 0.6960 & 0.7060 & 0.7685 & 0.8875 & 1.0000 & 200 \\
LLM judge (Solar) & Most frequent label
& 0.0315 & 0.1155 & 0.2525 & 0.6075 & 0.8900 & 0.9230 & 0.9485 & 0.9630
& 0.9825 & 0.9895 & 1.0000 & 1.0000 & 1.0000 & 1.0000 & 130 \\
\cmidrule{1-17}
SCOUT (Ours) & Original evaluator
& 0.5400 & 0.7025 & 0.8665 & 0.9830 & 1.0000 & 1.0000 & 1.0000 & 1.0000
& 1.0000 & 1.0000 & 1.0000 & 1.0000 & 1.0000 & 1.0000 & 50 \\
SCOUT (Ours) & Mean-score attribution
& 0.3975 & 0.5995 & 0.8160 & 0.9800 & 1.0000 & 1.0000 & 1.0000 & 1.0000
& 1.0000 & 1.0000 & 1.0000 & 1.0000 & 1.0000 & 1.0000 & 50 \\
\quad \( - \) Scaling (\(C\)) & Original evaluator
& 0.5180 & 0.6850 & 0.8525 & 0.9825 & 1.0000 & 1.0000 & 1.0000 & 1.0000
& 1.0000 & 1.0000 & 1.0000 & 1.0000 & 1.0000 & 1.0000 & 50 \\
\quad \( - \) Scaling (\(C\)) & Mean-score attribution
& 0.3920 & 0.5915 & 0.8040 & 0.9775 & 0.9995 & 1.0000 & 1.0000 & 1.0000
& 1.0000 & 1.0000 & 1.0000 & 1.0000 & 1.0000 & 1.0000 & 50 \\
\qquad \( - \) Centering (\(D\)) & Original evaluator
& 0.4360 & 0.4500 & 0.5730 & 0.6975 & 0.7650 & 0.7660 & 0.7940 & 0.8300
& 0.8290 & 0.8405 & 0.8795 & 0.9305 & 0.9930 & 1.0000 & 190 \\
\qquad \( - \) Centering (\(D\)) & Mean-score attribution
& 0.2965 & 0.3245 & 0.4175 & 0.5485 & 0.6785 & 0.6800 & 0.7105 & 0.7630
& 0.7760 & 0.7930 & 0.8475 & 0.9150 & 0.9930 & 1.0000 & 190 \\

\midrule
\multicolumn{17}{l}{\textbf{s1}} \\
\rowcolor{black!7}
\multicolumn{17}{l}{\textit{Reference-based}} \\
\rowcolor{black!7}
DistillDetect & Original evaluator
& 0.7270 & 0.7990 & 0.8520 & 0.9415 & 0.9950 & 0.9980 & 0.9985 & 0.9995
& 1.0000 & 1.0000 & 1.0000 & 1.0000 & 1.0000 & 1.0000 & 100 \\
\rowcolor{black!7}
DistillDetect & Mean-score attribution
& 0.6525 & 0.6995 & 0.7560 & 0.8820 & 0.9825 & 0.9915 & 0.9970 & 0.9990
& 0.9995 & 1.0000 & 1.0000 & 1.0000 & 1.0000 & 1.0000 & 100 \\
\midrule
\multicolumn{17}{l}{\textit{Output-only}} \\
PoS Templates & Original evaluator
& 0.0320 & 0.0360 & 0.0325 & 0.0095 & 0.0000 & 0.0000 & 0.0000 & 0.0000
& 0.0000 & 0.0000 & 0.0000 & 0.0000 & 0.0000 & 0.0000 & --- \\
PoS Templates & Mean-score attribution
& 0.0185 & 0.0215 & 0.0185 & 0.0060 & 0.0000 & 0.0000 & 0.0000 & 0.0000
& 0.0000 & 0.0000 & 0.0000 & 0.0000 & 0.0000 & 0.0000 & --- \\
LLM judge (Mistral) & Most frequent label
& 0.0010 & 0.0050 & 0.0280 & 0.1235 & 0.2840 & 0.3275 & 0.3865 & 0.3770
& 0.3930 & 0.4175 & 0.4460 & 0.4285 & 0.4095 & 0.0000 & --- \\
LLM judge (Solar) & Most frequent label
& 0.0195 & 0.0505 & 0.1470 & 0.4705 & 0.8035 & 0.8275 & 0.8850 & 0.9025
& 0.9360 & 0.9620 & 0.9840 & 0.9975 & 1.0000 & 1.0000 & 150 \\
\cmidrule{1-17}
SCOUT (Ours) & Original evaluator
& 0.7745 & 0.9340 & 0.9955 & 1.0000 & 1.0000 & 1.0000 & 1.0000 & 1.0000
& 1.0000 & 1.0000 & 1.0000 & 1.0000 & 1.0000 & 1.0000 & 20 \\
SCOUT (Ours) & Mean-score attribution
& 0.6680 & 0.8835 & 0.9780 & 0.9995 & 1.0000 & 1.0000 & 1.0000 & 1.0000
& 1.0000 & 1.0000 & 1.0000 & 1.0000 & 1.0000 & 1.0000 & 20 \\
\quad \( - \) Scaling (\(C\)) & Original evaluator
& 0.7635 & 0.9290 & 0.9925 & 1.0000 & 1.0000 & 1.0000 & 1.0000 & 1.0000
& 1.0000 & 1.0000 & 1.0000 & 1.0000 & 1.0000 & 1.0000 & 20 \\
\quad \( - \) Scaling (\(C\)) & Mean-score attribution
& 0.6630 & 0.8810 & 0.9760 & 0.9995 & 1.0000 & 1.0000 & 1.0000 & 1.0000
& 1.0000 & 1.0000 & 1.0000 & 1.0000 & 1.0000 & 1.0000 & 20 \\
\qquad \( - \) Centering (\(D\)) & Original evaluator
& 0.9155 & 0.9935 & 1.0000 & 1.0000 & 1.0000 & 1.0000 & 1.0000 & 1.0000
& 1.0000 & 1.0000 & 1.0000 & 1.0000 & 1.0000 & 1.0000 & 10 \\
\qquad \( - \) Centering (\(D\)) & Mean-score attribution
& 0.8760 & 0.9875 & 0.9990 & 1.0000 & 1.0000 & 1.0000 & 1.0000 & 1.0000
& 1.0000 & 1.0000 & 1.0000 & 1.0000 & 1.0000 & 1.0000 & 10 \\
\bottomrule
\end{tabular}%
}
\end{table*}

\begin{table*}[t]
\centering
\caption{\textbf{Prompt-budget curves on conversation probes.} Entries follow Table~\ref{tab:budget-math}. Each value for \(B<200\) is the fraction of \(2{,}000\) prompt subsets on which all twelve descendants are assigned to R1. At \(B=200\), the complete prompt bank is evaluated once. Dashes indicate that the \(0.95\) criterion is not reached.}
\label{tab:budget-conversation}
\setlength{\tabcolsep}{2.5pt}
\resizebox{\textwidth}{!}{%
\begin{tabular}{ll*{15}{c}}
\toprule
Method & Aggregation
& 5 & 10 & 20 & 50 & 100 & 110 & 120 & 130
& 140 & 150 & 160 & 175 & 190 & 200 & \(n^\star\) \\
\midrule

\multicolumn{17}{l}{\textbf{OASST1}} \\
\rowcolor{black!7}
\multicolumn{17}{l}{\textit{Reference-based}} \\
\rowcolor{black!7}
DistillDetect & Original evaluator
& 0.0740 & 0.0275 & 0.0060 & 0.0000 & 0.0000 & 0.0000 & 0.0000 & 0.0000
& 0.0000 & 0.0000 & 0.0000 & 0.0000 & 0.0000 & 0.0000 & --- \\
\rowcolor{black!7}
DistillDetect & Mean-score attribution
& 0.0540 & 0.0220 & 0.0035 & 0.0000 & 0.0000 & 0.0000 & 0.0000 & 0.0000
& 0.0000 & 0.0000 & 0.0000 & 0.0000 & 0.0000 & 0.0000 & --- \\
\midrule
\multicolumn{17}{l}{\textit{Output-only}} \\
PoS Templates & Original evaluator
& 0.0000 & 0.0000 & 0.0000 & 0.0000 & 0.0000 & 0.0000 & 0.0000 & 0.0000
& 0.0000 & 0.0000 & 0.0000 & 0.0000 & 0.0000 & 0.0000 & --- \\
PoS Templates & Mean-score attribution
& 0.0000 & 0.0000 & 0.0000 & 0.0000 & 0.0000 & 0.0000 & 0.0000 & 0.0000
& 0.0000 & 0.0000 & 0.0000 & 0.0000 & 0.0000 & 0.0000 & --- \\
LLM judge (Mistral) & Most frequent label
& 0.0000 & 0.0000 & 0.0000 & 0.0000 & 0.0000 & 0.0000 & 0.0000 & 0.0000
& 0.0000 & 0.0000 & 0.0000 & 0.0000 & 0.0000 & 0.0000 & --- \\
LLM judge (Solar) & Most frequent label
& 0.0025 & 0.0030 & 0.0005 & 0.0000 & 0.0000 & 0.0000 & 0.0000 & 0.0000
& 0.0000 & 0.0000 & 0.0000 & 0.0000 & 0.0000 & 0.0000 & --- \\
\cmidrule{1-17}
SCOUT (Ours) & Original evaluator
& 0.6995 & 0.9225 & 0.9955 & 1.0000 & 1.0000 & 1.0000 & 1.0000 & 1.0000
& 1.0000 & 1.0000 & 1.0000 & 1.0000 & 1.0000 & 1.0000 & 20 \\
SCOUT (Ours) & Mean-score attribution
& 0.6265 & 0.8940 & 0.9930 & 1.0000 & 1.0000 & 1.0000 & 1.0000 & 1.0000
& 1.0000 & 1.0000 & 1.0000 & 1.0000 & 1.0000 & 1.0000 & 20 \\
\quad \( - \) Scaling (\(C\)) & Original evaluator
& 0.6940 & 0.9190 & 0.9935 & 1.0000 & 1.0000 & 1.0000 & 1.0000 & 1.0000
& 1.0000 & 1.0000 & 1.0000 & 1.0000 & 1.0000 & 1.0000 & 20 \\
\quad \( - \) Scaling (\(C\)) & Mean-score attribution
& 0.6105 & 0.8825 & 0.9880 & 1.0000 & 1.0000 & 1.0000 & 1.0000 & 1.0000
& 1.0000 & 1.0000 & 1.0000 & 1.0000 & 1.0000 & 1.0000 & 20 \\
\qquad \( - \) Centering (\(D\)) & Original evaluator
& 0.0065 & 0.0010 & 0.0000 & 0.0000 & 0.0000 & 0.0000 & 0.0000 & 0.0000
& 0.0000 & 0.0000 & 0.0000 & 0.0000 & 0.0000 & 0.0000 & --- \\
\qquad \( - \) Centering (\(D\)) & Mean-score attribution
& 0.0035 & 0.0010 & 0.0000 & 0.0000 & 0.0000 & 0.0000 & 0.0000 & 0.0000
& 0.0000 & 0.0000 & 0.0000 & 0.0000 & 0.0000 & 0.0000 & --- \\

\midrule
\multicolumn{17}{l}{\textbf{Dolly}} \\
\rowcolor{black!7}
\multicolumn{17}{l}{\textit{Reference-based}} \\
\rowcolor{black!7}
DistillDetect & Original evaluator
& 0.1750 & 0.1120 & 0.0510 & 0.0045 & 0.0000 & 0.0000 & 0.0000 & 0.0000
& 0.0000 & 0.0000 & 0.0000 & 0.0000 & 0.0000 & 0.0000 & --- \\
\rowcolor{black!7}
DistillDetect & Mean-score attribution
& 0.1415 & 0.1005 & 0.0420 & 0.0035 & 0.0000 & 0.0000 & 0.0000 & 0.0000
& 0.0000 & 0.0000 & 0.0000 & 0.0000 & 0.0000 & 0.0000 & --- \\
\midrule
\multicolumn{17}{l}{\textit{Output-only}} \\
PoS Templates & Original evaluator
& 0.0000 & 0.0000 & 0.0000 & 0.0000 & 0.0000 & 0.0000 & 0.0000 & 0.0000
& 0.0000 & 0.0000 & 0.0000 & 0.0000 & 0.0000 & 0.0000 & --- \\
PoS Templates & Mean-score attribution
& 0.0000 & 0.0000 & 0.0000 & 0.0000 & 0.0000 & 0.0000 & 0.0000 & 0.0000
& 0.0000 & 0.0000 & 0.0000 & 0.0000 & 0.0000 & 0.0000 & --- \\
LLM judge (Mistral) & Most frequent label
& 0.0020 & 0.0000 & 0.0000 & 0.0000 & 0.0000 & 0.0000 & 0.0000 & 0.0000
& 0.0000 & 0.0000 & 0.0000 & 0.0000 & 0.0000 & 0.0000 & --- \\
LLM judge (Solar) & Most frequent label
& 0.0025 & 0.0055 & 0.0045 & 0.0015 & 0.0000 & 0.0000 & 0.0000 & 0.0000
& 0.0000 & 0.0000 & 0.0000 & 0.0000 & 0.0000 & 0.0000 & --- \\
\cmidrule{1-17}
SCOUT (Ours) & Original evaluator
& 0.8755 & 0.9805 & 1.0000 & 1.0000 & 1.0000 & 1.0000 & 1.0000 & 1.0000
& 1.0000 & 1.0000 & 1.0000 & 1.0000 & 1.0000 & 1.0000 & 10 \\
SCOUT (Ours) & Mean-score attribution
& 0.8055 & 0.9495 & 0.9950 & 1.0000 & 1.0000 & 1.0000 & 1.0000 & 1.0000
& 1.0000 & 1.0000 & 1.0000 & 1.0000 & 1.0000 & 1.0000 & 20 \\
\quad \( - \) Scaling (\(C\)) & Original evaluator
& 0.8780 & 0.9815 & 0.9995 & 1.0000 & 1.0000 & 1.0000 & 1.0000 & 1.0000
& 1.0000 & 1.0000 & 1.0000 & 1.0000 & 1.0000 & 1.0000 & 10 \\
\quad \( - \) Scaling (\(C\)) & Mean-score attribution
& 0.8035 & 0.9490 & 0.9950 & 1.0000 & 1.0000 & 1.0000 & 1.0000 & 1.0000
& 1.0000 & 1.0000 & 1.0000 & 1.0000 & 1.0000 & 1.0000 & 20 \\
\qquad \( - \) Centering (\(D\)) & Original evaluator
& 0.0010 & 0.0000 & 0.0000 & 0.0000 & 0.0000 & 0.0000 & 0.0000 & 0.0000
& 0.0000 & 0.0000 & 0.0000 & 0.0000 & 0.0000 & 0.0000 & --- \\
\qquad \( - \) Centering (\(D\)) & Mean-score attribution
& 0.0000 & 0.0000 & 0.0000 & 0.0000 & 0.0000 & 0.0000 & 0.0000 & 0.0000
& 0.0000 & 0.0000 & 0.0000 & 0.0000 & 0.0000 & 0.0000 & --- \\
\bottomrule
\end{tabular}%
}
\end{table*}

Across the four probes, the two score aggregations yield the same \(n^\star\) except on Dolly, where SCOUT and its centered variant move from \(10\) prompts under the original evaluator to \(20\) under mean-score attribution. On OMI, SCOUT reaches the criterion at \(50\) prompts under both aggregations, earlier than the uncalibrated profile and output-only baselines. Raw profile distance reaches the criterion earlier on s1 but never reaches it on either conversation probe. On OASST1 and Dolly, centering separates SCOUT from every baseline, while candidate-specific scaling produces nearly identical prompt-budget curves. These results show that the conversation performance does not depend on the original evaluator combining two score aggregations.

\paragraph{Stability of attribution and abstention} Let \(\mathbf{z}\) and \(\mathbf{z}'\) be two candidate-score vectors satisfying
\(
\lVert \mathbf{z}'-\mathbf{z}\rVert_\infty\leq\eta,
\)
and let \(H(\mathbf{z})\) be the gap between the smallest and second-smallest scores. Since both order statistics can change by at most \(\eta\),
\(
\left|H(\mathbf{z}')-H(\mathbf{z})\right|\leq2\eta.
\)
The selected candidate is unchanged when \(H(\mathbf{z})>2\eta\). For a fixed threshold \(\tau\), the accept--abstain decision is unchanged whenever
\(
H(\mathbf{z})\geq\tau+2\eta
\)
or
\(
H(\mathbf{z})<\tau-2\eta.
\)
Therefore, the decision margin in \eqref{eq:margin-G} also measures robustness to score perturbations.

\subsection{Limits of Candidate-Only Calibration}

Candidate-pool calibration applies the same candidate-specific correction to every audited model. Such a correction fully removes an interaction when the remaining score differs from the source-related component only by a term shared across candidates. The following result characterizes when this is possible.

\begin{proposition}[Candidate-only correction]
\label{prop:candidate-only}
Suppose an audited-model--candidate distance admits the decomposition
\begin{equation}
d(S,k)=t(S,k)+a(S,k)+b(k),
\end{equation}
where \(t(S,k)\) is the source-related component, \(b(k)\) is a candidate-wide component, and \(a(S,k)\) is an audited-model--candidate interaction. Fix a positive candidate-dependent scale \(s(k)\). There exist a candidate-only correction \(c(k)\) and a function \(h(S)\) such that
\begin{equation}
\frac{d(S,k)-c(k)}{s(k)}
=
\frac{t(S,k)}{s(k)}+h(S)
\end{equation}
for every \(S\) and \(k\) if and only if
\begin{equation}
\frac{a(S,k)}{s(k)}=u(S)+v(k)
\end{equation}
for some functions \(u\) and \(v\).
\end{proposition}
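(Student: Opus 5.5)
Substituting the decomposition reduces the required identity to a separability condition on \(a(S,k)/s(k)\), which we prove in both directions by explicit construction.

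\textbf{Reduction.} Dividing \(d(S,k)-c(k)\) by \(s(k)\) gives
\[
\frac{d(S,k)-c(k)}{s(k)}=\frac{t(S,k)}{s(k)}+\frac{a(S,k)}{s(k)}+\frac{b(k)-c(k)}{s(k)}.
\]
Hence the desired identity holds for every \(S\) and \(k\) if and only if
\[
\frac{a(S,k)}{s(k)}=h(S)-\frac{b(k)-c(k)}{s(k)}\qquad\text{for all }S,k.
\]
Both directions now reduce to matching this form.

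\textbf{Necessity.} Suppose \(c\) and \(h\) exist. Set
\[
u(S):=h(S),\qquad v(k):=-\frac{b(k)-c(k)}{s(k)}=\frac{c(k)-b(k)}{s(k)}.
\]
Then the displayed equation is exactly \(a(S,k)/s(k)=u(S)+v(k)\). Since \(s(k)>0\), \(v\) is well defined, and it depends only on \(k\) because \(b\), \(c\), and \(s\) are candidate-only.

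\textbf{Sufficiency.} Suppose \(a(S,k)/s(k)=u(S)+v(k)\). Choose
\[
c(k):=b(k)+s(k)\,v(k),\qquad h(S):=u(S).
\]
Then \((b(k)-c(k))/s(k)=-v(k)\), so
\[
\frac{a(S,k)}{s(k)}+\frac{b(k)-c(k)}{s(k)}=u(S)+v(k)-v(k)=u(S)=h(S),
\]
which yields the identity. The correction \(c(k)\) depends only on the candidate, as required, because \(b\), \(s\), and \(v\) do.

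\textbf{Interpretation and main obstacle.} There is no real analytic obstacle; the only care needed is bookkeeping. We must check that \(c\) depends only on \(k\), and that \(h\) depends only on \(S\). The latter matters because a per-audited-model shift \(h(S)\) is harmless: it is common to all candidates and so cannot change the arg min or the gap \(G(S)\). The statement has no uniqueness claim, so none is needed. Still, \(u\) and \(v\) are determined only up to opposite constants, \(u+\alpha\) and \(v-\alpha\), which correspondingly shifts \(c\) by \(-\alpha s(k)\) and \(h\) by \(+\alpha\). For the converse, an interaction that is not additively separable after scaling, for instance pair-specific affinity between a particular \(S\) and a particular \(T_k\), cannot be removed by any candidate-only correction.
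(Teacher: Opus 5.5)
Your proof is correct and follows the paper's argument essentially verbatim: substitute the decomposition to get \(a(S,k)/s(k)=h(S)+(c(k)-b(k))/s(k)\) for necessity, and choose \(c(k)=b(k)+s(k)v(k)\), \(h(S)=u(S)\) for sufficiency. Your closing remarks (the \((u+\alpha, v-\alpha)\) non-uniqueness and the invariance of the argmin and of \(G(S)\) under a shared shift \(h(S)\)) are accurate additions but are not needed for the statement.
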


\begin{proof}
If such \(c(k)\) and \(h(S)\) exist, substituting the decomposition of \(d(S,k)\) gives
\begin{equation*}
\frac{a(S,k)}{s(k)}
=
h(S)+\frac{c(k)-b(k)}{s(k)},
\end{equation*}
which has the form \(u(S)+v(k)\). Conversely, if \(a(S,k)/s(k)=u(S)+v(k)\), choose
\begin{equation*}
c(k)=b(k)+s(k)v(k).
\end{equation*}
The required relation then holds with \(h(S)=u(S)\).
\end{proof}

Because \(h(S)\) is shared across candidates, it does not affect their ranking for a fixed audited model. For SCOUT, the candidate-only correction and scale are
\[
c(k)=\frac{1}{K-1}\sum_{j\neq k}D(T_j,T_k),
\qquad
s(k)=\sigma(T_k).
\]
Hence, it can remove candidate-wide proximity but cannot generally remove affinity specific to one audited-model--candidate pair. This limitation applies to candidate-only calibration generally rather than SCOUT alone. Shared model lineage may induce such an interaction, although the proposition does not imply that every lineage effect is nonseparable or unresolvable with additional information.

\subsection{Limits of Output-Only History Recovery}

\begin{proposition}[Limits of output-only history recovery]
\label{prop:history}
Let \(h_1\) and \(h_2\) be two training histories, and let \(P_{h_1}\) and \(P_{h_2}\) denote the distributions they induce over the complete current response observation \(\mathcal{Y}\) under a fixed prompting and decoding protocol. With equal prior probability on the histories, any output-only auditor distinguishing \(h_1\) from \(h_2\) has balanced accuracy at most
\begin{equation}
\operatorname{BA}
\leq
\frac{1+\operatorname{TV}(P_{h_1},P_{h_2})}{2}.
\label{eq:history-tv-bound}
\end{equation}
\end{proposition}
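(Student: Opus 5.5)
The argument is the standard Le Cam two-point bound: the auditor's accuracy is limited by how well the two history-induced output distributions can be told apart, and total variation is exactly that limit. Formally, an output-only auditor is a (possibly randomized) decision rule \(\psi:\mathcal{Y}\to\{h_1,h_2\}\), or more generally a measurable map \(\psi\) from \(\mathcal{Y}\) to probability distributions on \(\{h_1,h_2\}\). This covers SCOUT with a threshold, LLM judges, and anything else that only sees the current responses under the fixed protocol. Write \(f(y)=\Pr[\psi(y)=h_1]\in[0,1]\). Balanced accuracy is then
\[
\operatorname{BA}=\tfrac12\bigl(\mathbb{E}_{P_{h_1}}[f]+\mathbb{E}_{P_{h_2}}[1-f]\bigr)=\tfrac12\bigl(1+\mathbb{E}_{P_{h_1}}[f]-\mathbb{E}_{P_{h_2}}[f]\bigr).
\]
With equal priors this coincides with the Bayes accuracy, so both readings of the statement are handled at once.

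The key step is to bound \(\mathbb{E}_{P_{h_1}}[f]-\mathbb{E}_{P_{h_2}}[f]\) over all \(f:\mathcal{Y}\to[0,1]\).
\begin{itemize}
\item Take a common dominating measure \(\nu\), for instance \(\nu=P_{h_1}+P_{h_2}\), with densities \(p_1,p_2\).
\item The difference equals \(\int f\,(p_1-p_2)\,d\nu\).
\item This is at most \(\int (p_1-p_2)_+\,d\nu\), with equality when \(f=\mathbf{1}[p_1>p_2]\).
\item By definition, \(\int (p_1-p_2)_+\,d\nu=\operatorname{TV}(P_{h_1},P_{h_2})=\sup_A\,|P_{h_1}(A)-P_{h_2}(A)|\).
\end{itemize}
Substituting back gives \(\operatorname{BA}\le(1+\operatorname{TV})/2\). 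The bound is attained by the likelihood-ratio test, so it is tight.

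The step needing the most care is not the inequality but the modeling. We must justify that any output-only auditor factors through \(\mathcal{Y}\) alone; the auditor's internal randomness is independent of the history and is absorbed into \(f\). We must also note that \(\mathcal{Y}\) is the \emph{complete} observation: all prompts, repeated samples, and the candidate responses. The candidate responses have the same distribution under both histories, since the candidates themselves do not depend on the audited model's history. Including them therefore leaves the total variation unchanged, or we can simply fold them into \(\mathcal{Y}\). Finally, I would remark that the bound applies to any auditor, SCOUT included. When two histories, such as a source followed by overriding later SFT, induce nearly identical response distributions, no output-only method can distinguish them reliably. This is the formal counterpart of the statement that \(\bot\) does not certify non-use.
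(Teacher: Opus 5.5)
Your proof is correct and follows the same route as the paper. Both write balanced accuracy as \(\tfrac12+\tfrac12\bigl(P_{h_1}(A)-P_{h_2}(A)\bigr)\) and bound the difference by \(\operatorname{TV}(P_{h_1},P_{h_2})\); your version adds two small refinements the paper omits, namely allowing randomized auditors through \(f:\mathcal{Y}\to[0,1]\) instead of a deterministic acceptance set \(A\), and observing that the likelihood-ratio test attains the bound.
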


\begin{proof}
Let \(A\) be the set of observations for which the auditor selects \(h_1\). Its balanced accuracy is
\begin{align*}
\operatorname{BA}
&=
\frac{1}{2}P_{h_1}(A)
+
\frac{1}{2}\bigl(1-P_{h_2}(A)\bigr) \\
&=
\frac{1}{2}
+
\frac{1}{2}
\left(
P_{h_1}(A)-P_{h_2}(A)
\right) \\
&\leq
\frac{1+\operatorname{TV}(P_{h_1},P_{h_2})}{2},
\end{align*}
by the definition of total variation distance.
\end{proof}

If the two histories induce identical current-output distributions, no output-only auditor can exceed chance-level balanced accuracy. The proposition is conditional. It bounds recoverability when two histories induce similar current-output distributions but does not assert that post-training necessarily makes those distributions similar. The source-switching results below show that the evaluated output-based readouts can lose the earlier signature. They do not estimate total variation or establish failure for every possible auditor. Consequently, abstention indicates insufficient evidence in current outputs rather than proof that a candidate was never used earlier in training.

\subsection{Source Switching Under Later SFT}

Section~\ref{sec:findings} examines later training that does not introduce a new candidate source. In this section, we consider a second SFT round using responses from another candidate. Let \(A\) and \(B\) denote the first- and second-round sources, respectively.

\paragraph{Source Redirection}
We start from eighteen controlled students spanning three base models, three first-round sources, and two training sets. Continuing each student on either alternative source yields \(36\) switch arms, while continuing on \(A\) yields \(18\) same-source controls. Each second round uses \(800\) responses, and evaluation uses the probe set excluded from training.

Figure~\ref{fig:switch-attribution} shows rapid redirection toward \(B\). SCOUT selects \(B\) in \(16/36\) arms after \(5\%\) of the second round, \(29/36\) after \(10\%\), and \(33/36\) after \(15\%\). It selects \(B\) in every switch arm from \(25\%\) onward, while none of the same-source controls switches.

\paragraph{Earlier-Source Recovery}
To test whether \(A\) remains recoverable after the readout switches to \(B\), we use a separate \(18\)-arm replication crossing three base models with all six ordered source pairs. Each second round uses \(1{,}000\) responses. We remove \(B\), recompute the representation and calibration statistics over the remaining three candidates, and test whether \(A\) ranks first. Random selection would recover \(A\) in \(6/18\) arms. Figure~\ref{fig:switch-recovery} shows that SCOUT recovers \(A\) in \(18/18\) arms at \(1\%\) and \(2\%\) of the second round, \(15/18\) at \(5\%\), and \(6/18\) at the endpoint. Thus, after sufficient second-round SFT, SCOUT no longer recovers the earlier source reliably even after \(B\) is removed from the candidate pool.

\paragraph{Capability Tradeoff}
For each benchmark, we define source strength using the median accuracy before the second round among students initially trained on that source. We call \(B\) stronger when its median is at least that of \(A\), and weaker otherwise. Figure~\ref{fig:switch-capability} shows that when \(B\) is stronger, median accuracy rises by \(0.143\) on both GSM8K and MATH-500. When \(B\) is weaker, it falls by \(0.105\) and \(0.136\), respectively.

The capability change depends on the relative strength of the two sources. Switching to a stronger \(B\) preserves or improves median benchmark accuracy, whereas switching to a weaker \(B\) reduces it. However, without a matched \(B\)-only baseline, these experiments do not isolate whether benefits acquired from \(A\) remain after switching. Section~\ref{sec:findings} instead considers subsequent training without source replacement and finds that the original signature remains detectable.

\begin{figure*}[t]
  \centering
  \begin{subfigure}[t]{0.32\textwidth}
    \centering
    \includegraphics[width=\linewidth]{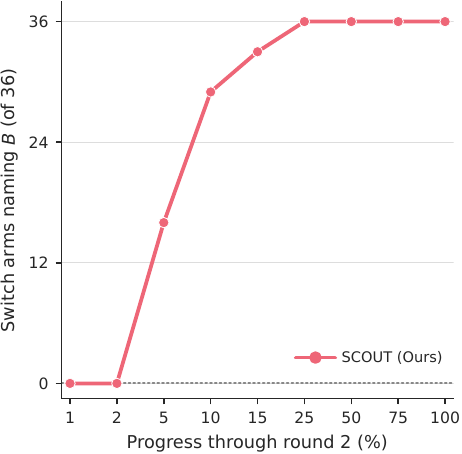}
    \caption{Attribution to the new source}
    \label{fig:switch-attribution}
  \end{subfigure}\hfill
  \begin{subfigure}[t]{0.32\textwidth}
    \centering
    \includegraphics[width=\linewidth]{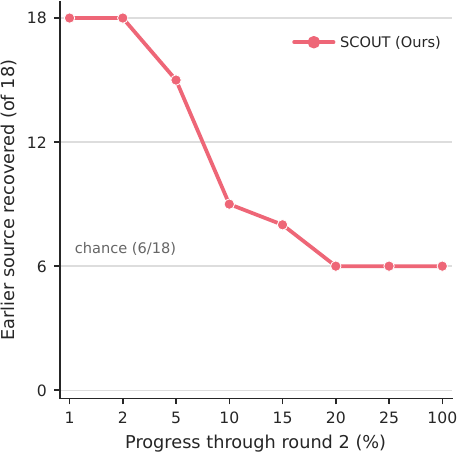}
    \caption{Recovery of the earlier source}
    \label{fig:switch-recovery}
  \end{subfigure}\hfill
  \begin{subfigure}[t]{0.32\textwidth}
    \centering
    \includegraphics[width=\linewidth]{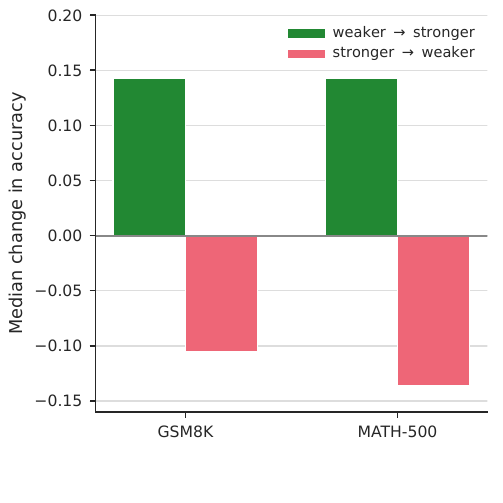}
    \caption{Capability change}
    \label{fig:switch-capability}
  \end{subfigure}
  \caption{\textbf{Source switching under a second SFT round.} \textbf{(a)} SCOUT attribution to \(B\) across \(36\) switch arms, with no switches among the \(18\) same-source controls. \textbf{(b)} Recovery of \(A\) across \(18\) arms after removing \(B\). The dashed line marks chance. \textbf{(c)} Median endpoint accuracy change when moving to a stronger or weaker source.}
  \label{fig:switch}
\end{figure*}

\paragraph{Redirection Across Readouts}

\begin{table*}[t]
  \centering
  \small
  \caption{\textbf{Attribution to the second-round source during source switching.} Entries count the \(18\) switch arms attributed to \(B\) after each percentage of the second SFT round. Higher counts indicate stronger redirection, not better attribution. DistillDetect uses the pre-distillation base as its reference.}
  \label{tab:switch-readouts}
  \begin{tabular}{lrrrrrrrr}
    \toprule
    Method
      & \(1\%\) & \(2\%\) & \(5\%\) & \(10\%\)
      & \(15\%\) & \(20\%\) & \(25\%\) & \(100\%\) \\
    \midrule
    \rowcolor{black!7}
    \multicolumn{9}{l}{\textit{Reference-based}} \\
    \rowcolor{black!7}
    DistillDetect
      & 0 & 0 & 2 & 11 & 12 & 14 & 14 & 14 \\
    \midrule
    \multicolumn{9}{l}{\textit{Output-only}} \\
    PoS Templates
      & 0 & 0 & 6 & 14 & 16 & 18 & 18 & 18 \\
    SCOUT (Ours)
      & 0 & 0 & 7 & 15 & 17 & 17 & 17 & 18 \\
    \quad \( - \) Scaling (\(C\))
      & 0 & 0 & 7 & 15 & 17 & 17 & 17 & 18 \\
    \qquad \( - \) Centering (\(D\))
      & 0 & 0 & 6 & 15 & 17 & 17 & 17 & 18 \\
    \bottomrule
  \end{tabular}
\end{table*}

Table~\ref{tab:switch-readouts} compares readouts on the \(18\)-arm replication. Every method selects \(B\) in a majority of arms by \(10\%\). All output-only readouts select \(B\) in \(18/18\) arms at the endpoint, showing that redirection is not specific to SCOUT or profile aggregation.

These results do not establish that the earlier history is erased or inaccessible to every possible auditor. They show that later SFT can redirect the signature expressed in current outputs, supporting the output-only limit formalized in Proposition~\ref{prop:history}.

\section{Additional Attribution Analyses}
\label{app:tables}
This appendix reports two additional analyses. First, it supplements the controlled evaluation with threshold-sweep and significance analyses. Second, it tests candidate-pool sensitivity using a separate panel of seven public R1-derived suspects.

\subsection{Controlled Detection Ablation}

Table~\ref{tab:detection-roc} separates score discrimination from the fold-fitted threshold transfer evaluated in Table~\ref{tab:controlled-detection}. We sweep the rejection threshold after evaluating each student with its source present and removed. At every threshold, detection requires correct attribution on both probes, while a false positive requires acceptance after source removal under the same two-probe decision rule.

\begin{table*}[t]
  \centering
  \caption{\textbf{Controlled detection ablation.} AUC is computed from student-level detection and false-positive rates. Confidence intervals are obtained by bootstrapping students, with all evaluation cells belonging to each sampled student retained together. At zero false positives on the corresponding aggregation axis, \emph{Cells} reports the maximum number of correctly accepted source-present probe cells, whereas \emph{Students} reports the maximum number of students correctly accepted on both probes. The two columns may use different thresholds. Higher is better.}
  \label{tab:detection-roc}
  \small
  \setlength{\tabcolsep}{5pt}
  \begin{tabular}{lccc}
    \toprule
    Method & AUC [95\% CI] & Cells & Students \\
    \midrule

    \multicolumn{4}{l}{\textbf{Math, 1.5--4B students}} \\
    \rowcolor{black!7}
    DistillDetect
      & 0.8587 [0.7895, 0.9501] & 26/38 & 13/19 \\
    PoS Templates
      & 0.7230 [0.5346, 0.8726] & 15/38 & 11/19 \\
    \midrule
    SCOUT (Ours)
      & 1.0000 [1.0000, 1.0000] & 37/38 & 19/19 \\
    \quad \( - \) Scaling (\(C\))
      & 1.0000 [1.0000, 1.0000] & 33/38 & 19/19 \\
    \qquad \( - \) Centering (\(D\))
      & 1.0000 [1.0000, 1.0000] & 35/38 & 19/19 \\

    \addlinespace
    \midrule
    \multicolumn{4}{l}{\textbf{Conversation, 1.5--4B students}} \\
    \rowcolor{black!7}
    DistillDetect
      & 0.8255 [0.7729, 0.9197] & 24/38 & 12/19 \\
    PoS Templates
      & 0.8172 [0.6149, 0.9723] & 27/38 & 12/19 \\
    \midrule
    SCOUT (Ours)
      & 0.9695 [0.9169, 1.0000] & 34/38 & 17/19 \\
    \quad \( - \) Scaling (\(C\))
      & 0.9501 [0.8615, 0.9945] & 25/38 & 14/19 \\
    \qquad \( - \) Centering (\(D\))
      & 0.9030 [0.7701, 0.9917] & 25/38 & 14/19 \\

    \addlinespace
    \midrule
    \multicolumn{4}{l}{\textbf{Math, 7--8B students}} \\
    \rowcolor{black!7}
    DistillDetect
      & 0.9478 [0.8752, 1.0000] & 23/42 & 11/21 \\
    PoS Templates
      & 0.6485 [0.4671, 0.8277] & 13/42 & 11/21 \\
    \midrule
    SCOUT (Ours)
      & 0.9932 [0.9728, 1.0000] & 38/42 & 20/21 \\
    \quad \( - \) Scaling (\(C\))
      & 0.9478 [0.8821, 1.0000] & 36/42 & 19/21 \\
    \qquad \( - \) Centering (\(D\))
      & 0.9410 [0.8413, 1.0000] & 38/42 & 19/21 \\
    \bottomrule
  \end{tabular}
\end{table*}

In the 1.5--4B math cohort, all three profile variants attain an AUC of \(1.000\) and detect all nineteen students at zero false positives, while SCOUT gives the highest cell-level count. In the conversation cohort, centering raises AUC from \(0.903\) to \(0.950\), and scaling raises it to \(0.970\) while increasing student detection from \(14/19\) to \(17/19\). In the 7--8B math cohort, SCOUT attains an AUC of \(0.993\) and detects \(20/21\) students, exceeding the uncalibrated and centered profiles. These threshold sweeps diagnose score separation, whereas Table~\ref{tab:controlled-detection} remains the operational evaluation of threshold transfer.

\begin{table*}[t]
  \centering
  \caption{\textbf{Adjusted significance values underlying controlled detection.} The source-present column reports the range of \(p^\star=\min\{1,(K-1)p\}\), where smaller values provide stronger evidence for correct acceptance. The source-removed columns report the minimum under backbone and teacher holdout. A source-removed value below \(0.05\) identifies a significant gap in one probe cell but does not itself constitute a false acceptance, which also requires the fold-fitted threshold and \(p^\star<0.05\) on both probes with the same selected candidate. LLM judges are omitted because they do not produce score gaps or fitted thresholds.}
  \label{tab:detection-pstar}
  \small
  \resizebox{\textwidth}{!}{%
  \begin{tabular}{lccc}
    \toprule
    Method
      & \shortstack{Source present\\\(p^\star\) range}
      & \shortstack{Source removed\\backbone-holdout minimum \(p^\star\)}
      & \shortstack{Source removed\\teacher-holdout minimum \(p^\star\)} \\
    \midrule

    \multicolumn{4}{l}{\textbf{Math, 1.5--4B students}} \\
    \rowcolor{black!7}
    DistillDetect
      & \(8.35{\times}10^{-164}\)--\(1.00\)
      & \(4.18{\times}10^{-60}\)
      & \(1.32{\times}10^{-78}\) \\
    PoS Templates
      & \(2.19{\times}10^{-34}\)--\(1.00\)
      & \(8.22{\times}10^{-29}\)
      & \(1.43{\times}10^{-34}\) \\
    SCOUT (Ours)
      & \(3.85{\times}10^{-92}\)--\(1.00\)
      & \(0.402\)
      & \(1.02{\times}10^{-6}\) \\

    \midrule
    \multicolumn{4}{l}{\textbf{Conversation, 1.5--4B students}} \\
    \rowcolor{black!7}
    DistillDetect
      & \(6.44{\times}10^{-130}\)--\(1.00\)
      & \(1.11{\times}10^{-82}\)
      & \(7.19{\times}10^{-77}\) \\
    PoS Templates
      & \(5.27{\times}10^{-31}\)--\(1.00\)
      & \(0.708\)
      & \(3.80{\times}10^{-16}\) \\
    SCOUT (Ours)
      & \(3.53{\times}10^{-124}\)--\(1.00\)
      & \(0.487\)
      & \(1.00\) \\

    \midrule
    \multicolumn{4}{l}{\textbf{Math, 7--8B students}} \\
    \rowcolor{black!7}
    DistillDetect
      & \(7.50{\times}10^{-155}\)--\(1.00\)
      & \(1.11{\times}10^{-62}\)
      & \(4.93{\times}10^{-53}\) \\
    PoS Templates
      & \(2.15{\times}10^{-34}\)--\(1.00\)
      & \(4.68{\times}10^{-33}\)
      & \(9.56{\times}10^{-29}\) \\
    SCOUT (Ours)
      & \(5.62{\times}10^{-105}\)--\(1.00\)
      & \(7.58{\times}10^{-16}\)
      & \(7.58{\times}10^{-16}\) \\
    \bottomrule
  \end{tabular}%
  }
\end{table*}

Table~\ref{tab:detection-pstar} shows that individual source-removed probe cells can have \(p^\star<0.05\) even when the complete decision rule produces no false acceptance. For SCOUT, this occurs under teacher holdout in the 1.5--4B math cohort and under both holdouts in the 7--8B cohort. However, these significant cells do not align across both probes with the same selected candidate, so no source-removed student is accepted. In the conversation cohort, the minimum source-removed \(p^\star\) for SCOUT remains above \(0.05\) under both holdouts. An upper endpoint of \(1.00\) indicates that at least one source-present probe cell has no significant top-two gap, preventing detection for the corresponding student.

\subsection{Candidate-Pool Stress Test with Source Relatives}
\label{app:source-relatives}

To test sensitivity to close alternatives, we construct a source-relative set containing DeepSeek-R1-Distill-Qwen-1.5B, 7B, 14B, and 32B and DeepSeek-R1-Distill-Llama-8B and 70B~\citep{deepseekai2025r1}, together with s1.1-32B~\citep{muennighoff2025s1simpletesttimescaling}. When auditing one of these seven suspects, the expanded candidate pool combines the original ten candidates with the other six source-relative models, excluding the audited suspect. Each trial is therefore a sixteen-way attribution decision that includes DeepSeek-R1 and six alternative R1-derived models. This paired comparison holds the suspect and audit prompts fixed while changing the candidate pool. Table~\ref{tab:source-relatives} reports how often each method selects DeepSeek-R1. We use OMI because its audit prompts do not overlap the suspects' training data.

\begin{table}[t]
  \centering
  \caption{\textbf{Candidate-pool stress test with source relatives.} The original pool contains ten candidates. For each suspect, the expanded pool adds the other six models from the source-relative set described above, giving sixteen candidates. Each entry counts DeepSeek-R1 selections among seven suspects. DistillDetect uses the corresponding pre-distillation checkpoint. Higher is better.}
  \label{tab:source-relatives}
  \small
  \begin{tabular}{lcc}
    \toprule
    Method & Original Pool & Expanded Pool \\
    \midrule
    \rowcolor{black!7}
    \multicolumn{3}{l}{\textit{Reference-based}} \\
    \rowcolor{black!7}
    DistillDetect, ref. \(=S^{(0)}\)
      & 7/7 & 6/7 \\
    \midrule
    \multicolumn{3}{l}{\textit{Output-only baselines}} \\
    PoS Templates
      & 5/7 & 1/7 \\
    LLM judge (Mistral)
      & 6/7 & 1/7 \\
    LLM judge (Solar)
      & 7/7 & 1/7 \\
    \midrule
    \multicolumn{3}{l}{\textit{SCOUT and ablations}} \\
    SCOUT (Ours)
      & 7/7 & 1/7 \\
    \quad \( - \) Scaling (\(C\))
      & 7/7 & 1/7 \\
    \qquad \( - \) Centering (\(D\))
      & 7/7 & 1/7 \\
    \bottomrule
  \end{tabular}
\end{table}

Every output-only method selects DeepSeek-R1 for only \(1/7\) suspects with the expanded pool, including several methods that attain \(7/7\) with the original pool. DistillDetect instead retains \(6/7\) selections when given the corresponding pre-distillation reference. Thus, selecting DeepSeek-R1 from current outputs is highly sensitive to close source relatives, whereas access to a historical reference largely preserves source resolution. For SCOUT, this sensitivity is consistent with the candidate-only calibration limit in Proposition~\ref{prop:candidate-only}.

\section{Additional Trajectory Analyses}
\label{app:template}
\begin{table*}[t]
\centering
\caption{\textbf{Training stages in the checkpoint trajectories.} \(S^{(0)}\) is each lineage's pre-distillation checkpoint, and later columns name the training applied after the preceding checkpoint. Source headings indicate the designated teacher used for the reported margin rather than every contributor to the training mixture. Gray rows distinguish source and lineage anchors from their subsequent checkpoints. Dashes indicate that the trajectory ends.}
\label{tab:trajectory-values}
\small
\setlength{\tabcolsep}{4pt}
\resizebox{\textwidth}{!}{%
\begin{tabular}{lccccc}
\toprule
Model or ladder
  & \(S^{(0)}\)
  & \(S^{(1)}\)
  & \(S^{(2)}\)
  & \(S^{(3)}\)
  & \(S^{(4)}\) \\
\midrule

\multicolumn{6}{l}{\textbf{(a)} Thirteen public ladders} \\

\rowcolor{black!7}
\multicolumn{6}{l}{DeepSeek-R1} \\
OLMo 3 7B
  & Base & Dolci-Think SFT & DPO & RLVR & --- \\
OLMo 3 32B
  & Base & Dolci-Think SFT & DPO & RLVR & --- \\
Skywork-OR1 32B
  & Base & R1 distillation & RL & --- & --- \\
AceReason 14B
  & Base & R1 distillation & RL & --- & --- \\
\addlinespace

\rowcolor{black!7}
\multicolumn{6}{l}{GPT-4o} \\
OLMo 2 7B
  & Base & T\"{u}lu 3 SFT & DPO & RLVR & --- \\
OLMoE 1B-7B
  & Base & T\"{u}lu 3.1 SFT & DPO & RLVR & --- \\
OLMo 2 13B
  & Base & T\"{u}lu 3 SFT & DPO & RLVR & --- \\
OLMo 2 32B
  & Base & T\"{u}lu 3 SFT & DPO & RLVR & --- \\
\addlinespace

\rowcolor{black!7}
\multicolumn{6}{l}{QwQ-32B-Preview} \\
Sky-T1 7B
  & Base & QwQ SFT & PRIME RL & QwQ+self SFT & RLOO RL \\
Sky-T1 32B
  & Base & QwQ SFT & Preference optimization & --- & --- \\
\addlinespace

\rowcolor{black!7}
\multicolumn{6}{l}{GLM-4.6} \\
DeepDive 4B
  & Base & GLM-4.6 SFT & C-GRPO & --- & --- \\
DeepDive 30B-A3B
  & Base & GLM-4.6 SFT & C-GRPO & --- & --- \\
\addlinespace

\rowcolor{black!7}
\multicolumn{6}{l}{Mixtral-8x7B-Instruct} \\
Merlinite 7B
  & Base & Mixtral SFT & RLAIF & --- & --- \\

\midrule
\multicolumn{6}{l}{\textbf{(b)} Qwen2.5-Math lineages} \\

\rowcolor{black!7}
\multicolumn{6}{l}{1.5B lineage} \\
\rowcolor{black!7}
R1-Distill-Qwen-1.5B
  & Base & R1 distillation & --- & --- & --- \\
DeepScaleR-1.5B
  & Base & R1 distillation & GRPO & --- & --- \\
ZR1-1.5B
  & Base & R1 distillation & PRIME+RLOO & --- & --- \\
Nemotron-Research-1.5B
  & Base & R1 distillation & ProRL & --- & --- \\
DRPO-1.5B
  & Base & R1 distillation & DRPO & --- & --- \\
DisCO-1.5B-logL
  & Base & R1 distillation & DisCO & --- & --- \\
\addlinespace

\rowcolor{black!7}
\multicolumn{6}{l}{7B lineage} \\
\rowcolor{black!7}
R1-Distill-Qwen-7B
  & Base & R1 distillation & --- & --- & --- \\
OREAL-7B
  & Base & R1 distillation & Outcome RL & --- & --- \\
Spiral-7B
  & Base & R1 distillation & Self-play RL & --- & --- \\
Skywork-OR1-7B
  & Base & R1 distillation & GRPO & --- & --- \\
DRPO-7B
  & Base & R1 distillation & DRPO & --- & --- \\
DisCO-7B-logL
  & Base & R1 distillation & DisCO & --- & --- \\
Light-R1-7B-DS
  & Base & R1 distillation & SFT on R1 outputs & --- & --- \\
R1-Distill-7B-ThinkPO
  & Base & R1 distillation & DPO with R1 chosen & --- & --- \\

\midrule
\multicolumn{6}{l}{\textbf{(c)} Dream-7B lineages} \\

\rowcolor{black!7}
Dream-v0-Instruct-7B
  & Base & Instruction tuning & --- & --- & --- \\
SARDI
  & Base & Instruction tuning & SFT on GPT-4o-mini outputs & --- & --- \\
R2-dLLM
  & Base & Instruction tuning & Redundancy-aware SFT & --- & --- \\
dParallel
  & Base & Instruction tuning & Self-distillation & --- & --- \\
d3LLM
  & Base & Instruction tuning & Pseudo-trajectory distillation & --- & --- \\
\bottomrule
\end{tabular}%
}
\end{table*}

This appendix extends the trajectory analysis in Section~\ref{sec:findings} by documenting the released checkpoint sequences analyzed there and reporting the corresponding s1 results. All analyses use the same thirteen-candidate pool, with candidate responses, profiles, and empirical nulls fixed across checkpoints and panels for each probe. Table~\ref{tab:trajectory-values} maps every checkpoint used in this analysis to its training stage. Panel \textbf{(a)} groups ladders by their designated source, panel \textbf{(b)} measures margins toward R1, and panel \textbf{(c)} reports separate GPT-4o and Sonnet 3.5 readings.

\begin{figure*}[t]
\centering
\begin{subfigure}[b]{0.44\textwidth}
\centering
\includegraphics[width=\linewidth]{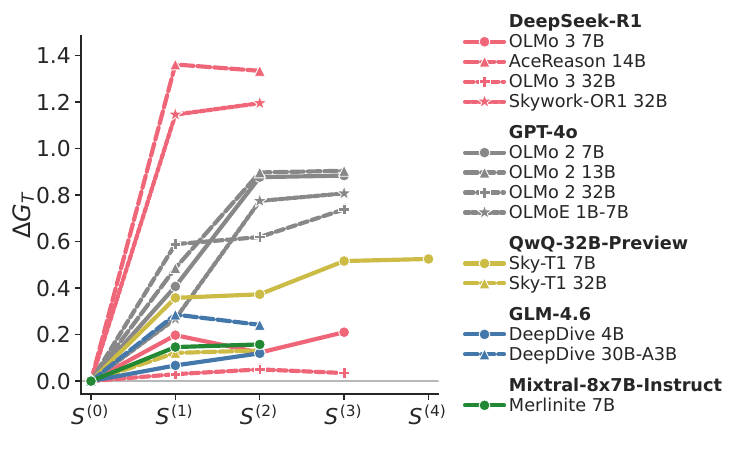}
\caption{Thirteen public ladders}
\label{fig:trajectories-s1-a}
\end{subfigure}\hfill
\begin{subfigure}[b]{0.27\textwidth}
\centering
\includegraphics[width=\linewidth]{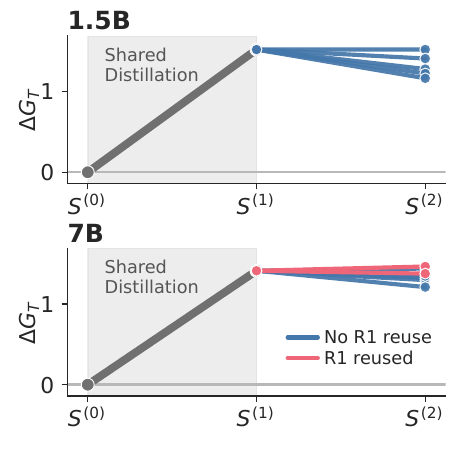}
\caption{Qwen2.5-Math lineages}
\label{fig:trajectories-s1-b}
\end{subfigure}\hfill
\begin{subfigure}[b]{0.27\textwidth}
\centering
\includegraphics[width=\linewidth]{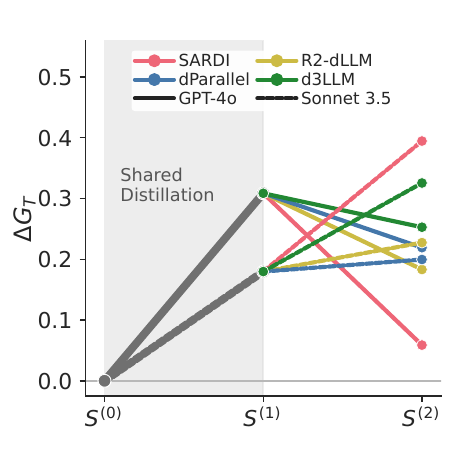}
\caption{Dream-7B lineages}
\label{fig:trajectories-s1-c}
\end{subfigure}
\caption{\textbf{Source-margin shifts from the pre-distillation state on s1.} Panels \textbf{(a)}, \textbf{(b)}, and \textbf{(c)} correspond to the OMI results in Figure~\ref{fig:trajectories} and show thirteen public ladders, two R1-distilled lineages and their descendants, and four post-training branches of Dream-7B, respectively. Positive values indicate a larger margin toward the indicated source than at \(S^{(0)}\). Every post-distillation shift is positive, reproducing the OMI pattern. Shading marks distillation in \textbf{(b)} and \textbf{(c)}. Panel scales differ.}
\label{fig:trajectories-s1}
\end{figure*}

\subsection{Thirteen Public Ladders}

The thirteen ladders comprise checkpoint sequences from OLMo 2~\citep{walsh2025}, OLMo 3~\citep{olmo2026olmo3}, OLMoE~\citep{muennighoff2024olmoeopenmixtureofexpertslanguage}, Skywork-OR1~\citep{he2025skyworkopenreasoner1}, AceReason~\citep{chen2025acereasonnemotronadvancingmathcode}, Sky-T1~\citep{sky-t1-7b,sky_t1_2025}, DeepDive~\citep{zhang2026chainingevidencerobustreinforcement}, and Merlinite~\citep{sudalairaj2024lablargescalealignmentchatbots}. Their five designated sources are DeepSeek-R1~\citep{deepseekai2025r1}, GPT-4o~\citep{openai2024gpt4ocard}, QwQ-32B-Preview~\citep{qwq-32b-preview}, GLM-4.6~\citep{team2025glm45agenticreasoningcoding}, and Mixtral-8x7B-Instruct~\citep{jiang2024mixtralexperts}. Figures~\ref{fig:trajectories-a} and~\ref{fig:trajectories-s1-a} show the thirteen ladders on OMI and s1. Across thirteen ladders evaluated on two probes, \(\Delta G_T>0\) in all \(68\) post-distillation checkpoint--probe observations. These comprise \(26\) observations immediately after distillation and \(42\) observations from \(21\) later checkpoints.

\subsection{R1 Descendant Lineages}

We anchor the two R1 lineages of Section~\ref{sec:exp-descendants} at their Qwen2.5-Math bases~\citep{yang2024qwen25mathtechnicalreportmathematical}. Their distilled parents are DeepSeek-R1-Distill-Qwen-1.5B and 7B~\citep{deepseekai2025r1}. The twelve descendants are DeepScaleR-1.5B~\citep{deepscaler2025}, ZR1-1.5B~\citep{zyphra2025ZR1}, Nemotron-Research-1.5B~\citep{liu2025prorl}, DRPO-1.5B and 7B~\citep{li2025drpo}, DisCO-1.5B-logL and 7B-logL~\citep{li2025disco}, OREAL-7B~\citep{lyu2025exploring}, Spiral-7B~\citep{liu2025spiral}, Skywork-OR1-7B~\citep{he2025skyworkopenreasoner1}, Light-R1-7B-DS~\citep{lightr1proj}, and R1-Distill-7B-ThinkPO~\citep{yang2025thinkingpreferenceoptimization}. Figures~\ref{fig:trajectories-b} and~\ref{fig:trajectories-s1-b} show positive \(\Delta G_T\) for both distilled parents and all twelve descendants on both probes. Consequently, all \(28\) model--probe observations are positive. Seven of twelve descendants on OMI and three of twelve on s1 exceed their respective distilled parents, showing that the signature persists but does not consistently strengthen during subsequent training.

\subsection{Masked Diffusion Lineage}

The Dream-7B~\citep{ye2025dream7bdiffusionlarge} analysis tests whether the contiguous PoS \(n\)-gram signal depends on left-to-right autoregressive generation. Dream-7B instead generates through iterative masked denoising. We evaluate its instruction-tuned checkpoint and four public post-training branches, SARDI~\citep{junger2026selfaugmenting}, R2-dLLM~\citep{du2026r2dllmacceleratingdiffusionlarge}, dParallel~\citep{chen2026dparallel}, and d3LLM~\citep{qian2026dllm}, relative to the pre-distillation base.

Because its instruction data include responses from GPT-4o~\citep{openai2024gpt4ocard} and Sonnet 3.5~\citep{claude35}, Figures~\ref{fig:trajectories-c} and~\ref{fig:trajectories-s1-c} report separate margins toward both models. Both margins increase at \(S^{(1)}\) after instruction tuning and remain positive at \(S^{(2)}\) for all four branches on both probes. SARDI underwent additional SFT on GPT-4o-mini outputs between \(S^{(1)}\) and \(S^{(2)}\), which may affect its GPT-4o margin at \(S^{(2)}\). Even when SARDI is excluded to account for this additional supervision, the GPT-4o margins at \(S^{(2)}\) remain positive for all three other branches on both probes.

\section{Experimental Details}
\label{app:details}
This appendix specifies the datasets, cohort construction, response generation, distillation, candidate pools, and scoring procedures underlying the retrospective, controlled, and trajectory analyses.

\paragraph{Controlled cohorts} The controlled evaluation in Section~\ref{sec:exp-controlled} compares a math cohort following DistillDetect~\citep{rawat2026referencebaseddistillationdetectionllms} with a conversation cohort constructed for this study to test whether attribution extends beyond math. Both begin as \(24\)-student grids with the same backbone-source structure, using Gemma-3-4B-PT~\citep{gemmateam2025gemma3technicalreport}, Llama-3.2-3B-Instruct~\citep{grattafiori2024llama3herdmodels}, Qwen-2.5-1.5B, and Qwen-2.5-3B~\citep{qwen2.5} as backbones. Within each grid, the source varies among GPT-OSS-120B~\citep{openai2025gptoss120bgptoss20bmodel}, Llama-3.3-70B-Instruct, and Qwen-3-8B~\citep{qwen3}. The math grid trains each student on \(1{,}000\) responses from either OMI~\citep{toshniwal2024openmathinstruct} or s1~\citep{muennighoff2025s1simpletesttimescaling} and retains the nineteen students for which distillation improves performance over the corresponding base model as assessed on GSM8K~\citep{cobbe2021gsm8k} and MATH-500~\citep{hendrycksmath2021}. The conversation grid instead trains each student on \(937\) responses from either OASST1~\citep{3666122.3668186} or Dolly~\citep{DatabricksBlog2023DollyV2}. Filtering Dolly and excluding its \(200\) audit prompts leaves \(937\) eligible prompts. The OASST1 training bank is also disjoint from its \(200\) audit prompts and is subsampled to the same size. This grid retains the nineteen students whose strict prompt-level IFEval accuracy~\citep{zhou2023instruction} on full responses exceeds that of the corresponding base model. An additional \(24\)-student math grid uses Falcon3-7B-Base~\citep{Falcon3}, Llama-3.1-8B~\citep{grattafiori2024llama3herdmodels}, Mistral-7B-v0.3~\citep{jiang2023mistral7b}, and Qwen-2.5-7B~\citep{qwen2.5}. Using the same sources and math datasets, it retains twenty-one students whose mean accuracy across GSM8K and MATH-500 exceeds that of the corresponding base model. The controlled candidate pool contains the three sources and Gemma-3-27B-it, an unexposed distractor that shares a model family with Gemma-3-4B-PT in the 1.5--4B grids. Attribution and significance use the probe dataset not used for SFT, whereas detection and abstention use both probe datasets.

\paragraph{Controlled SFT} We train each controlled student using full-parameter SFT with a completion-only causal language-modeling objective. Prompt tokens are masked to \(-100\), and each prompt--response pair forms one example. Examples are truncated rather than packed. Table~\ref{tab:train-config} reports the shared configuration.

\begin{table}[t]
  \centering
  \caption{\textbf{Controlled SFT configuration.}}
  \label{tab:train-config}
  \small
  \begin{tabular}{ll}
    \toprule
    Setting & Value \\
    \midrule
    Objective & Completion-only causal language modeling \\
    Fine-tuning method & Full-parameter fine-tuning \\
    Epochs & 3 \\
    Learning rate & \(1\times10^{-5}\) \\
    Schedule & Cosine with warmup ratio \(0.05\) \\
    Effective batch size & 16 (batch size 4, accumulation 4) \\
    Maximum sequence length & \(4{,}096\) tokens \\
    Training precision & bfloat16 \\
    \bottomrule
  \end{tabular}
\end{table}

\paragraph{Prompt formatting} Llama-3.2-3B-Instruct is the only controlled backbone that uses its tokenizer's chat template. All other controlled backbones use the plain-text format
\[
\texttt{Problem:\textbackslash n\{question\}\textbackslash n\textbackslash nSolution:\textbackslash n}.
\]
The same backbone-specific format is used for SFT and for generating controlled student responses.

\paragraph{Response generation} For locally generated corpora, we generate one response per prompt with a model-prompt-specific seed, using each model's released generation configuration when available and the serving defaults otherwise. Consequently, temperature, top-\(p\), top-\(k\), repetition penalty, and end-of-sequence tokens may differ across models. We set the maximum output length to \(4{,}096\) tokens, except that the existing Qwen-3-8B~\citep{qwen3} training and candidate response banks use a \(2{,}048\)-token limit. Every response that reaches either limit exceeds the first \(2{,}000\) characters analyzed by SCOUT, so the limit does not alter the analyzed prefix. For the utility screens used to select the controlled cohorts, we instead generate responses from the controlled students with a \(16{,}384\)-token limit and score their full responses on GSM8K, MATH-500, and IFEval.

\paragraph{Candidate pools and response provenance} The retrospective public-model evaluation in Section~\ref{sec:exp-descendants} uses domain-specific ten-candidate pools. For the math probes, the pool contains Gemma-3-27B-it~\citep{gemmateam2025gemma3technicalreport}, Llama-3.3-70B-Instruct, GPT-OSS-120B~\citep{openai2025gptoss120bgptoss20bmodel}, Claude-3.5-Sonnet~\citep{claude35}, Claude Opus 4.5 and Claude Opus 4.6~\citep{opus45,opus46}, DeepSeek-R1~\citep{deepseekai2025r1}, o1 and o3~\citep{openai2024openaio1card,openai2025openaio3card}, and QwQ-32B-Preview~\citep{qwq-32b-preview}. Candidate responses for these probes come from the response bank released with DistillDetect~\citep{rawat2026referencebaseddistillationdetectionllms}. For the conversation probes, we collect a new candidate response bank. Claude-3.5-Sonnet was no longer served through OpenRouter, so we replace it with its successor, Claude Sonnet 4~\citep{claude4}. This retains a Sonnet-family candidate and the ten-candidate pool size while providing matched responses to the OASST1 and Dolly prompts. We generate all suspect and descendant responses in our pipeline. The math and conversation pools therefore have the same size but differ in one candidate and in response provenance.

Section~\ref{sec:findings} uses a thirteen-candidate pool for all trajectory panels. This pool extends the math ten-candidate pool with GPT-4o~\citep{openai2024gpt4ocard}, Mixtral-8x7B-Instruct~\citep{jiang2024mixtralexperts}, and GLM-4.6~\citep{team2025glm45agenticreasoningcoding} so that all documented sources in the panels are included without changing the pool between trajectories. The original ten candidate responses come from the DistillDetect response bank, and we generate responses for the three added candidates. For each probe, the candidate responses, profiles, and empirical nulls remain fixed across all checkpoints and panels.

\paragraph{Public models and checkpoints} The retrospective panel in Section~\ref{sec:exp-descendants} contains the Qwen2.5-Math-1.5B and Qwen2.5-Math-7B bases~\citep{yang2024qwen25mathtechnicalreportmathematical}, their DeepSeek-R1-Distill-Qwen parents~\citep{deepseekai2025r1}, and twelve descendants. The 1.5B descendants are DeepScaleR-1.5B~\citep{deepscaler2025}, ZR1-1.5B~\citep{zyphra2025ZR1}, Nemotron-Research-1.5B~\citep{liu2025prorl}, DRPO-1.5B~\citep{li2025drpo}, and DisCO-1.5B-logL~\citep{li2025disco}. The 7B descendants are OREAL-7B~\citep{lyu2025exploring}, Spiral-7B~\citep{liu2025spiral}, Skywork-OR1-7B~\citep{he2025skyworkopenreasoner1}, DRPO-7B~\citep{li2025drpo}, DisCO-7B-logL~\citep{li2025disco}, Light-R1-7B-DS~\citep{lightr1proj}, and R1-Distill-7B-ThinkPO~\citep{yang2025thinkingpreferenceoptimization}. Light-R1-7B-DS received further SFT on R1 outputs, and R1-Distill-7B-ThinkPO used R1 outputs as chosen responses in DPO. The other ten underwent reward-based or self-play post-training without additional R1 responses.

The trajectory analysis in Section~\ref{sec:findings} uses thirteen lineages composed entirely of publicly released checkpoints. These include OLMo 2~\citep{walsh2025}, OLMo 3~\citep{olmo2026olmo3}, OLMoE~\citep{muennighoff2024olmoeopenmixtureofexpertslanguage}, Skywork-OR1~\citep{he2025skyworkopenreasoner1}, AceReason~\citep{chen2025acereasonnemotronadvancingmathcode}, Sky-T1 7B and 32B~\citep{sky-t1-7b,sky_t1_2025}, DeepDive~\citep{zhang2026chainingevidencerobustreinforcement}, and Merlinite~\citep{sudalairaj2024lablargescalealignmentchatbots}. The publicly released Dream-7B lineage comprises the base and instruction-tuned successor~\citep{ye2025dream7bdiffusionlarge} and four post-training branches, SARDI~\citep{junger2026selfaugmenting}, R2-dLLM~\citep{du2026r2dllmacceleratingdiffusionlarge}, dParallel~\citep{chen2026dparallel}, and d3LLM~\citep{qian2026dllm}. We generate their audit responses but do not train or modify the checkpoints. Appendix~\ref{app:template} lists each checkpoint, designated source, and training stage. Finally, the public-suspect evaluation in Appendix~\ref{app:tables} includes DeepSeek-R1-Distill-Qwen-1.5B, DeepSeek-R1-Distill-Qwen-7B, DeepSeek-R1-Distill-Qwen-14B, DeepSeek-R1-Distill-Qwen-32B, DeepSeek-R1-Distill-Llama-8B, DeepSeek-R1-Distill-Llama-70B~\citep{deepseekai2025r1}, and s1.1-32B~\citep{muennighoff2025s1simpletesttimescaling}. All seven are evaluated on OMI. For s1, we exclude s1.1-32B because its fine-tuning set, s1K-1.1, contains every s1 probe prompt paired with a DeepSeek-R1 response.

\paragraph{Representation and scoring} We retain the first \(2{,}000\) characters of each response and process them with NLTK 3.9.1~\citep{bird-loper-2004-nltk}, using \texttt{word\_tokenize} followed by \texttt{pos\_tag}. We then count contiguous PoS \(n\)-grams for \(n\in\{3,4,5\}\). We use no lowercasing or IDF weighting. For each probe and candidate pool, the feature vocabulary is fitted once on all candidate responses. The \(\ell_2\) norm used in cosine distance includes all PoS \(n\)-grams in the evaluated response, including those absent from the candidate vocabulary. Counts and distances are computed in double precision. Appendix~\ref{app:ngram} examines the response window and related representation choices.

\paragraph{Evaluation} For all score-based experiments, we use the evaluator released with DistillDetect~\citep{rawat2026referencebaseddistillationdetectionllms} to aggregate prompt-level scores. For controlled detection and abstention in Section~\ref{sec:exp-controlled}, we also use its threshold-fitting and significance-testing procedures. Thresholds are fitted on the non-held-out cells of each evaluation fold. Student-backbone holdout excludes all students based on the audited backbone, while teacher holdout excludes every student associated with the audited teacher. For detection, the evaluator identifies the two leading candidates and uses their per-prompt score differences to test whether the mean gap exceeds the fold-fitted threshold. It then applies a one-sided one-sample \(t\)-test or Wilcoxon signed-rank test according to a Shapiro--Wilk normality check and computes \(p^\star=\min\{1,(K-1)p\}\). A detection is reported only when \(p^\star<0.05\) on both probes and both probes select the same candidate.

\paragraph{LLM Judge Protocol} We query Mistral Medium 3.1~\citep{mistral_medium_3} and Solar Pro 4~\citep{upstage2026solarpro4} through OpenRouter with temperature \(0\) and a maximum of \(64\) output tokens. Each call contains the audit prompt, one response from every candidate, and one suspect response. Candidate and suspect responses are truncated to their first \(2{,}000\) characters. Candidate identities are replaced by letter labels, and the mapping between labels and candidates is deterministically reshuffled for every suspect and prompt. We map each parsed label back to its candidate identity before aggregation and do not retry outputs that cannot be parsed. Unparseable outputs count as \texttt{NONE} in open-set trials and are excluded in closed-set trials. The most frequent result across \(200\) prompts becomes the model-level verdict for that probe. Ties yield no verdict.

\paragraph{LLM Judge Prompts} Both judges receive the shared prompt below followed by one of two instructions. The retrospective and public-suspect evaluations use the closed-set instruction because they require a candidate choice. Controlled detection uses the open-set instruction because teacher-removed trials require abstention.

\begin{tcblisting}{
  breakable,
  listing only,
  title={Shared Prompt},
  colback=black!5!white,
  colframe=black!75!black,
  colbacktitle=black!75!black,
  coltitle=white,
  listing options={
    basicstyle=\ttfamily\footnotesize,
    breaklines=true,
    breakatwhitespace=true,
    columns=fullflexible,
    keepspaces=true,
    showstringspaces=false
  }
}
You are comparing writing styles.

Below are responses written by several CANDIDATE models to the same prompt, followed by a
response written by a SUSPECT model to that prompt.

Judge only from writing style, structure, and phrasing. Ignore correctness, factual content,
the specific reasoning path, and whether two responses reach the same conclusion.

PROMPT:
{question}

CANDIDATE RESPONSES:
{candidates}

SUSPECT RESPONSE:
{suspect}
\end{tcblisting}

\begin{tcblisting}{
  breakable,
  listing only,
  title={Closed-Set Instruction},
  colback=scout!5!white,
  colframe=scout,
  colbacktitle=scout,
  coltitle=white,
  listing options={
    basicstyle=\ttfamily\footnotesize,
    breaklines=true,
    breakatwhitespace=true,
    columns=fullflexible,
    keepspaces=true,
    showstringspaces=false
  }
}
Which candidate response is stylistically most similar to the suspect response?
You must choose exactly one candidate.

Reply with JSON only:
{"choice": "<candidate label>"}
\end{tcblisting}

\begin{tcblisting}{
  breakable,
  listing only,
  title={Open-Set Instruction},
  colback=scout!5!white,
  colframe=scout,
  colbacktitle=scout,
  coltitle=white,
  listing options={
    basicstyle=\ttfamily\footnotesize,
    breaklines=true,
    breakatwhitespace=true,
    columns=fullflexible,
    keepspaces=true,
    showstringspaces=false
  }
}
Identify the candidate response that is stylistically most similar to the suspect response.
If none of the candidate responses is sufficiently similar to support a plausible stylistic match,
answer NONE.

Reply with JSON only:
{"choice": "<candidate label>"}

or

{"choice": "NONE"}
\end{tcblisting}

\paragraph{Source switching} Appendix~\ref{app:theory} reports two independent source-switching campaigns. The \(36\)-arm campaign starts from controlled students trained with a maximum sequence length of \(4{,}096\) and uses \(800\) second-round responses per arm. The \(18\)-arm campaign starts from earlier first-round checkpoints whose effective maximum sequence length was \(1{,}024\) and uses \(1{,}000\) second-round responses per arm. Both campaigns cover Gemma-3-4B-PT, Llama-3.2-3B-Instruct, and Qwen-2.5-3B with the same three controlled sources. We report them separately because their first-round checkpoints and second-round training sets differ.
\stopcontents[appendices]

\end{document}